\documentclass[10pt]{article}

\usepackage{arxiv_styles}

\title{Batched Kernelized Bandits: Refinements and Extensions}

\author{
Chenkai Ma \\
National University of Singapore \\
\texttt{chenkai.ma@u.nus.edu}
\and
Keqin Chen \\
National University of Singapore \\
\texttt{keqin.chen@u.nus.edu}
\and
Jonathan Scarlett \\
National University of Singapore \\
\texttt{scarlett@comp.nus.edu.sg}
}

\date{}

\begin{document}

\maketitle

\begin{abstract}
    In this paper, we consider the problem of black-box optimization with noisy feedback revealed in batches, where the unknown function to optimize has a bounded norm in some Reproducing Kernel Hilbert Space (RKHS).
    We refer to this as the Batched Kernelized Bandits problem, and refine and extend existing results on regret bounds. 
    For algorithmic upper bounds, \citep{Li22} shows that $B=O(\log\log T)$ batches suffice to attain near-optimal regret, where $T$ is the time horizon and $B$ is the number of batches. We further refine this by (i) finding the optimal number of batches \emph{including constant factors} (to within $1+o(1)$), and (ii) removing a factor of $B$ in the regret bound.  
    For algorithm-independent lower bounds, noticing that existing results only apply when the batch sizes are fixed in advance, we present novel lower bounds when the batch sizes are chosen adaptively, and show that adaptive batches have essentially same minimax regret scaling as fixed batches.
    Furthermore, we consider a robust setting where the goal is to choose points for which the function value remains high even after an adversarial perturbation.  We present the robust-BPE algorithm, and show that a suitably-defined cumulative regret notion incurs the same bound as the non-robust setting, and derive a simple regret bound significantly below that of previous work.
\end{abstract}

\section{Introduction}
Optimizing a black-box function with costly and noisy evaluations is a common problem, and has wide applications such as hyperparameter tuning \citep{Snoek12}, A/B testing \citep{Kohavi09}, recommendation systems \citep{Li10}, and reinforcement learning \citep{Lizotte07}. 
Bayesian optimization (BO) has been a highly successful approach that assumes a prior distribution on the function, sequentially observes noisy data to compute the posterior distribution of the function, and uses the posterior for decision-making \citep{Garnett23}.
BO has been widely considered in two distinct settings: the Bayesian setting that assumes the function is drawn from a Gaussian process (GP), and the non-Bayesian setting that assumes the function lies in a Reproducing Kernel Hilbert Space (RKHS) and has bounded RKHS norm. In this paper we focus on the latter, which is also known as kernelized bandits. More precisely, we consider a practically motivated variation where the noisy observations are not revealed immediately, but in batches, which we refer to as the \emph{batched kernelized bandits}.  The motivation is that collecting samples in batches can be beneficial or even essential since they can be done parallel \citep{Kittur08, Bertsimas07}.

In the case that the batch sizes are fixed (i.e., must all be chosen prior to taking any samples), \citep{Li22} presents the BPE algorithm, and shows that it achieves near-optimal $O^*(\sqrt{T\gamma_T})$ regret with $B=O(\log\log T)$ batches, where $T$ is the time horizon, $\gamma_T$ is the maximum information gain (see Section \ref{sec:setup_standard}), and $O^*$ hides dimension-independent logarithmic factors. The upper bounds are also accompanied by near-matching lower bounds for common kernels.

In this paper, we continue this line of work by refining and extending the results on batched kernelized bandits. 
Our contributions are as follows:
\begin{itemize}
    \item For algorithmic upper bounds, we refine the upper bounds in \citep{Li22} by modifying the BPE algorithm therein. 
    In particular, we identify the optimal number of batches not only in the $O(\cdot)$ sense but including precise constants (to within $1+o(1)$), and we remove an unnecessary factor of $B$ in the regret bound.  Some simple supporting experiments are also given in Appendix \ref{app:experiments}.
    \item For algorithm-independent lower bounds, we present novel lower bounds that hold even when batch sizes are chosen adaptively. 
    The bounds have an additional polynomial dependence on $B$, but this is mild since we can assume $B=O(\log\log T)$ and rely on standard (fully sequential) lower bounds for larger $B$.  Our findings show that despite being more general, adaptive batches are essentially no more powerful than fixed batches in the minimax sense.
    \item We extend batched kernelized bandits to a robust setting, where the goal is to choose points for which the function value remains high even after an adversarial perturbation. In this setting, we present the robust-BPE algorithm, and show that it achieves a (suitably defined) cumulative regret matching that of the non-robust setting, and also significantly improves over \citep{Bogunovic18} in terms of simple regret.
\end{itemize}

\subsection{Related Work} \label{sec:related}

\textbf{Upper bounds for kernelized bandits.} The celebrated work of \cite{Srinivas10} presents GP-UCB, an algorithm that chooses the next point to evaluate based on upper confidence bounds \citep{Auer02, Auer02b, Dani08}. It is shown that GP-UCB achieves $O^*(\sqrt{T}\gamma_T)$ regret with high probability, where $O^*$ suppresses dimension-independent logarithmic factors, and $\gamma_T$ is the maximum information gain. A problem with the $O^*(\sqrt{T}\gamma_T)$ scaling is that it can fail to be sublinear for the Mat\'ern kernel for many configurations of the smoothness parameter $\nu$ and dimension $d$.

Attempts to address this issue come mainly in two directions. In the first direction, several ensuing works improve the regret bounds to $O^*(\sqrt{T\gamma_T})$ \citep{Valko13, Janz20, Salgia21, Camilleri21, Li22, Vakili22, Salgia24}. For instance, \citep{Valko13} presents the SupKernelUCB algorithm, which is a kernelized version of an LinUCB \citep{Chu11}, and combines upper confidence bounds with action elimination; \citep{Li22} considers a different approach (which is also the simplest approach to attain near-optimal regret) by iterating between sampling points with maximum uncertainty and eliminating suboptimal points using confidence bounds, which crucially relies on a sharp confidence interval from \citep{Vakili21a}.\footnote{A similar sharp confidence interval can also be inferred from \citep{Valko13}.} In the second direction, several works improved the bounds on $\gamma_T$ \citep{Janz20, Vakili21b}. For example, for the Mat\'ern kernel, \citep{Srinivas10} has $\gamma_T = O \bigl( T^{\frac{d(d+1)}{2\nu + d(d+1)}} (\log T) \bigr)$, which \citep{Vakili21b} improves to $\gamma_T = O \bigl( T^{\frac{d}{2\nu+d}} (\log T)^{\frac{2\nu}{2\nu+d}} \bigr)$.

Combining the $O^*(\sqrt{T\gamma_T})$ regret bound with improved regret bounds on $\gamma_T$ leads to $O^*\bigl(\sqrt{T(\log T)^d}\bigr)$ for the SE kernel, and $O^*\bigl(T^{\frac{\nu+d}{2\nu+d}}(\log T)^{\frac{\nu}{2\nu+d}}\bigr)$ for the Mat\'ern kernel, which is sublinear for general $(\nu,d)$. 
As we will discuss, these upper bounds nearly match the corresponding lower bounds, which means $O^*(\sqrt{T\gamma_T})$ regret is near-optimal.
More recently, \citep{Iwazaki2025} improves the regret upper bounds in terms of their dependence on the RKHS norm, which is orthogonal to our work since we consider the RKHS norm to be a fixed constant.

\textbf{Lower bounds for kernelized bandits.} As an early work, \citep{Bull11} presents tight lower bounds on the simple regret for the Mat\'ern kernel in the noiseless setting. \citep{Scarlett17} extends the results in \citep{Bull11} to the noisy setting, and to the SE kernel, and considered both simple regret and cumulative regret. Specifically, \citep{Scarlett17} shows that the expected cumulative regret is lower bounded by $\Omega\bigl(\sqrt{T(\log T)^{d/2}}\bigr)$ for the SE kernel, and $\Omega\bigl(T^{\frac{\nu+d}{2\nu+d}}\bigr)$ for the Mat\'ern kernel. \citep{Cai21} complements \citep{Scarlett17} by presenting high probability regret lower bounds of the same order as well as extensions to two robust settings. More recently, \citep{Li22} presents lower bounds for batched kernelized bandits in the regime $B = O(1)$, while relying on the general (not necessarily batched) lower bounds in \citep{Scarlett17} for growing $B$.

\textbf{Batched settings.} 
Some earlier works have studied batched kernelized bandits where every batch has some fixed length \citep{Contal13, Desautels14, Kandasamy18}, which means the number of batches is $\Theta(T)$. In contrast, we are interested in making the number of batches small (which is often highly desirable in practice, e.g., in clinical trials \citep{Gao19}), and we allow the batch sizes to differ (e.g., increase over time).
Then, the most relevant existing work is \citep{Li22}, which characterizes both upper bounds and lower bounds for cumulative regret when the batches lengths are pre-specified (i.e., fixed). As discussed above, the upper bounds are near-optimal when $B = O(\log \log T)$ (due to matching \citep{Scarlett17}), and refined lower bounds are also derived for $B = O(1)$.

However, a notable gap in \citep{Li22} is that no lower bounds are given for \emph{adaptively-chosen} batch sizes, which in principle could lead to algorithms with lower regret.  The difficulty of lower bounds for adaptive batches is addressed in the finite-arm multi-armed bandit problem in \citep{Gao19}, which presents minimax and problem-dependent lower bounds for both fixed batches and adaptive batches. The main idea of \citep{Gao19} is to ``change measure'', i.e., reduce the adaptive batches chosen by an algorithm to ``reference batches'' that we can control and have nice properties, and relate the probabilities on different reward distributions to a simpler ``base'' distribution. We similarly ``change measure'' in our analysis, but in an arguably simpler way, as we consider $\Theta(BM)$ hard functions where $M$ is the number of disjoint regions of the input domain by some partition, whereas they consider $\Theta(BM^2)$ hard instances in the case of $M$ arms.
Moreover, our analysis has substantially different details due to the infinite-arm nature of our setup (as opposed to finitely many independent arms).
Earlier works on batched multi-armed bandits include \citep{Cesa-Bianchi13, Perchet16, Esfandiari21}, and our choice of batch sizes is inspired by \citep{Cesa-Bianchi13}.

\textbf{Robust settings.} The robust setting we consider extends kernelized bandits by seeking points whose function values remain high even after an adversarial perturbation. This setup is introduced in \citep{Bogunovic18}, and while other robust settings also exist (e.g., \citep{Bogunovic20}), they are beyond our scope. 
In this setting and other related settings, it is most common to focus on the simple regret $r_{\xi}(x^{(T)})$, which is incurred on a single point $x^{(T)}$ reported after $T$ rounds, where $\xi$ is a constant denoting the extent of perturbation \citep{Bogunovic18, Sessa20, Nguyen20}.
In this paper, however, we consider a suitably defined notion of cumulative regret $R_{\xi, T}$, which ensures consistency with our regret bounds for (non-robust) batched kernelized bandits. Moreover, noting that the best simple regret is no higher than the best normalized cumulative regret \citep{Bubeck09}, our result directly leads to a near-optimal bound on simple regret as well.

\section{Problem Setup}
\label{sec:setup}
We first introduce the setup for batched kernelized bandits, which we refer to as the standard setting. Then we introduce the robust setting discussed above.

\subsection{Standard Setting}
\label{sec:setup_standard}
We consider the problem of sequentially optimizing a black-box function $f(x)$ over some domain $\mathcal{X} \subseteq [0,1]^d$ with time horizon $T$ and number of batches $B$. To avoid trivial cases we require $T \geq B \geq 2$.

Given $T$ and $B$, an algorithm splits $T$ into $B$ batches by choosing times $t_1, \ldots, t_{B}$, subject to $0 < t_1 < \cdots < t_B = T$, and for convenience we define $t_0 = 0$. This means the $i$-th batch is $\{t:t_{i-1}< t\leq t_i \}$, and has batch size $N_i = t_i -t_{i-1}>0 $.\footnote{\label{fn:handle_extra_batches} 
The algorithm designer may want to choose batch sizes as a function of $T$ without explicitly being given $B$ (e.g., see \eqref{eq:grow_b_batch_sizes_li} and \eqref{eq:grow_b_batch_sizes} below).  In such scenarios, we assume for convenience that $B$ equals the exact number of batches needed for such a choice.  (An alternative approach would be to say that $B$ is allowed to be higher and some batch lengths are zero, but this would complicate the notation.)}
We denote $[T]=\{1,\ldots,T\}$. At the \emph{start} of each time $t \in [T]$, an algorithm selects a point $x_t\in\mathcal{X}$ to query $f$, which incurs noisy response $y_t=f(x_t)+z_t$, where the noise $z_t\sim N(0, \sigma^2)$ is independent for all $t\in[T]$.\footnote{For our upper bounds, we can trivially generalize $z_t$ to be i.i.d~$\sigma$-sub-Gaussian for all $t\in[T]$.} 
We note that $y_t$ may not be immediately observable to the algorithm at the \emph{end} of time $t$, i.e., after choosing $x_{t}$. Instead, if $t \in (t_{i-1}, t_i]$, then the algorithm observes $y_t$ at the end of time $t_i$.
Depending on when the $t_i$ values are chosen, we consider two settings:
\begin{itemize}
    \item Fixed batches: all $t_i$ are predetermined.
    \item Adaptive batches: $t_1$ is predetermined, and for $i\geq 2$, $t_i$ is chosen at the end of time $t_{i-1}$ (i.e., after receiving the observations from the $(i-1)$-th batch).
\end{itemize}

The algorithm aims to find $x^*\in\argmax_{x\in\mathcal{X}} f(x)$, an (arbitrary) maximizer of $f$, which we measure by cumulative regret $R_T = \sum_{t=1}^T r_t $, where $r_t = f(x^*) - f(x_t)$. We also define $R^i =\sum_{t=t_{i-1}+1}^{t_i} r_t $ to be the \emph{cumulative regret} in the $i$-th batch.  In some cases we also consider $r^{(T)} = f(x^*) - f(x^{(T)})$, the \emph{simple regret} of an additional point $x^{(T)}$ output after $T$ rounds. 

As for the smoothness of $f$, we assume $\lVert f\rVert_k \leq \Psi$, which means $f$ lives in an RKHS where the kernel is $k$, and its RKHS norm is upper bounded by $\Psi > 0$. We also assume that the kernel is normalized such that $k(x,x)\leq 1, \forall x\in\mathcal{X}$.
We consider two types of kernels, namely the squared exponential (SE) kernel and the Mat\'ern kernel:
\begin{align*}
    k_{\text{SE}}(x,x') & = \exp\biggl(-\frac{(d_{x,x'})^2}{2l^2}\biggr) \\
    k_{\text{Mat\'ern}}(x,x') & = \frac{2^{1-\nu}}{\Gamma(\nu)} \biggl(\frac{\sqrt{2\nu}d_{x,x'}}{l}\biggr)^{\nu} J_{\nu}\biggl(\frac{\sqrt{2\nu}d_{x,x'}}{l}\biggr),
\end{align*}
where $d_{x,x'}=\lVert x-x'\rVert$, $l>0$ is the length-scale, $\nu > 0$ is a smoothness parameter, $\Gamma$ is the Gamma function, and $J_{\nu}$ is the modified Bessel function.
We let $\gamma_t = \max_{A\subseteq \mathcal{X}:\lvert A\rvert=t} I(f_A;y_A) $ be the maximum information gain \citep{Srinivas10}, where $f_A=[f(x_t)]_{x_t\in A} $, $y_A=[y_t]_{x_t\in A} $, and $I(\cdot;\cdot)$ is the mutual information (\cite{Cover06}, Sec. 2.3). Two useful known upper bounds for $\gamma_t$ are
\begin{align}
    & \gamma_t^{\text{SE}} = O\bigl((\log t)^{d+1}\bigr), \label{gamma_t_se} \\
    & \gamma_t^{\text{Mat\'ern}} = O \bigl( t^{\frac{d}{2\nu+d}} \left( \log t \right)^{\frac{2\nu}{2\nu +d}} \bigr), \label{gamma_t_mat}
\end{align}
where the first bound is from \citep{Srinivas10}, and the second is from \citep{Vakili21b}.

Throughout the paper, we treat the parameters $(d,\nu,l, \Psi, \sigma)$ as fixed and known constants, and focus on how the regret depends on $T$ and $B$. For the dependence on $B$, we will differentiate between ``constant $B$'', where $B$ remains constant as $T$ grows,
and ``growing $B$'', where $B$ grows with $T$ as a function of $T$, e.g., $B= O(\log\log T)$.
We use $O^*(\cdot)$ to hide dimension-independent logarithmic factors in $T$, and use $O(\cdot),\Omega(\cdot),\Theta(\cdot), o(\cdot)$, $\omega(\cdot)$ as commonly defined.

\subsection{Robust Setting}
\label{sec:setup_robust}
The robust setting inherits the standard setting, but with the modified goal of selecting points whose function value remains high even after a suitably-constrained adversarial perturbation  \citep{Bogunovic18}.
More formally, we consider $d:\mathcal{X}\times\mathcal{X} \to \mathbb{R}$, a known function that measures the ``distance'' between elements in $\mathcal{X}$, and let $\xi$ be a constant.
For each $x\in\mathcal{X}$, we define $\Delta_{\xi}(x) = \{x'-x:x'\in\mathcal{X},d(x,x')\leq \xi\} $, with the understanding that $x+\Delta_\xi(x)$ is the set of possible perturbed queries subject to a ``distance'' upper bound $\xi$.
The goal is to find a $\xi$-robust optimal point:
\begin{align}
    x^* \in \argmax_{x\in{\mathcal{X}}}\min_{\delta\in\Delta_{\xi}(x)}f(x+\delta).
\end{align}
At time $t$, the algorithm chooses $x_t$ and $\delta_t$, queries $f$ at $x_t+\delta_t$, and incurs a noisy response $y_t = f(x_t+\delta_t)+z_t $ that is revealed at time $t_i $ if $t\in (t_{i-1}, t_i]$. 
Then, we measure a tuple $(x_t,\delta_t)$ by the $\xi$-regret:
\begin{align}
    r_{\xi}(x_t) = \min_{\delta\in\Delta_{\xi}(x^*)}f(x^* + \delta) - \min_{\delta\in\Delta_{\xi}(x_t)}f(x_t+\delta), \label{eq:xi_regret}
\end{align}
which is defined with respect to $x_t$ only, and considers the worst-case $\delta$ instead of $\delta_t$. % , as though $x_t$ were a reported point.
Similarly, for the sequence $(x_1,\delta_1), \ldots, (x_T,\delta_T)$ selected by the algorithm in time horizon $T$, the cumulative $\xi$-regret is defined as $R_{\xi,T} = \sum_{t=1}^T r_{\xi}(x_t) $. We also define $R^i_{\xi}=\sum_{t=t_{i-1}+1}^{t_i} r_{\xi}(x_t)$ to be the cumulative $\xi$-regret in the $i$-th batch.  Along with the cumulative regret, we will consider the simple regret $r_{\xi}(x^{(T)})$ of an additional point $x^{(T)}$ reported after $T$ rounds, as was considered in \citep{Bogunovic18}.

The preceding cumulative regret measure may appear somewhat non-standard in that it consists of the algorithm choosing both $x_t$ and $\delta_t$ but only measuring regret with respect to $x_t$.  The role of $\delta_t$ is to ``simulate'' perturbation from the adversary, and a similar notion was used in a multi-agent problem in \citep{Tay23}.  Importantly, even if one is only interested in the simple regret $r_{\xi}(x^{(T)})$, the cumulative regret $R_{\xi,T}$ will still serve as a helpful stepping stone towards that (see Remark \ref{rmk:robust_simple_regret}).

\section{Standard Setting: Upper Bounds}
\label{sec:standard_upper_bound}

In the standard setting, we will first give refined algorithmic upper bounds with fixed (i.e., pre-specified) batch sizes.\footnote{We do not consider adaptive batches here, as using fixed batches already leads to near-optimal regret; see Theorem \ref{thm:upper_grow_b}.}
We focus on results with growing $B$ in this section, and defer results with constant $B$ to Appendix \ref{app:standard_fixed_B_upper_bounds}.
In line with \citep{Li22}, we consider $\mathcal{X}$ to be finite but possibly very large, and note that extending the results to the continuous domain $\mathcal{X}=[0,1]^d$ is standard (see Remark \ref{rmk:extent_to_continuous_domain} below).

\subsection{Preliminaries and Existing Results}
\label{sec:standard_upper_bound_li}
\citep{Li22} presents the Batched Pure Exploration (BPE) algorithm. 
BPE maintains a set of potential maximizers $\mathcal{X}_i$. In each batch, BPE explores points in $\mathcal{X}_i$ with maximum posterior variance. Then, at the end of the batch, BPE computes confidence bounds to eliminate suboptimal points from $\mathcal{X}_i$. The updated $\mathcal{X}_i$ is used to initialize the next batch.
BPE is non-adaptive in each batch, which allows the use of tight confidence bounds from \citep{Vakili21a}, which turn out to be crucial for near-optimal regret.
More details of the BPE algorithm are given in Appendix \ref{app:bpe}.
The performance of BPE crucially depends on the batch sizes (Proposition \ref{prop:property_bpe} in the appendix), and for growing $B$, \citep{Li22} considers 
\begin{align}
    N_i=\min \biggl\{ \big\lceil \sqrt{TN_{i-1}} \big\rceil, T-\sum_{j=1}^{i-1}N_j\biggr\}, \label{eq:grow_b_batch_sizes_li}
\end{align}
with $N_0=1$ and $i\in [B]$. 
For these batch sizes, \citep{Li22} shows that BPE terminates with no more than $\lceil \log_2\log_2 T \rceil + 1$ batches (i.e., $B \leq \lceil \log_2\log_2 T \rceil + 1$), 
and achieves the following.

\begin{lemma}
    \emph{\citep[Theorem 1]{Li22}}
    \label{lem:upper_grow_b_li}
    Under the setup of Section \ref{sec:setup_standard} and using the batch sizes defined in \eqref{eq:grow_b_batch_sizes_li},
    for any $\delta \in (0,1)$,
    the BPE algorithm yields with probability at least $1-\delta$ that
    \begin{align*}
    	R_T=O \Bigl((\log\log T) \bigl(\Lambda + \sqrt{\log \log\log T}\bigr) \sqrt{T\gamma_T}\Bigr),
    \end{align*}
    where $\Lambda=\Psi+\sqrt{\log(|\mathcal{X}|/\delta)}$.
\end{lemma}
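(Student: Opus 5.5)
The plan is the standard batched-elimination argument: (i) a confidence bound that is uniform over all batches and points, (ii) per-batch regret controlled by the maximal posterior standard deviation left after the previous batch's pure exploration, and (iii) a sum over batches using the batch sizes in \eqref{eq:grow_b_batch_sizes_li}.

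\textbf{Step 1 (confidence bounds).} In batch $i$, BPE picks its points by maximal posterior variance, and that choice depends only on the earlier points, not on the observed $y$ values. We can therefore apply the non-adaptive confidence bound of \citep{Vakili21a}. For each fixed $x$ and batch, with probability $1-\delta'$,
\[
|f(x)-\mu_{i}(x)| \le \beta\,\sigma_{i}(x), \qquad \beta = \Psi + \sqrt{2\log(2/\delta')}.
\]
Here $\mu_i,\sigma_i$ are the posterior mean and standard deviation built from the batch-$i$ data alone. We take a union bound over $x\in\mathcal{X}$ and over the at most $B\le\lceil\log_2\log_2 T\rceil+1$ batches, i.e.\ $\delta'=\delta/(|\mathcal{X}|B)$. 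This gives $\beta = O\bigl(\Lambda+\sqrt{\log B}\bigr)=O\bigl(\Lambda+\sqrt{\log\log\log T}\bigr)$, simultaneously for all batches, with probability at least $1-\delta$. On this event the elimination rule never removes $x^*$. Every $x$ that survives batch $i$ satisfies
\[
f(x^*)-f(x)\le 4\beta \max_{x'\in\mathcal{X}_i}\sigma_i(x').
\]

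\textbf{Step 2 (uncertainty after a batch of size $N$).} Maximum-variance sampling of $N$ points inside $\mathcal{X}_i$ gives
\[
\max_{x\in\mathcal{X}_i}\sigma_{i}(x)\le C\sqrt{\gamma_{N}/N}.
\]
The argument: the posterior variance at the final maximizer is no larger than each variance that was maximal when selected. So it is at most the average of those $N$ variances, which is $O(\gamma_N/N)$ by the standard sum-of-variances bound of \citep{Srinivas10}. Batch $i+1$ samples only from $\mathcal{X}_{i+1}$, so each of its points has regret $O\bigl(\beta\sqrt{\gamma_{N_i}/N_i}\bigr)$. The batch regret is then
\[
R^{i+1}=O\bigl(\beta N_{i+1}\sqrt{\gamma_T/N_i}\bigr)=O\bigl(\beta\sqrt{T\gamma_T}\bigr),
\]
because $N_{i+1}\le\lceil\sqrt{TN_i}\rceil$, so $N_{i+1}/\sqrt{N_i}\le 2\sqrt T$. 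The first batch has $N_1=\lceil\sqrt T\rceil$ and, with $\Psi$-bounded $f$ and $k\le1$, regret at most $2\Psi\sqrt T$, which is absorbed.

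\textbf{Step 3 (summing).} Summing over the $B=O(\log\log T)$ batches gives
\[
R_T=O\bigl(B\beta\sqrt{T\gamma_T}\bigr)=O\bigl((\log\log T)(\Lambda+\sqrt{\log\log\log T})\sqrt{T\gamma_T}\bigr).
\]
The batch-count bound comes from $N_i\ge T^{1-2^{-i}}$ by induction, so after $\lceil\log_2\log_2T\rceil+1$ batches the remaining horizon is exhausted.

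\textbf{Main obstacle.} The delicate step is Step 1: justifying that the \citep{Vakili21a} bound applies, which requires the sampled points to be independent of the noise. This is why each batch uses only its own fresh data for $\mu_i,\sigma_i$, and why only the $\sqrt{\log B}$ cost from the union bound appears. I expect everything else to be routine.
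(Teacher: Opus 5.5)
Your proposal is correct and follows essentially the same route as the argument the paper relies on from \citep{Li22}. That argument has four steps: a confidence bound from \citep{Vakili21a}, union-bounded over $\mathcal{X}$ and the $B$ batches; the per-batch bound $R^i = O\bigl(\sqrt{\beta}\,N_i\sqrt{\gamma_{N_{i-1}}/N_{i-1}}\bigr)$ from maximum-variance sampling (Proposition~\ref{prop:property_bpe}); the observation that \eqref{eq:grow_b_batch_sizes_li} makes every batch contribute $O(\sqrt{\beta T\gamma_T})$; and multiplication by $B = O(\log\log T)$.

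The differences are cosmetic. Your $\beta$ is the square root of the paper's. You bound $R^1$ via $\|f\|_\infty \le \Psi$ rather than $R^1 \le 2N_1\sqrt{\beta}$. Also, the independence needed for \citep{Vakili21a} holds conditionally on earlier batches, since $\mathcal{X}_i$ depends on their observations; your "fresh data per batch" remark already covers this.
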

\begin{remark}
    \label{rmk:extent_to_continuous_domain}
    \citep{Li22} discusses some approaches \citep{Janz20, Vakili21a} to extend the result to the continuous domain $\mathcal{X}=[0,1]^d$. For example, \citep{Janz20} uses a discretization argument and the fact that $f$ is Lipschitz continuous \citep{Lee22} to get an upper bound of the same $T$ scaling as in that of finite $\mathcal{X}$, up to log factors.
\end{remark}

An implication of Lemma \ref{lem:upper_grow_b_li} is that it is sufficient to have $B = \lceil \log_2\log_2 T \rceil + 1 = O(\log\log T)$ batches to recover the near-optimal $O^*(\sqrt{T\gamma_T})$ regret.

\subsection{Refinements}
We refine the regret bounds in Lemma \ref{lem:upper_grow_b_li} by generalizing the batch sizes in \eqref{eq:grow_b_batch_sizes_li}. Specifically, we consider
\begin{align}
    N_i = \min \biggl\{\lceil T^{1-a^i} \rceil, T-\sum_{j=1}^{i-1}N_j\biggr\}, \label{eq:grow_b_batch_sizes}
\end{align}
where $a\in(0,1)$ is a parameter to the algorithm and $i\in[B]$. By ``generalizing'', we mean that choosing $a = 1/2$ is equivalent to \eqref{eq:grow_b_batch_sizes_li}; also note that $a=0$ recovers the single batch setting, and $a=1$ recovers the fully sequential setting. 
Similar to \eqref{eq:grow_b_batch_sizes_li}, we can show that the number of batches in \eqref{eq:grow_b_batch_sizes} satisfies (see the proof of Proposition \ref{prop:grow_b_batch_sizes_property}):
\begin{align}
    B \leq \lceil \log_{1/a}\log_2 T \rceil + 1, \quad  
    B  \geq \Bigl\lfloor \log_{1/a}\Bigl(\frac{\log T}{\log\log T}\Bigr) \Bigr\rfloor, \label{eq:B_upper}
\end{align}
and hence 
\begin{equation}
    B = \bigl(\log_{1/a}\log T\bigr) (1+o(1)) = \frac{\log_2\log_2 T}{\log_2(1/a)}(1+o(1)). \label{eq:B_asymp}
\end{equation}
Compared to \eqref{eq:grow_b_batch_sizes_li}, our upper bound on $B$ is tighter when $a \in (0,\frac{1}{2})$, and will be complemented by a matching lower bound establishing that the constant in $B = \bigl(\log_{1/a}\log T\bigr) (1+o(1))$ cannot be reduced further (see Remark \ref{rmk:necessary_B} below). 
Furthermore, the following theorem provides a slightly better regret bound for $a\in (\frac{1}{2}, 1)$, along with an analog for $a\in(\frac{\nu}{2\nu+d},\frac{1}{2}]$ when a ``well-behaved'' upper bound on $\gamma_T$ is used.

\begin{theorem}
    \label{thm:upper_grow_b}
    Under the setup of Section \ref{sec:setup_standard} and
    using batch sizes defined in \eqref{eq:grow_b_batch_sizes} with $a\in(\frac{1}{2},1)$ for either the SE kernel or the Mat\'ern kernel,
    for any $\delta \in (0,1)$,
    the BPE algorithm yields with probability at least $1-\delta$ that
    \begin{align*}
    	R_T
        = O \Bigl(\bigl(\Lambda + \sqrt{\log \log\log T}\bigr)\sqrt{T\gamma_T}\Bigr),
    \end{align*}
    where $\Lambda=\Psi+\sqrt{\log(|\mathcal{X}|/\delta)}$.
    In addition, using batch sizes defined in \eqref{eq:grow_b_batch_sizes} with $a\in(\frac{\nu}{2\nu+d},\frac{1}{2}]$ for the Mat\'ern kernel,
    BPE yields with probability at least $1-\delta$ that
    \begin{align*}
    	R_T
        = O \Bigl(\bigl(\Lambda + \sqrt{\log \log\log T}\bigr)\sqrt{T\overline{\gamma}_T}\Bigr),
    \end{align*}
    where $\overline{\gamma}_t$ is any sequence that (i) upper bounds $\gamma_t$ for sufficiently large $t$, and (ii) takes the form $\overline{\gamma}_t = c't^{\frac{d}{2\nu+d}}\alpha_t$ for some non-decreasing $\{\alpha_t\}_{t \ge 1}$ and constant $c' > 0$.  (See also Definition \ref{def:gamma_t} in the appendix for a more general notion of a ``well-behaved'' upper bound in which $\alpha_t$ need not be non-decreasing.)
\end{theorem}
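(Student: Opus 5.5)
We start from the per-batch guarantee for BPE (Proposition \ref{prop:property_bpe} in the appendix). On the high-probability event that all confidence bounds hold (union bound over $|\mathcal{X}|$ points and $B$ batches), every point kept in $\mathcal{X}_i$ after batch $i-1$ has gap at most of order $\beta\sqrt{\gamma_{N_{i-1}}/N_{i-1}}$. Here $\beta = O(\Lambda + \sqrt{\log B})$, and $\log B = O(\log\log\log T)$ by \eqref{eq:B_upper}. Hence
\begin{align*}
R^1 \le 2\Psi N_1, \qquad R^i = O\Bigl(\beta N_i\sqrt{\gamma_{N_{i-1}}/N_{i-1}}\Bigr) \quad (i \ge 2).
\end{align*}
The goal is to show that $\sum_i R^i = O(\beta\sqrt{T\gamma_T})$ with no factor of $B$. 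Since the $R^i$ cannot all be comparable, we want them to decay geometrically, going backwards from the last batch.

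For the first claim, take $a \in (\frac12,1)$ and set $N_i \approx T^{1-a^i}$. The first batch gives $R^1 = O(T^{1-a}) = O(\sqrt{T})$ because $a > \frac12$. For $i \ge 2$ we use that $\gamma$ is nondecreasing, so $\gamma_{N_{i-1}} \le \gamma_T$, and therefore
\begin{align*}
R^i = O\bigl(\beta\sqrt{\gamma_T}\, T^{1-a^i}\, T^{-(1-a^{i-1})/2}\bigr) = O\bigl(\beta\sqrt{T\gamma_T}\; T^{a^{i-1}(1/2-a)}\bigr).
\end{align*}
The exponent $-a^{i-1}(a-\frac12)$ is negative.

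We then sum over $i$. Put $u_i = a^{i-1}(a-\frac12)\log T$, so that $R^i = O(\beta\sqrt{T\gamma_T}\,e^{-u_i})$. The $u_i$ decrease geometrically as $i$ increases. Reading the batches backwards from $i=B$, we have $u_{B-j} = u_B a^{-j}$, so the $u$'s grow geometrically. This gives
\begin{align*}
\sum_i e^{-u_i} \le \sum_{j \ge 0} e^{-u_B a^{-j}},
\end{align*}
which is $O(1)$ provided $u_B$ is bounded below by a positive constant. If $u_B$ is smaller, then the sum is $O(1)$ plus the number of indices with $u_i \le 1$. That count is at most $O(\log_{1/a}(1/u_B))$.

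The main obstacle is showing that $u_B$ is bounded below, i.e. that $a^{B}\log T = \Omega(1)$. From the lower bound in \eqref{eq:B_upper} this holds roughly, because $a^{B-1}\log T \gtrsim \log\log T$ or at least a constant. The upper bound $B \le \lceil\log_{1/a}\log_2 T\rceil + 1$ gives $a^{B}\log_2 T \ge a^2$, a constant, which suffices. We also need to handle two further points:
\begin{itemize}
\item The truncation $N_B = T - \sum_{j<B} N_j \le T^{1-a^B}$ only decreases $R^B$, so it causes no difficulty.
\item The ceilings in $\lceil T^{1-a^i}\rceil$ are harmless.
\end{itemize}
Combining these gives $R_T = O(\beta\sqrt{T\gamma_T})$.

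For the second claim, take the Matérn kernel with $a \in (\frac{\nu}{2\nu+d},\frac12]$. The bound $\gamma_{N_{i-1}} \le \gamma_T$ is now too crude. Instead we use $\overline{\gamma}_{N_{i-1}} = c' N_{i-1}^{\kappa}\alpha_{N_{i-1}} \le c' N_{i-1}^{\kappa}\alpha_T$, where $\kappa = \frac{d}{2\nu+d}$. For large $T$ this gives
\begin{align*}
R^i = O\bigl(\beta\sqrt{\alpha_T}\; T^{1-a^i - (1-\kappa)(1-a^{i-1})/2}\bigr).
\end{align*}
The target is $\sqrt{T\overline{\gamma}_T} = \sqrt{\alpha_T}\,T^{(1+\kappa)/2}$. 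Subtracting, the exponent of $R^i/\sqrt{T\overline{\gamma}_T}$ is $-a^{i-1}\bigl(a - \frac{1-\kappa}{2}\bigr)$. This is negative exactly when $a > \frac{1-\kappa}{2} = \frac{\nu}{2\nu+d}$.

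The first batch is handled the same way. It gives $R^1 = O(T^{1-a})$, which is at most $T^{(1+\kappa)/2}$ when $a \ge \frac{1-\kappa}{2}$. For small $N_{i-1}$, where $\overline{\gamma}$ is not yet a valid upper bound, we absorb the contribution into a constant or use $\gamma_{N} \le N$. The same backward geometric-sum argument then finishes the proof.
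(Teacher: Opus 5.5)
Your proposal is correct and is essentially the paper's argument. Both proofs take the per-batch bounds from Proposition~\ref{prop:property_bpe}, reduce them to $R^i = O\bigl(T^{a^{i-1}(\eta-a)}\sqrt{\beta T\gamma_T}\bigr)$ (with $\eta=\frac12$; in the Mat\'ern case $\eta=\frac{\nu}{2\nu+d}$ and $\overline{\gamma}_T$ replaces $\gamma_T$), and let the last batch dominate; your lower bound on $u_B$ from $B\le\lceil\log_{1/a}\log_2 T\rceil+1$ is equivalent to the paper's ratio bound $U_i/U_{i-1}\ge 2^b$. One slip: the \emph{lower} bound on $B$ gives an upper, not a lower, bound on $a^B\log T$, so drop that aside; the upper bound on $B$ that you then invoke is the correct justification.
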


\begin{remark}
    \label{rmk:choose_a_and_more}
    For batch sizes in \eqref{eq:grow_b_batch_sizes}, we have the following observations on the effect of $a$ on $R_T$ and $B$:
    \begin{itemize}
        \item Note that \eqref{eq:grow_b_batch_sizes_li} is equivalent to $a=\frac{1}{2}$.  
        By taking $a$ arbitrarily close to $\frac{1}{2}$, we can attain the same bound on $B$ as that of Lemma~\ref{lem:upper_grow_b_li} (or if $\log_2\log_2 T$ is precisely an integer, only increase it by one) while slightly improving $R_T$.
        \item For the Mat\'ern kernel, choosing $a\in(\frac{\nu}{2\nu+d},\frac{1}{2}]$ (and arbitrarily close to $\frac{\nu}{2\nu+d}$) implies $B \leq \lceil \log_{1/a}\log_2 T \rceil + 1 \leq \lceil \log_2\log_2 T \rceil + 1$, meaning we have a better bound on $B$ (see also Remark~\ref{rmk:necessary_B} below on its near-optimality), i.e., fewer batches. 
        In this case, for technical reasons, the bound is expressed in terms of a ``well-behaved'' upper bound $\overline{\gamma}_T$, instead of $\gamma_T$ itself.  We note that (i) the state-of-the-art bound \eqref{gamma_t_mat} satisfies this condition, (ii) more general conditions are given in the appendix (Definition \ref{def:gamma_t}), (iii) it is very plausible that $\gamma_t$ itself satisfies such conditions, but this may be difficult to prove, and (iv) in any case, the difference is at most a dimension-independent log term (see Remark \ref{rmk:near-optimal} in the appendix).
    \end{itemize}
    \end{remark}
    
    \begin{remark}
    For the Mat\'ern kernel, a smaller $\frac{\nu}{d}$ means a ``harder problem'' for the algorithm.  A smaller $\frac{\nu}{d}$ allows $a$ to be smaller, since $a\in(\frac{\nu}{2\nu+d}, 1)$, which by \eqref{eq:B_asymp} means a lower value of $B$. This means that the algorithm needs fewer batches to achieve the near-optimal regret as the problem gets harder.  Intuitively, the reason for this is that for a harder problem, the algorithm needs larger batches to learn anything, and this amounts to fewer batches overall. 
    This advantage is not present in \eqref{eq:grow_b_batch_sizes_li} and Lemma \ref{lem:upper_grow_b_li}, where the bound on $B$ only depends on $T$ but not $(\nu, d)$.
\end{remark}
\begin{remark}
    Compared to Lemma \ref{lem:upper_grow_b_li}, our regret bound avoids the leading $\log\log T$ term, which appeared in the analysis of \citep{Li22} as multiplication by the number of batches.  
    This term is arguably fairly insignificant, but its removal still has some interesting implications.  
    As a useful point of comparison, we note that in the Bayesian setting, the upper bound on cumulative regret is $R_T = O\bigl((\sqrt{\log (\lvert \mathcal{X}\rvert/\delta) } + \sqrt{\log T})\sqrt{T\gamma_T}\bigr) $ from \citep{Srinivas10}. Then, under the mild assumption $\log T = O(\log\lvert \mathcal{X}\rvert)$,
    the bound simplifies to $O\bigl(\sqrt{T\gamma_T\log(\lvert \mathcal{X}\rvert/\delta)} \bigr)$.  By similar reasoning, our result only requires the \emph{extremely mild} assumption 
    $\log\log\log T = O(\log\lvert \mathcal{X}\rvert)$
    to simplify to $O\bigl(\sqrt{T\gamma_T\log(\lvert \mathcal{X}\rvert/\delta)} \bigr)$. This scaling was not achieved by \cite{Li22} under either assumption.\footnote{Higher logarithmic factors are present in \citep{Valko13} and \citep{Salgia21}, whereas \citep{Camilleri21} attains this scaling when $\log\log T = O(\log\lvert \mathcal{X}\rvert)$.   As discussed in \citep{Li22}, the BPE approach has the desirable property of being arguably the most elementary among these works with near-optimal regret. }
\end{remark}

The proof of Theorem \ref{thm:upper_grow_b} is given in Appendix \ref{app:proof_thm_upper_grow_b}.  
Notably, we use the batch sizes in \eqref{eq:grow_b_batch_sizes} to avoid incurring a common bound for cumulative regret in each batch (which would lead to Lemma \ref{lem:upper_grow_b_li}), and instead show that the regret bound from the final batch dominates those from all previous batches (see \eqref{eq:ub_dominate}).

\section{Standard Setting: Lower Bounds}
\label{sec:standard_lower_bound}

For our algorithm-independent lower bounds, we allow $B$ to be either constant or growing with $T$, and consider $\mathcal{X} = [0,1]^d$.
Our main contribution is a novel lower bound on expected cumulative regret for adaptive batches, which is accompanied by a high probability lower bound of the same order. For context, we will first discuss existing lower bounds for fixed batches in \citep{Li22}.

\subsection{Existing Results for Fixed Batches}
\label{sec:adaptive_batches_preliminary}
\begin{lemma}
    \emph{\citep[Theorem 3]{Li22}}
    \label{lem:lower_fixed_batches}
    Under the setup of Section \ref{sec:setup_standard},
    for any batched algorithm with a constant number $B$ of fixed batches, there exists $f$ with $\lVert f\rVert_k \leq \Psi$ such that with probability at least $\frac{3}{4}$:
    \begin{enumerate}
        \item for the SE kernel, it holds that $R_T=\Omega\Bigl(T^\frac{1-\eta}{1-\eta^B}(\log T)^{\frac{d(\eta-\eta^B)}{2(1-\eta^B)}} \Bigr)$, where $\eta=\frac{1}{2}$;
        \item for the Mat\'ern kernel, it holds that $R_T=\Omega\bigl(T^\frac{1-\eta}{1-\eta^B}\bigr)$, where $\eta=\frac{\nu}{2\nu+d}$.
    \end{enumerate}
\end{lemma}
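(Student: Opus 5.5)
The statement is Lemma~\ref{lem:lower_fixed_batches}, a fixed-batch lower bound from \citep{Li22}. The plan is to reduce to the standard ``needle-in-a-haystack'' construction of \citep{Scarlett17} and play it against each batch boundary in turn.

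\textbf{Step 1: hard function class.} Fix a scale $\epsilon$. Partition $[0,1]^d$ into $M$ cells of width $w$, where $w \asymp \epsilon^{1/\nu}$ for Mat\'ern and $w \asymp (\log(1/\epsilon))^{-1/2}$ for SE. In each cell place a bump of height $2\epsilon$ with RKHS norm at most $\Psi$; this gives $M \asymp \epsilon^{-d/\nu}$ (Mat\'ern) or $M \asymp (\log(1/\epsilon))^{d/2}$ (SE). Let $f_m$ be the function with a bump in cell $m$ only, and $f_0\equiv 0$. Any point outside cell $m$ then incurs regret $\Omega(\epsilon)$ under $f_m$. A KL computation gives
\[
D\bigl(P_{f_0}\,\|\,P_{f_m}\bigr)\lesssim \frac{\epsilon^2}{\sigma^2}\,\mathbb{E}_0[N_m],
\]
where $N_m$ is the number of queries in cell $m$.

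\textbf{Step 2: one batch boundary at a time.} The batch endpoints $t_1<\dots<t_B$ are fixed in advance. For each $i$, choose $\epsilon_i$ so that $t_{i-1}\epsilon_i^2 \asymp M(\epsilon_i)$. With this choice, fewer than about $M/\epsilon_i^2$ samples are taken before batch $i$ starts. By averaging over $m$ and applying Pinsker, there is some $m$ with $\mathbb{E}_0[N_m]\lesssim t_{i-1}/M$, so $P_{f_m}$ and $P_{f_0}$ are close in total variation on the information available before batch $i$. Batch $i$ is chosen non-adaptively, so with constant probability it places most of its $N_i$ points outside cell $m$. This gives regret $\Omega(t_i\epsilon_i)$, up to the $t_{i-1}$ vs.\ $t_i$ bookkeeping.

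For Mat\'ern, solving $t_{i-1}\epsilon^2=\epsilon^{-d/\nu}$ gives $\epsilon_i = t_{i-1}^{-\nu/(2\nu+d)}$, hence
\[
R_T \gtrsim \max_i \, t_i\, t_{i-1}^{-\eta}, \qquad \eta=\frac{\nu}{2\nu+d},\quad t_0=1.
\]
For SE, $\epsilon_i \approx \sqrt{(\log t_{i-1})^{d/2}/t_{i-1}}$, which gives $\eta=1/2$ with the stated log factors.

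\textbf{Step 3: optimize over the schedule.} Minimize $\max_i t_i t_{i-1}^{-\eta}$ subject to $t_B=T$. Set every term equal to a common value $R$. Then $t_1=R$, $t_2=Rt_1^{\eta}=R^{1+\eta}$, and inductively
\[
T = t_B = R^{1+\eta+\dots+\eta^{B-1}} = R^{\frac{1-\eta^B}{1-\eta}},
\]
so $R=T^{\frac{1-\eta}{1-\eta^B}}$. Carrying the logs for SE produces the exponent $\frac{d(\eta-\eta^B)}{2(1-\eta^B)}$. Since every schedule has some $i$ with $t_it_{i-1}^{-\eta}\ge R$, the bound holds for all fixed-batch algorithms.

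\textbf{Step 4: constant probability.} To get probability $3/4$ rather than an expectation bound, use the high-probability form of Le Cam/Fano as in \citep{Cai21}. This is the main obstacle: combining the randomness within a batch with the $3/4$ threshold requires the bad event (most of batch $i$ outside the needle cell) to hold with high probability under $f_m$, transferred from $f_0$ via the total-variation bound. The fix is to choose constants so that the total-variation distance is at most $1/8$, and to argue on the fraction of batch-$i$ points outside cell $m$ via Markov's inequality.
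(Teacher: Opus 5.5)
Your architecture matches the argument the paper sketches for this cited lemma in Section~\ref{sec:adaptive_batches_preliminary}, following \citep{Li22}. It uses a needle-in-a-haystack class, with $\epsilon$ tuned to the number of samples seen before a batch, plus a pigeonhole over the fixed schedule. Equalizing $t_i t_{i-1}^{-\eta}=R$ is just the dual of the reference grid $T_i$ of Proposition~\ref{prop:reference_batches_property}.

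\textbf{Gap in Step~2: the choice of $m$.} You pick $m$ only so that the pre-batch count $\mathbb{E}_0[N_m]$, and hence the TV, is small. You then assert that batch $i$, being non-adaptive, puts most of its points outside cell $m$. Within-batch non-adaptivity says nothing about \emph{where} the batch goes. Take an algorithm that never queries cell $M$ before $t_{i-1}$ and spends all of batch $i$ at the center of cell $M$. Then $m=M$ minimizes $\mathbb{E}_0[N_m]$, yet the batch-$i$ regret under $f_M$ is zero. The Markov step in your Step~4 only helps if $m$ is also chosen so that $\mathbb{E}_0[N^{(i)}_m]$ is small, where $N^{(i)}_m$ is the number of batch-$i$ queries in cell $m$.

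The repair is a joint averaging. Since $\sum_m \mathbb{E}_0[N^{(i)}_m]=N_i$, at most $M/4$ indices have $\mathbb{E}_0[N^{(i)}_m]>4N_i/M$. If $\epsilon_i$ is tuned so that the \emph{average} TV is at most $1/32$, at most $M/4$ indices have TV above $1/8$. Choose $m$ outside both sets. Then $\mathbb{P}_0(N^{(i)}_m\ge N_i/2)\le 8/M$. This event is determined by the observations up to $t_{i-1}$, so $\mathbb{P}_{f_m}(N^{(i)}_m< N_i/2)\ge 1-8/M-1/8\ge 3/4$ for $M\ge 64$, giving $R^i\ge N_i\epsilon_i/2$. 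This plays the role of $\sum_m \mathbb{P}_0^t((x_t\in\mathcal{R}_m)\cap A_i)=\mathbb{P}_0(A_i)$ in the paper's proof of Theorem~\ref{thm:lower_adaptive_batches}. Use the averaged TV bound of Lemma~\ref{lem:aux_avg_tv_bound} here. Your per-$m$ KL bound does not hold for SE, because those bumps are not compactly supported.

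\textbf{Step~3 needs an actual argument, not just bookkeeping.} An index with $t_i t_{i-1}^{-\eta}\ge R$ can be a useless late batch: if $t_{B-1}=T-1$, then $i=B$ qualifies with $N_B=1$. Take the \emph{smallest} such $i$. Then $t_{i-1}<R^{(1-\eta^{i-1})/(1-\eta)}$, so $t_i/t_{i-1}>R^{\eta^{i-1}}\ge R^{\eta^{B-1}}$, which is a positive power of $T$ for constant $B$. Hence $N_i=\Theta(t_i)$. This is exactly what the reference grid gives for free: $t_{i-1}<T_{i-1}$, $t_i\ge T_i$, and $T_i-T_{i-1}=\Theta(T_i)$.

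For SE, tune $\epsilon_i$ to $T_{i-1}\ge t_{i-1}$ rather than to $t_{i-1}$, so that $\log(1/\epsilon_i)=\Theta(\log T)$ even when $t_{i-1}$ is small. Fewer samples only shrink the TV, so this is legitimate, and it yields the stated log exponent.

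With these repairs, your direct TV-plus-Markov argument gives probability $3/4$ without any Fano step.
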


Before presenting our lower bound for adaptive batches, we will briefly describe the ideas behind Lemma \ref{lem:lower_fixed_batches}, which we build on. Since Lemma \ref{lem:lower_fixed_batches} itself builds on lower bounds in the fully sequential setting \citep{Scarlett17}, we will briefly discuss those first.

In the fully sequential setting, the idea is to consider a ``needle in haystack'' problem \citep{Scarlett17, Cai21}, where we construct a class of hard-to-distinguish functions $\mathcal{F}=\{f_1,\ldots,f_M\}$ with maximal value dictated by a parameter $\epsilon>0$, with the properties that $\lVert f_i\rVert_k \leq \Psi$ for each $i\in[M]$, and that any $x\in\mathcal{X}$ can be $\epsilon$-optimal for at most one function. Then, achieving simple regret $r^{(T)} \leq \epsilon$ is equivalent to correctly identifying the function index $i\in[M]$. An example of $\mathcal{F}$ is shown in Figure \ref{fig:bump} in the appendix. \citep{Scarlett17} shows that for any given $T$, we can choose $\epsilon = \Theta(\sqrt{M/T})$ in the construction of $\mathcal{F}$.

With this understanding, Lemma \ref{lem:lower_fixed_batches} aims to identify a ``bad'' batch (say the $i$-th batch) that starts too early and lasts too long, i.e., $t_{i-1} < T_{i-1} $ and $t_i \geq T_i$ for suitably-defined $T_{i-1}$ and $T_i$. 
To see why, notice that if the $i$-th batch is bad, then we can construct $\mathcal{F}$ with $\epsilon = \Theta(\sqrt{M/T_{i-1}})$ such that $r^{(t_{i-1})}\geq \epsilon$, which implies $R^i \geq (t_i-t_{i-1}) \epsilon > (T_i-T_{i-1})\epsilon$. Since $R^i$ is entirely determined by $T_{i-1}$ and $T_i$, it follows that suitably choosing $T_i$ would lead to the same order of lower bound for each $R^i$. It is then argued that for any fixed batches specified by $t_1, \ldots, t_B$, there will always be one bad batch (say the $i$-th batch) such that both $t_{i-1} < T_{i-1} $ and $t_i \geq T_i$ hold, which incurs a lower bound on $R^i$, and consequently a lower bound on $R_T$.

\subsection{Extension to Adaptive Batches}
We are now ready to present our novel lower bounds for adaptive batch sizes. In the following result, the expectation $\mathbb{E}$ is taken over the sampling noise and any randomness in the algorithm.
\begin{theorem}
    \label{thm:lower_adaptive_batches}
    Under the setup of Section \ref{sec:setup_standard},
    for any batched algorithm with a constant or growing number $B$ of adaptive batches, there exists $f$ with $ \lVert f\rVert_k \leq \Psi $ such that 
    \begin{itemize}
        \item for the SE kernel, it holds that $\mathbb{E}[R_T] =\Omega\Bigl(B^{-2}T^{\frac{1-\eta}{1-\eta^B}}(\log T)^{\frac{d\eta(\eta-\eta^B)}{1-\eta^B}} \Bigr)$, where $\eta=\frac{1}{2}$;
        \item for the Mat\'ern kernel,  it holds that $\mathbb{E}[R_T] =\Omega\Bigl(B^{-(2\eta+1)}T^{\frac{1-\eta}{1-\eta^B}} \Bigr)$, where $\eta=\frac{\nu}{2\nu+d}$.
    \end{itemize}
\end{theorem}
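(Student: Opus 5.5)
We build on the ``bad batch'' argument behind Lemma \ref{lem:lower_fixed_batches}, using a change of measure in the spirit of \citep{Gao19} to handle adaptively chosen $t_i$. First we fix reference times $0=T_0<T_1<\cdots<T_B=T$ chosen so that each candidate bad batch would give the same regret. For each $i\in[B]$ we take the needle-in-haystack class $\mathcal{F}_i=\{f_{i,1},\ldots,f_{i,M_i}\}$ of \citep{Scarlett17} with gap $\epsilon_i$. The pair $(\epsilon_i,M_i)$ is chosen so that $T_{i-1}$ samples cannot identify the needle, i.e.\ $\epsilon_i\asymp\sqrt{M_i/T_{i-1}}$ after an extra factor of $1/B$ that we introduce below. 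The relation between $M_i$ and $\epsilon_i$ is kernel-specific: $M_i\asymp\epsilon_i^{-d/\nu}$ for Mat\'ern and $M_i\asymp(\log(1/\epsilon_i))^{d/2}$ for SE. The ratio $T_i\epsilon_i$ is then set to a common value $\Delta$, which gives the recursion $T_i\asymp \Delta\, T_{i-1}^{\eta}$ (up to logs for SE). Solving it with $T_B=T$ yields $\Delta\asymp T^{\frac{1-\eta}{1-\eta^B}}$, together with the stated logarithmic factor for SE. In total we use $\Theta(BM)$ functions, all supported on a common partition of $[0,1]^d$, plus the zero function $f_0$ as the base measure.

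The key step is the change of measure. Under $f_0$, the algorithm's adaptive batch times $(t_1,\ldots,t_B)$ form some random vector. Since $t_0=0<T_0$ and $t_B=T\ge T_B$, a pigeonhole argument shows that on every sample path some $i$ satisfies $t_{i-1}<T_{i-1}$ and $t_i\ge T_i$. Hence there is an index $i$ with $\mathbb{P}_{f_0}(E_i)\ge 1/B$, where $E_i=\{t_{i-1}<T_{i-1},\,t_i\ge T_i\}$. The event $E_i$ is determined by the data observed up to time $t_{i-1}$, i.e.\ at most $T_{i-1}$ samples. For each $j$ we then bound $|\mathbb{P}_{f_{i,j}}(E_i)-\mathbb{P}_{f_0}(E_i)|$ through KL divergence or Pinsker's inequality on the first $T_{i-1}$ observations: $D\le \tfrac{1}{2\sigma^2}\mathbb{E}_0[\sum_{t\le T_{i-1}}f_{i,j}(x_t)^2]$. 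Averaged over $j$, the bump disjointness gives $\frac1{M_i}\sum_j D_j\lesssim T_{i-1}\epsilon_i^2/M_i$. Choosing the constant in $\epsilon_i$ with an extra factor $1/B$ makes this $O(1/B^2)$. Consequently, for most $j$, $\mathbb{P}_{f_{i,j}}(E_i)\ge 1/(2B)$. The same averaging argument shows that, on $E_i$, the points played in batch $i$ (which are all chosen from pre-$t_{i-1}$ information) lie in the $\epsilon_i$-optimal bump of $f_{i,j}$ with small average probability, since each point is near-optimal for at most one $j$.

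On $E_i$, then, batch $i$ has length at least $T_i-T_{i-1}\gtrsim T_i$ and each round incurs regret of order $\epsilon_i$ with constant probability. This gives $\mathbb{E}[R_T]\gtrsim \frac1B T_i\epsilon_i$. The extra $1/B$ shrinkage of $\epsilon_i$, propagated through the recursion, costs an additional factor $B^{-1}$ for SE ($\eta=1/2$, with logs adjusting the exponent of $\log T$) and $B^{-2\eta}$ for Mat\'ern. This accounts for the stated $B^{-2}$ and $B^{-(2\eta+1)}$. The main obstacle is this change-of-measure step: $E_i$ depends on the adaptive $t_{i-1}$ and the batch-$i$ actions, so we must carefully restrict the likelihood ratio to the $\sigma$-field at the stopping time $t_{i-1}\le T_{i-1}$ and control the KL divergence uniformly. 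We also have to verify that $\epsilon_i$ is small enough for the RKHS-norm constraint $\lVert f_{i,j}\rVert_k\le\Psi$, which requires $T_{i-1}$ large, while small-$i$ edge cases (e.g.\ $i=1$, where $T_0=0$ and one uses a constant $\epsilon_1$) are handled separately.
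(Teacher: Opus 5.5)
Your proposal follows essentially the same route as the paper. The shared ingredients are: reference times $T_i$ that balance $T_i\epsilon_i$ across $i$; one bump family $\mathcal{F}_i$ per candidate bad batch, with $\epsilon_i$ shrunk by a $1/B$ factor; a change of measure to $f_0$ through the TV distance of the observations available at the fixed time $T_{i-1}$; average KL controlled by $\sum_m(\bar v^j_m)^2\le C\epsilon_i^2$ from \citep{Scarlett17} (for SE the bumps are not literally disjoint); and the covering property of the bad events. Note that no stopping-time $\sigma$-field is needed, because $E_i$ and the batch-$i$ actions on $E_i$ are already measurable at time $T_{i-1}$.

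The remaining differences are cosmetic, plus one small slip. You select a single $i$ with $\mathbb{P}_0(E_i)\ge 1/B$, as the paper does in its proof of Corollary~\ref{cor:lower_adaptive_batches}, instead of averaging $\mathbb{P}_0(A_i)-\frac{1}{2B}$ over $i$; both give the same bound. The slip is that $E_1$ must be defined as $\{t_1\ge T_1\}$: with $T_0=t_0=0$, the condition $t_{i-1}<T_{i-1}$ never holds for $i=1$, so as written your pigeonhole claim fails on paths where $t_1\ge T_1$.
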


\begin{remark}
    \label{rmk:necessary_B}
    Using Theorem \ref{thm:lower_adaptive_batches}, we can deduce that it is necessary to have $B \geq (\log_{1/\eta}\log_2T )(1-o(1))$ batches in order to attain regret matching $\Omega^*(\sqrt{T\gamma_T})$ to within $\log T$ factors:
    
    \begin{itemize}
        \item If we choose $B=\lceil\log_{1/\eta}\log_2T \rceil $, 
        then $B = c+\log_{1/\eta}\log_2 T $ for some $c\in[0,1)$,
        and  $\eta^B = \eta^c/ \log_2 T $, which means
        \begin{align}
            T^{\frac{1-\eta}{1-\eta^B}}
            & = T^{1-\eta}T^{\frac{(1-\eta)\eta^B}{1-\eta^B}} \label{eq:T_step1} \\
            & = T^{1-\eta}T^{(1-\eta)\frac{\eta^c}{(\log_2 T)-\eta^c}} \\
            & > T^{1-\eta}T^{(1-\eta)\frac{\eta}{(\log_2 T)-\eta}} \label{eq:T_step3} \\
            & = T^{1-\eta}2^{(1-\eta)\eta\frac{\log_2 T}{(\log_2 T)-\eta}} \\
            & = T^{1-\eta}2^{\Theta(1)},
        \end{align}
        and similar steps give $(\log T)^{\frac{d(\eta-\eta^B)}{2(1-\eta^B)}} = (\log T)^{\frac{d\eta}{2}}2^{o(1)} $.  Accordingly, the regret bound has $T^{1-\eta}$ scaling to within logarithmic factors; substituting $\eta$ gives that this matches $\sqrt{T\gamma_T}$. 
        \item If we choose $B=\lceil c'\log_{1/\eta} \log_2T \rceil$ where $c'\in(0,1)$, 
        then $B= c+c'\log_{1/\eta}(\log_2T) $ for some $c\in[0,1)$,
        and $\eta^B= \eta^{c} (\log_2 T)^{-c'}$, so similar steps to \eqref{eq:T_step1}--\eqref{eq:T_step3} yield
        \begin{align}
            T^{\frac{1-\eta}{1-\eta^B}}
            & > T^{1-\eta}T^{(1-\eta)\frac{\eta}{(\log_2 T)^{c'}-\eta }}.
        \end{align}
        We observe that the second factor $T^{(1-\eta)\frac{\eta}{(\log_2 T)^{c'}-\eta }}$ 
        scales as $T^{o(1)}$ but $(\log T)^{\omega(1)}$.
        Moreover, we still have $(\log T)^{\frac{d(\eta-\eta^B)}{2(1-\eta^B)}} = (\log T)^{\frac{d\eta}{2}}2^{o(1)} $.
        Thus, for both kernels, the regret bound has an extra $(\log T)^{\omega(1)}$ term, so is not optimal to within $\log T$ factors.
    \end{itemize}
    Recall from Section \ref{sec:standard_upper_bound} that it is sufficient to have $B \leq \lceil \log_{1/a}\log_2 T \rceil + 1$ batches to recover the near-optimal $O^*(\sqrt{T\gamma_T})$ regret. Hence, the batch sizes in \eqref{eq:grow_b_batch_sizes} yield an optimal number of batches to within $1+o(1)$
    when $a$ is chosen sufficiently close to $\eta$.
\end{remark}

\begin{remark} \label{rem:compare_lem}
    Compared with Lemma \ref{lem:lower_fixed_batches}, Theorem \ref{thm:lower_adaptive_batches} has an extra dependence on $B$ that is inverse polynomial, which is consistent with the fact that we consider a more powerful class of algorithms.
    However, we are justified to restrict $B=O(\log\log T)$,
    because (i) this order of $B$ is sufficient for near-optimal regret by Theorem \ref{thm:upper_grow_b}, and (ii) for any larger $B$, we can use the general lower bound in \citep{Scarlett17} that holds regardless of $B$.  
    For $B=O(\log\log T)$, the impact of the $B$ term in Theorem \ref{thm:lower_adaptive_batches} is at most poly-loglog, which is very mild. 
    This means that adaptive batches, despite being conceptually more flexible, have essentially the same minimax behavior as fixed batches.  We also note that this formalizes an \emph{intuitive but informal} argument in \citep{Li22} that adaptivity shouldn't help, which is based on recursively considering the regret incurred based on when the previous batch ended.
\end{remark}

The full proof of Theorem \ref{thm:lower_adaptive_batches} is provided in Appendix \ref{app:proof_lower_adaptive_batches}, which follows from the discussion in Section \ref{sec:adaptive_batches_preliminary} (but the details are nearly all distinct) and is also inspired by the batched multi-armed bandit problem \citep{Gao19}. It is outlined as follows.

Recall that for fixed batches, \emph{we know which batch is bad beforehand}, i.e., $t_{i-1}<T_{i-1} $ and $t_i\geq T_i$ for some $i$. Therefore, we can construct $\mathcal{F}$ with $\epsilon = \Theta(\sqrt{M/T_{i-1}})$, which is difficult for the $i$-th batch, so that the regret incurred on this batch suffices to establish the desired lower bound. 
However, this argument breaks down for adaptive batches, as \emph{we don't know which batch is bad beforehand}, so no single $\mathcal{F}$ (or $\epsilon$) can handle all possible cases, and we need to do a ``full'' analysis specific to cumulative regret that is not based on a reduction to any existing result.

In more detail,
we consider $B$ families of hard functions $\{\mathcal{F}_i\}_{i=1}^B$, where each $\mathcal{F}_i$ is constructed to handle the case that the $i$-th batch is bad, and is parameterized by $\epsilon_i$. 
The proof starts by lower bounding the regret incurred between $T_{i-1} $ and $T_i$ for functions from $\mathcal{F}_i$. The lower bound is expressed in terms of probabilities conditioned on functions from $\mathcal{F}_i$, making it difficult to combine with those from any other class $i' \neq i$. 
Therefore, we use
$\mathbb{P}_{f}(E\cap A) + \mathbb{P}_{0}(E^c\cap A) \geq \int_A \min\{d\mathbb{P}_f, d\mathbb{P}_0\}$,
where $f \in \mathcal{F}_i$, $\mathbb{P}_0 $ denotes probabilities under the all-zero function, and $E,A$ are some events, 
to change measure from $\mathbb{P}_{f}$ to $\mathbb{P}_{0}$. Lower bounding the integral introduces total variation (TV) terms, which are upper bounded in Lemma \ref{lem:aux_avg_tv_bound}.
Then, by suitably choosing $\epsilon_i$ for each $i$, we can aggregate the lower bounds from each batch (in particular using $\sum_i \mathbb{P}_0(A_i) \geq 1$ for some ``bad events'' $\{A_i\}_{i=1}^B$). This leads to a lower bound that is averaged from all hard functions, which then lower bounds the worst-case regret.

Based on Theorem \ref{thm:lower_adaptive_batches}, we are also able to derive high probability bounds on $R_T$ that are of the same order.

\begin{corollary}
    \label{cor:lower_adaptive_batches}
    Under the setup of Section \ref{sec:setup_standard},
    for any batched algorithm with a constant or growing number $B$ of adaptive batches, and any $\delta \in (0,\frac{1}{11B})$, there exists $f$ with $ \lVert f\rVert_k \leq \Psi $ such that with probability at least $\delta$, we have
    \begin{itemize} 
        \item for the SE kernel, $R_T = \Omega\Bigl(B^{-2}T^{\frac{1-\eta}{1-\eta^B}}(\log T)^{\frac{d\eta(\eta-\eta^B)}{1-\eta^B}} \Bigr)$, where $\eta=\frac{1}{2}$.
        \item for the Mat\'ern kernel, $R_T =\Omega\Bigl(B^{-(2\eta+1)}T^{\frac{1-\eta}{1-\eta^B}} \Bigr)$, where $\eta=\frac{\nu}{2\nu+d}$.
    \end{itemize}    
\end{corollary}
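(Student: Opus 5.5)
We derive Corollary~\ref{cor:lower_adaptive_batches} from the expectation bound in Theorem~\ref{thm:lower_adaptive_batches}. We do not re-prove that theorem. Instead we revisit its proof and keep track of a probability statement in place of an expectation. A reverse-Markov argument ($R_T \le T\cdot 2\Psi$ almost surely, hence $\mathbb{P}(R_T \ge \mathbb{E}[R_T]/2) \ge \mathbb{E}[R_T]/(4\Psi T)$) is useless here, because it only yields probabilities that vanish polynomially in $T$. So we go back into the proof.

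The proof of Theorem~\ref{thm:lower_adaptive_batches} gives, for each $i$ and each $f\in\mathcal{F}_i$, a bound of the form
\[
\mathbb{E}_f[R_T] \ge (T_i-T_{i-1})\epsilon_i\bigl(\mathbb{P}_f(E\cap A_i) + \ldots\bigr).
\]
Here $A_i$ is the event that the $i$-th batch is bad, $t_{i-1}<T_{i-1}\le T_i\le t_i$, and $E$ is the event that the algorithm, from the data gathered before $t_{i-1}$, fails to place its batch-$i$ queries near the bump of $f$. On $E\cap A_i$ the regret deterministically satisfies $R_T\ge c\,(T_i-T_{i-1})\epsilon_i$, and each $(T_i-T_{i-1})\epsilon_i$ was chosen to be of order $L := T^{\frac{1-\eta}{1-\eta^B}}(\cdots)$, up to the $B$-dependent factors. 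It therefore suffices to show that some $i$ and some $f\in\mathcal{F}_i$ satisfy $\mathbb{P}_f(E\cap A_i)\ge\delta$. We obtain this from the change-of-measure inequality
\[
\mathbb{P}_f(E\cap A_i)+\mathbb{P}_0(E^c\cap A_i)\ge \mathbb{P}_0(A_i)-\mathrm{TV}\text{-terms},
\]
averaged over $f\in\mathcal{F}_i$. Averaging does two things. Since any point is near at most one bump, $\frac{1}{|\mathcal{F}_i|}\sum_f \mathbb{P}_0(E_f^c\cap A_i)\le \mathbb{P}_0(A_i)/|\mathcal{F}_i|$, or at least is a small fraction of $\mathbb{P}_0(A_i)$ after controlling the number of queries. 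The averaged TV terms are bounded via Lemma~\ref{lem:aux_avg_tv_bound} by a small constant times a quantity that the choice of $\epsilon_i$ makes at most, say, $\frac{1}{4B}$.

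Next we use $\sum_i\mathbb{P}_0(A_i)\ge1$ to pick $i$ with $\mathbb{P}_0(A_i)\ge 1/B$. This gives
\[
\frac{1}{|\mathcal{F}_i|}\sum_f\mathbb{P}_f(E_f\cap A_i)\ge \frac{1}{B}-\frac{c_1}{B}
\]
for an explicit $c_1<1$, and hence some $f$ with $\mathbb{P}_f(R_T\ge cL')\ge c_2/B$. Tracking the constants as in the theorem, we want $c_2\ge 1/11$. That would cover every $\delta<\frac{1}{11B}$, and the lower bound $L'$ carries the same $B^{-2}$ or $B^{-(2\eta+1)}$ factors as the theorem. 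If the constants come out slightly different, we have some room: rescaling the $\epsilon_i$ by a constant changes only the hidden constant in $\Omega(\cdot)$ and lets us push the TV contribution down.

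The main obstacle is making sure that the theorem's regret lower bound really arises as $\text{(deterministic regret on an event)}\times\text{(probability)}$, so that it transfers to a tail bound without loss. In particular, we have to check that the $B$-dependent factors in the theorem come from the choice of $\epsilon_i$ and $T_i$, which survive in $L'$, and not from the averaging over the $B$ events, which in the tail bound is absorbed into the probability $\delta<1/(11B)$. If part of the $B$ loss did come from that averaging, the plan is to instead define $\epsilon_i$ and $T_i$ so that the per-batch TV budget is $\Theta(1/B)$, and to accept the same polynomial-in-$B$ factors in the magnitude.
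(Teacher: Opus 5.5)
Your argument is sound, but it takes a different route from the paper's.

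\textbf{The paper's route.} The paper never re-enters the change of measure. It picks $i$ with $\mathbb{P}_0(A_i)\ge 1/B$. It then reuses, from the proof of the theorem, the averaged bound $\frac{1}{M_i}\sum_m\mathbb{E}_m[\bar R^i]\ge\frac{3\epsilon_i}{8B}(T_i-T_{i-1})$ on the windowed regret $\bar R^i=\sum_{t=T_{i-1}+1}^{T_i}r_t$. Finally it applies reverse Markov to $\bar R^i$, not to $R_T$. Every $f\in\mathcal{F}_i$ takes values in $[-2\epsilon_i,2\epsilon_i]$, so $\bar R^i\le 4\epsilon_i(T_i-T_{i-1})$ almost surely. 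The mean is therefore within a factor $O(B)$ of the maximum, and reverse Markov gives probability $\frac{1-c}{(32/3)B-c}>\frac{1-c}{11B-c}$; this is where $\frac{1}{11B}$ comes from. Your dismissal of reverse Markov only applies to the crude bound $R_T\le 2\Psi T$.

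\textbf{What your route buys.} Your direct tail argument proves more. With TV budget $\frac{1}{4B}$ you get probability roughly $\frac{1}{2B}$. The magnitude is $\Omega(\epsilon_i(T_i-T_{i-1}))$, which carries $B^{-1}$ (SE) or $B^{-2\eta}$ (Mat\'ern) instead of $B^{-2}$ or $B^{-(2\eta+1)}$. As you suspected, one factor of $B$ in the theorem comes from averaging over $i$. Reverse Markov cannot recover that factor, because its threshold must lie below the mean $\frac{3\epsilon_i}{8B}(T_i-T_{i-1})$.

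\textbf{What it costs.} You must redo the change of measure with a window-level event, and that event has to be a majority event, not ``no query near the bump''. Since $T_i-T_{i-1}\gg M_i$, the algorithm can visit every region, so that version fails. Take $E_f$ to be the event that at least half of $x_{T_{i-1}+1},\dots,x_{T_i}$ lie outside $\mathcal{R}_f$. Then:
\begin{itemize}
\item Because the regions are disjoint, at most one $f$ violates $E_f$ on any realization. This gives $\frac{1}{M_i}\sum_f\mathbb{P}_0(E_f^c\cap A_i)\le\mathbb{P}_0(A_i)/M_i$, as you claimed.
\item On $E_f$ one has $\bar R^i\ge\frac{\epsilon_i}{2}(T_i-T_{i-1})$ deterministically.
\item On $A_i$, all window queries depend only on data up to $t_{i-1}<T_{i-1}$, so $E_f\cap A_i$ is determined at time $T_{i-1}$. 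The inequality $\mathbb{P}_f(E_f\cap A_i)+\mathbb{P}_0(E_f^c\cap A_i)\ge\mathbb{P}_0(A_i)-\frac32\mathrm{TV}(\mathbb{P}_f^{T_{i-1}},\mathbb{P}_0^{T_{i-1}})$ then follows exactly as in Lemma~\ref{lem:change_measure}.
\end{itemize}
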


\begin{remark}
    A limitation of Corollary \ref{cor:lower_adaptive_batches} is that it only applies for $\delta < \frac{1}{11 B}$, whereas ideally one would show a lower bound that holds for fixed $\delta > 0$ not depending on $B$.  However, since we are interested in $B = O(\log \log T)$ (see Remark \ref{rem:compare_lem}), this corollary still places a strong restriction on how small the error probability can be made.  We expect the same lower bounds (in the $\Omega^*(\cdot)$ sense) to also hold for constant $\delta$, but we do not attempt to prove this.
\end{remark}
The proof of Corollary \ref{cor:lower_adaptive_batches} is given in Appendix \ref{app:proof_cor}, where the main idea is based on \citep{Scarlett17} and uses a reverse Markov inequality on the cumulative regret incurred in a specific range.

\section{Robust Setting}
\label{sec:robust}

In the robust setting (see Section \ref{sec:setup_robust}), we again assume the domain is finite, with the understanding that extending the results to $\mathcal{X}=[0,1]^d$ is standard (see Remark \ref{rmk:extent_to_continuous_domain}).
Our main contribution is a batched algorithm that achieves the near-optimal $O^*(\sqrt{T\gamma_T})$ regret for growing $B$, i.e., when $B=O(\log\log T)$. We also discuss results for constant $B$ in Appendix \ref{app:robust_fixed_B_upper_bounds}.

\subsection{The Robust-BPE Algorithm}
We present the robust-BPE algorithm (Algorithm \ref{algo:rbpe}), which extends the BPE algorithm to consider adversarial perturbations. 
Similar to BPE, robust-BPE maintains a set of potential optimizers $\mathcal{X}_i$, explores uncertain points in each batch, eliminates suboptimal points from $\mathcal{X}_i$ at the end of each batch, and initializes the next batch with the updated $\mathcal{X}_i$.
The major difference between robust-BPE and BPE is that the former explores a (potentially) larger set of points in each batch. As shown in line \ref{line:set_to_explore} in Algorithm \ref{algo:rbpe}, the set under consideration is $\mathcal{X}_{\xi, i}:= \cup_{x\in\mathcal{X}_i}\{x+\delta:\delta\in\Delta_{\xi}(x) \},$\footnote{We can actually make $\mathcal{X}_{\xi, i}$ smaller to potentially improve the statistical efficiency without compromising the theoretical guarantee; this is discussed in Appendix \ref{app:simplify_robust_bpe}.} which includes $\mathcal{X}_i$ under all possible perturbations.  The elimination rule in line \ref{line:eliminate_rule} is chosen such that points with perturbed UCB lower than the maximum perturbed LCB are eliminated. 

The pseudocode for robust-BPE is shown in Algorithm \ref{algo:rbpe}, and the notations $\sigma^i_{j-1}$, $\mu^i$, $\sigma^i$, $\mathrm{UCB}_i(x)$ and $\mathrm{LCB}_i(x)$ therein have the same meaning as those in the BPE algorithm, which we define in Appendix \ref{app:bpe}. 

\begin{algorithm}[!t]
\caption{Robust-BPE for finite domain \label{algo:rbpe}}
\begin{algorithmic}[1]
    \REQUIRE Finite domain $\mathcal{X}$, GP prior $(\mu_0, \sigma_0)$, number of batches $B$, batch sizes $N_1, ..., N_B$
    \STATE $\mathcal{X}_1 \gets \mathcal{X}$
    \FOR{$i \gets 1, \dots, B$} 
        \STATE $\mathcal{S}_i\gets\emptyset$
        \STATE $\mathcal{X}_{\xi, i}\gets \cup_{x\in\mathcal{X}_i}\{x+\delta:\delta\in\Delta_{\xi}(x) \} $ \label{line:set_to_explore}
        \FOR{$j \gets 1, 2, \dots, N_i$}
            \STATE Compute $\sigma^i_{j-1}$ only based on $\mathcal{S}_i$
            \STATE $x_t \gets \argmax_{x\in \mathcal{X}_{\xi, i}} \sigma^i_{j-1}(x)$
            \STATE $\mathcal{S}_i\gets \mathcal{S}_i\cup\{x_t\}$
        \ENDFOR
        \STATE Observe all points in $\mathcal{S}_i$
        \STATE Compute $\mu^{i}$ and $\sigma^{i}$ only based on $\mathcal{S}_i$
        \STATE Compute $\mathrm{UCB}_i(x)$ and $\mathrm{LCB}_i(x)$ for $x\in\mathcal{X}_i$
        \STATE $\mathcal{X}_{i+1} \gets \{ x \in \mathcal{X}_{i}: \min_{\delta\in\Delta_{\xi}(x)} \mathrm{UCB}_i(x+\delta) \geq \max_{x\in{\mathcal{X}_i}} \min_{\delta\in\Delta_{\xi}(x)} \mathrm{LCB}_i(x+\delta) \}$ \label{line:eliminate_rule}
    \ENDFOR   
\end{algorithmic}
\end{algorithm}

\subsection{Regret Analysis for Robust-BPE}
Similar to BPE, the performance of robust-BPE crucially depends on the batch sizes, and using the batch sizes from \eqref{eq:grow_b_batch_sizes} leads to the following result.

\begin{theorem}
    \label{thm:upper_robust}
    Under the setup of Section \ref{sec:setup_robust},
    using batch sizes defined in \eqref{eq:grow_b_batch_sizes} with $a\in(\frac{1}{2},1)$ for either the SE kernel or the Mat\'ern kernel,
    for any $\delta \in (0,1)$,
    robust-BPE yields with probability at least $1-\delta$ that
    \begin{align*}
    	R_{\xi,T}
        = O \Bigl(\bigl(\Lambda + \sqrt{\log \log\log T}\bigr)\sqrt{T\gamma_T}\Bigr),
    \end{align*}
    where $\Lambda=\Psi+\sqrt{\log(|\mathcal{X}|/\delta)}$.
    In addition, 
    with $a\in(\frac{\nu}{2\nu+d},\frac{1}{2}]$ for the Mat\'ern kernel,
    robust-BPE yields with probability at least $1-\delta$ that
    \begin{align*}
    	R_T
        = O \Bigl(\bigl(\Lambda + \sqrt{\log \log\log T}\bigr)\sqrt{T\overline{\gamma}_T}\Bigr),
    \end{align*}
    where $\overline{\gamma}_T$ is any ``well-behaved'' upper bound on $\gamma_T$ in the sense of Theorem \ref{thm:upper_grow_b} (see also Definition \ref{def:gamma_t} in the appendix).
\end{theorem}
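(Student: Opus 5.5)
The plan is to follow the proof of Theorem \ref{thm:upper_grow_b} essentially line by line. The robust structure enters only through two ingredients: a \emph{per-batch confidence/elimination lemma} and a \emph{per-point regret bound in terms of the maximum posterior standard deviation over the exploration set} $\mathcal{X}_{\xi,i}$.

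\textbf{Step 1: good event.} Since robust-BPE, like BPE, chooses the points of batch $i$ non-adaptively (they depend only on $\mathcal{S}_i$ through the posterior variance, not on observations), the sharp confidence bounds of \citep{Vakili21a} apply within each batch. With probability at least $1-\delta$, for all $i\le B$ and all $x\in\mathcal{X}_{\xi,i}\subseteq\mathcal{X}$,
\[
\lvert f(x)-\mu^i(x)\rvert\le\beta\,\sigma^i(x),
\]
where $\beta=O(\Psi+\sqrt{\log(|\mathcal{X}|B/\delta)})$. A union bound is taken over $B=O(\log\log T)$ batches. This produces the $\Lambda+\sqrt{\log\log\log T}$ factor, exactly as in the non-robust case.

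\textbf{Step 2: robust elimination properties.} On this event, write $g(x)=\min_{\delta\in\Delta_\xi(x)}f(x+\delta)$. Taking the minimum over $\delta$ of the pointwise sandwich $\mathrm{LCB}_i\le f\le \mathrm{UCB}_i$ gives
\[
\min_\delta \mathrm{LCB}_i(x+\delta)\le g(x)\le \min_\delta \mathrm{UCB}_i(x+\delta).
\]
Two consequences follow.
\begin{itemize}
\item The robust optimizer $x^*$ is never eliminated. This needs a short induction in which $x^*\in\mathcal{X}_i$ implies that the maximum perturbed LCB is at most $g(x^*)\le$ the perturbed UCB of $x^*$.
\item Every surviving $x\in\mathcal{X}_{i+1}$ satisfies $g(x^*)-g(x)\le 4\beta\max_{z\in\mathcal{X}_{\xi,i}}\sigma^i(z)$. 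To see this, pick $\delta'$ minimizing $\mathrm{LCB}_i(x+\cdot)$. Then $g(x)\ge \mathrm{UCB}_i(x+\delta')-2\beta\sigma^i(x+\delta')$ is at least the perturbed UCB of $x$ minus $2\beta\sigma_{\max}$, and similarly on the $x^*$ side. All the relevant points $x+\delta$ and $x^*+\delta$ lie in $\mathcal{X}_{\xi,i}$, which is precisely why the algorithm explores this enlarged set.
\end{itemize}

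\textbf{Step 3: variance decay.} Maximum-variance sampling on the fixed set $\mathcal{X}_{\xi,i}$ gives
\[
\max_{z\in\mathcal{X}_{\xi,i}}\sigma^i(z)\le \sqrt{C\gamma_{N_i}/N_i}.
\]
This is the standard argument (Proposition \ref{prop:property_bpe}): the posterior variance is monotone, and the sum of sampled variances is bounded by $O(\gamma_{N_i})$. It does not care which finite set is searched over. Hence every $x_t$ in batch $i+1$ has $r_\xi(x_t)\le 4\beta\sqrt{C\gamma_{N_i}/N_i}$, so
\[
R^{i+1}_\xi\le 4\beta N_{i+1}\sqrt{C\gamma_{N_i}/N_i},
\]
while the first batch is bounded trivially by $R^1_\xi\le 4\Psi N_1$.

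\textbf{Step 4: summation.} This step is reused verbatim from the proof of Theorem \ref{thm:upper_grow_b}. With $N_i\approx T^{1-a^i}$ we get
\[
N_{i+1}/\sqrt{N_i}\approx T^{1-a^{i+1}-(1-a^i)/2},
\]
and the key inequality (the paper's \eqref{eq:ub_dominate}) shows that, for $a>\tfrac12$ (or for $a>\tfrac{\nu}{2\nu+d}$ with a well-behaved $\overline\gamma_t=c't^{d/(2\nu+d)}\alpha_t$), these terms grow geometrically in the exponent. The sum is therefore dominated by a constant times the last term, which is $O(\sqrt{T\gamma_T})$ (respectively $O(\sqrt{T\overline\gamma_T})$) rather than $B$ times it. I would take this bound as already established, since it depends only on the batch sizes and on $\gamma$ and not on the robust structure.

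\textbf{Expected obstacle.} The main obstacle I expect is Step 2: making sure the elimination rule's perturbed UCB/LCB comparisons translate into a $g$-gap bound. This needs care because the minimizing perturbations differ between $\mathrm{UCB}_i$, $\mathrm{LCB}_i$ and $f$, and the argument only closes because the confidence width is controlled uniformly over all of $\mathcal{X}_{\xi,i}$, not just over $\mathcal{X}_i$.
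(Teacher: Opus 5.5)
Your proposal is correct and follows essentially the same route as the paper. The common steps are: valid confidence bounds via a union bound over $|\mathcal{X}|$ points and $B$ batches; non-elimination of the robust optimizer; a uniform bound $\sqrt{C\gamma_{N_i}/N_i}$ on the posterior standard deviation over the enlarged set $\mathcal{X}_{\xi,i}$ from maximum-variance sampling; a per-point $\xi$-regret bound for surviving points; and then the geometric-domination summation reused from Theorem \ref{thm:upper_grow_b}. The differences are minor and do not affect the final order: you use the LCB-minimizing perturbations for both $x$ and $x^*$, which gives constant $4$ instead of the $6$ in Lemma \ref{lem:aux_xi_regret}, and you bound the first batch by $|f|\le\Psi$ rather than by the prior-based confidence bound $2\sqrt{\beta}$ per round.
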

\begin{remark}
    Compared to Theorem \ref{thm:upper_grow_b}, Theorem \ref{thm:upper_robust} has the same order of regret, which implies the robust setting has similar minimax regret as the standard setting.
\end{remark}
\begin{remark}
    \label{rmk:robust_simple_regret}
    For the simple regret, it is straightforward to find $x^{(T)}$ such that $r_{\xi}(x^{(T)}) \leq R_{\xi,T}/T$ with high probability. Specifically, we can choose $x^{(T)} = x_T$,
    which guarantees $r_{\xi}(x^{(T)}) \leq R_{\xi,T}/T $ with probability at least $1-\delta$;
    this is because our upper bound on $r_{\xi}(x)$ is non-increasing over time (Lemma \ref{lem:aux_xi_regret}), and our upper bound on $R_{\xi,T}$ is obtain by aggregating these bounds across times $t=1,\ldots, T$.
    Then, from Theorem \ref{thm:upper_robust} we can upper bound the simple regret by $r_{\xi}(x^{(T)})=O^*(\sqrt{\gamma_T/T})$.
    By comparison, \citep{Bogunovic18} upper bounds the simple regret by $O^*(\gamma_T/\sqrt{T})$ with a fully sequential UCB-like algorithm.
    Thus, our bound removes an unnecessary $\sqrt{\gamma_T}$ factor in analogy with the developments of the standard setting outlined in Section \ref{sec:related}. 
\end{remark}
\begin{remark}
    Additionally, \citep{Cai21} presents lower bounds for simple regret in this robust setting, which are $r_{\xi}(x^{(T)}) = \Omega(\sqrt{(\log T)^{d/2}/T})$ for the SE kernel and $r_{\xi}(x^{(T)}) = \Omega(T^{\frac{-\nu}{2\nu+d}})$ for the Mat\'ern kernel. Substituting $\gamma_T$ in our upper bounds shows $r_{\xi}(x^{(T)}) = O^*(\sqrt{(\log T)^{d+1}/T})$ for the SE kernel, and $r_{\xi}(x^{(T)}) = O^*(T^{\frac{-\nu}{2\nu+d}}(\log T)^{\frac{\nu}{2\nu+d}})$ for the Mat\'ern kernel. It follows that our upper bounds match the lower bounds up to logarithmic factors.
\end{remark}

The full proof of Theorem \ref{thm:upper_robust} is given in Appendix \ref{app:proof_upper_robust}. While the proof builds on that of Theorem \ref{thm:upper_grow_b} in Appendix \ref{app:proof_thm_upper_grow_b} and that of Lemma \ref{lem:upper_grow_b_li} in \citep{Li22}, some of the key steps are substantially different.
The proof is sketched as follows.
First, by choosing the same parameter $\beta$ as the BPE algorithm (see Appendix \ref{app:bpe}),
we have valid confidence intervals $f(x)\in[\mathrm{LCB}_i(x),\mathrm{UCB}_i(x)]$ for all $x\in \mathcal{X}$ and all $i\in[B]$ with probability at least $1-\delta$ (Lemma \ref{lem:aux_conf}).
Then, we show that an $\xi$-robust optimal point will never be eliminated (Lemma \ref{lem:aux_robust_elimination}), and that at the end of the $(i-1)$-th batch, the posterior variances of points in $\mathcal{X}_{\xi, i}$ are uniformly upper bounded by $O(\sqrt{\gamma_{N_{i-1}}/N_{i-1}})$ (Lemma \ref{lem:aux_robust_posterior_variance}). 
With these two facts, it follows that $\xi$-regret in the $i$-th batch is upper bounded by $O(\sqrt{(\beta \gamma_{N_{i-1}})/ N_{i-1} } )$ (Lemma \ref{lem:aux_xi_regret}), which leads to $R_{\xi}^i= O (N_{i}\sqrt{(\beta \gamma_{N_{i-1}})/ N_{i-1} } ) $. 
The final steps are identical to those for Theorem \ref{thm:upper_grow_b}, where we show 
the upper bound of $R_{\xi,T}$ is of the same order as that of $R_{\xi}^B$.

\section{Conclusion}
In this paper, we have refined and extended regret bounds for batched kernelized bandits.  In particular, we established the optimal number of batches (to within $1+o(1)$) for near-optimal regret, removed a $B$ term in the regret bound of \citep{Li22}, derived a lower bound that holds even for adaptive batches, and gave an extension to the robust setting that leads to improved simple regret compared to \citep{Bogunovic18}.

\clearpage
\bibliographystyle{apalike}
\bibliography{references}

\clearpage
\appendix
\section{The BPE Algorithm}
\label{app:bpe}

In this section we present the BPE algorithm \citep{Li22} and some relevant facts.
First, we note that while the theorem is for $f$ (the black-box function to optimize) lying in some RKHS, the BPE algorithm models $f$ as though it were drawn from a Gaussian process (GP) with prior mean $\mu_0=0$ and kernel function $k(\cdot,\cdot)$.
For a sequence of sampled points $(x_1,\ldots, x_t)$ and their noisy observations $\mathbf{y}_t = (y_1,\ldots, y_t)$, the posterior distribution is also a GP, with posterior mean $\mu_t(\cdot)$ and posterior variance $\sigma_t(\cdot)^2$ given by:
\begin{align}
    \mu_t(x) & = \mathbf{k}_t(x)^T(\mathbf{K}_t+\lambda \mathbf{I}_t)^{-1}\mathbf{y}_t, \\
    \sigma_t(x)^2 & = k(x,x) - \mathbf{k}_t(x)^T(\mathbf{K}_t+\lambda \mathbf{I}_t)^{-1}\mathbf{k}_t(x),
\end{align}
where $\mathbf{k}_t(x) = [k(x_i,x)]_{i=1}^t $, $\mathbf{K}_t = [k(x_t,x_{t'})]_{t,t'} $, and $\lambda>0$ is a free parameter we choose to be $\sigma^2$ to simplify the expression for $\beta$ (which only affect constant factors).
Next, we introduce some notations used by BPE:
\begin{itemize}
    \item $\sigma^i_{j-1}(\cdot)^2$ is the GP posterior variance computed using only the first $j-1$ points sampled in the $i$-th batch, and we define $\sigma^i_{0}(\cdot)^2$ to be the prior variance $\sigma_0(\cdot)^2$. Moreover, $\sigma^i(\cdot)^2$ is the posterior variance computed using all $N_i$ points sampled in the $i$-th batch, and similarly $\mu^i(\cdot)$ is the posterior mean using those points. 
    \item $\mathrm{UCB}_i(x)$ and $\mathrm{LCB}_i(x)$ are upper and lower confidence bounds for the $i$-th batch, where
    \begin{align}
        \mathrm{UCB}_i(x) & = \mu^{i}(x) + \sqrt{\beta}\sigma^{i}(x), \\
        \mathrm{LCB}_i(x) & = \mu^{i}(x) - \sqrt{\beta}\sigma^{i}(x),
    \end{align}
    with $\beta = \bigl( \Psi + \sqrt{2 \log \frac{|\mathcal{X}| B}{\delta}} \bigr)^2$, where $\delta \in (0,1)$.     
\end{itemize}
We present the BPE algorithm in Algorithm \ref{algo:bpe}. Note that we do not need the time horizon $T$ as an algorithmic input, because such information is already contained in the batch sizes ($\sum_{i=1}^B N_i=T$).
\begin{algorithm}[!h]
\caption{Batched Pure Exploration (BPE) for finite domain \label{algo:bpe}}
\begin{algorithmic}[1]
    \REQUIRE Finite domain $\mathcal{X}$, GP prior $(\mu_0, \sigma_0)$, number of batches $B$, batch sizes $N_1, ..., N_B$
    \STATE $\mathcal{X}_1 \gets \mathcal{X}$
    \FOR{$i \gets 1, \dots, B$} 
        \STATE $\mathcal{S}_i\gets\emptyset$
        \FOR{$j \gets 1, 2, \dots, N_i$}
            \STATE Compute $\sigma^i_{j-1}$ only based on $\mathcal{S}_i$
            \STATE $x_t \gets \argmax_{x\in \mathcal{X}_{i}} \sigma^i_{j-1}(x)$
            \STATE $\mathcal{S}_i\gets \mathcal{S}_i\cup\{x_t\}$
        \ENDFOR
        \STATE Observe all points in $\mathcal{S}_i$
        \STATE Compute $\mu^{i}$ and $\sigma^{i}$ only based on $\mathcal{S}_i$
        \STATE Compute $\mathrm{UCB}_i(x)$ and $\mathrm{LCB}_i(x)$ for $x\in\mathcal{X}_i$
        \STATE $\mathcal{X}_{i+1} \gets \{ x \in \mathcal{X}_{i}: \mathrm{UCB}_i(x) \geq \max_{x\in\mathcal{X}_i} \mathrm{LCB}_i(x) \}$
    \ENDFOR   
\end{algorithmic}
\end{algorithm}

Next we present some properties of the BPE algorithm. The first property relies on the following confidence bound.
\begin{lemma} 
    \emph{\citep[Theorem 1]{Vakili21a}}
    \label{lem:aux_conf}
    Let $f$ be a function with RKHS norm at most $\Psi$, and let ${\mu}_t(x)$ and ${\sigma}_t(x)^2$ be the posterior mean and variance based on $t$ points $\mathrm{X} = (x_1,\dotsc,x_t)$ with observations $\mathrm{y} = (y_1,\dotsc,y_t)$.  Moreover, suppose that the $t$ points in $\mathrm{X}$ are chosen independently of all samples in $\mathrm{y}$.  Then, for any fixed $x\in \mathcal{X}$ and any $\delta \in (0,1)$, it holds with probability at least $1-\delta$ that $|f(x)-\mu_t(x)|\leq\sqrt{\beta(\delta)}\sigma_t(x)$, where $\beta(\delta) = \bigl(\Psi + \sqrt{2 \log \frac{1}{\delta}}\bigr)^2$.
\end{lemma}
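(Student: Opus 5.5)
The plan is to decompose the estimation error into a deterministic bias term, controlled by the RKHS norm, and a stochastic noise term, controlled by Gaussian concentration. Write $\alpha(x) = (\mathbf{K}_t + \lambda \mathbf{I}_t)^{-1}\mathbf{k}_t(x)$, $\mathbf{f}_t = (f(x_1),\dotsc,f(x_t))$ and $\mathbf{z}_t = (z_1,\dotsc,z_t)$. Since $\mathrm{y} = \mathbf{f}_t + \mathbf{z}_t$, we get
\begin{align*}
    \mu_t(x) - f(x) = \bigl(\alpha(x)^T\mathbf{f}_t - f(x)\bigr) + \alpha(x)^T\mathbf{z}_t .
\end{align*}
I will show that the first term is at most $\Psi\sigma_t(x)$ in absolute value, and that the second term is at most $\sqrt{2\log(1/\delta)}\,\sigma_t(x)$ with probability at least $1-\delta$. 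Adding the two bounds gives the stated $\sqrt{\beta(\delta)}$.

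\textbf{Bias term.} By the reproducing property, $\alpha(x)^T\mathbf{f}_t - f(x) = \langle f, g_x\rangle_k$ with $g_x = \sum_s \alpha_s(x) k(x_s,\cdot) - k(x,\cdot)$. Cauchy--Schwarz then gives $|\langle f,g_x\rangle_k| \le \Psi \lVert g_x\rVert_k$. Expanding the norm,
\begin{align*}
    \lVert g_x\rVert_k^2 = k(x,x) - 2\alpha^T\mathbf{k}_t(x) + \alpha^T\mathbf{K}_t\alpha ,
\end{align*}
and using $\mathbf{k}_t(x) = (\mathbf{K}_t+\lambda\mathbf{I}_t)\alpha$ this rearranges to the identity
\begin{align*}
    \lVert g_x\rVert_k^2 + \lambda\lVert\alpha\rVert_2^2 = k(x,x) - \mathbf{k}_t(x)^T(\mathbf{K}_t+\lambda \mathbf{I}_t)^{-1}\mathbf{k}_t(x) = \sigma_t(x)^2 .
\end{align*}
Hence $\lVert g_x\rVert_k \le \sigma_t(x)$, and the bias is at most $\Psi\sigma_t(x)$. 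This identity is the key algebraic step.

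\textbf{Noise term.} Here the hypothesis that $\mathrm{X}$ is chosen independently of the samples in $\mathrm{y}$ is essential. Conditionally on $\mathrm{X}$, the vector $\alpha(x)$ is deterministic while the $z_s$ remain i.i.d.\ $N(0,\sigma^2)$, or $\sigma$-sub-Gaussian in the generalization. Thus $\alpha(x)^T\mathbf{z}_t$ is (sub-)Gaussian with variance proxy $\sigma^2\lVert\alpha\rVert_2^2$. Taking $\lambda = \sigma^2$, the same identity gives $\sigma^2\lVert\alpha\rVert_2^2 \le \sigma_t(x)^2$. The standard tail bound then yields $\alpha^T\mathbf{z}_t \le \sqrt{2\log(1/\delta)}\,\sigma_t(x)$ with probability at least $1-\delta$; the lower tail follows by symmetry. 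Conditioning on $\mathrm{X}$ and then averaging preserves the probability bound.

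\textbf{Main obstacle.} The only delicate point is bookkeeping of the confidence level. A literal two-sided bound costs a factor of two inside the logarithm, giving $\log(2/\delta)$. I would either state and prove the one-sided version in each direction, as in \citep{Vakili21a}, or absorb the factor into $\delta$; this only changes constants and is harmless downstream because $\beta$ already includes a union bound over $|\mathcal{X}|B$. Everything else is routine once the identity $\lVert g_x\rVert_k^2 + \lambda\lVert\alpha\rVert_2^2 = \sigma_t(x)^2$ is established.
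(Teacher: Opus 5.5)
Your proposal is correct and is essentially the standard argument behind \citep[Theorem 1]{Vakili21a}, which the paper cites without reproducing it. That argument splits $\mu_t(x)-f(x)$ into an RKHS bias term and a noise term, and controls both through the identity $\lVert g_x\rVert_k^2+\lambda\lVert\alpha\rVert_2^2=\sigma_t(x)^2$ with $\lambda=\sigma^2$. On your ``main obstacle'': for the Gaussian noise assumed in the statement no factor of two is lost, because $\mathbb{P}(|Z|\ge u)\le e^{-u^2/2}$ for $Z\sim N(0,1)$ already gives the two-sided bound with exactly $\sqrt{2\log(1/\delta)}$; only the sub-Gaussian generalization needs $\log(2/\delta)$ or the one-sided formulation.
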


\begin{proposition}
    \label{prop:bpe_confidence_bounds}
    For any $\delta \in (0,1)$, running the BPE algorithm with $\beta = \bigl( \Psi + \sqrt{2 \log \frac{|\mathcal{X}| B}{\delta}} \bigr)^2$, it holds with probability least $1-\delta$ that $f(x)\in[\mathrm{LCB}_i(x),\mathrm{UCB}_i(x)]$ for all $x\in\mathcal{X}$ and for all $i\in[B]$, i.e., we have valid confidence bounds in all batches.
\end{proposition}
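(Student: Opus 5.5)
The plan is to combine Lemma~\ref{lem:aux_conf} with a union bound over all points in $\mathcal{X}$ and all $B$ batches. The crucial structural fact is that BPE resets its data at the start of each batch. In batch $i$, the points $x_t$ are chosen by maximizing $\sigma^i_{j-1}$, and $\sigma^i_{j-1}$ depends only on $\mathcal{S}_i$ and not on any observations. The candidate set $\mathcal{X}_i$, however, does depend on observations from batches $1,\dots,i-1$. So I will condition on the history $\mathcal{H}_{i-1}$ (all observations up to the end of batch $i-1$).

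Given $\mathcal{H}_{i-1}$, the set $\mathcal{X}_i$ is fixed, and hence so is the sequence $\mathcal{S}_i$ of batch-$i$ query points (ties broken by a fixed rule). The batch-$i$ noise variables $z_t$, for $t\in(t_{i-1},t_i]$, are independent of $\mathcal{H}_{i-1}$, since the noise is i.i.d. Therefore, conditionally on $\mathcal{H}_{i-1}$, the points $\mathrm{X}=\mathcal{S}_i$ are chosen independently of the samples $\mathrm{y}$ used to form $\mu^i$ and $\sigma^i$. This is exactly the hypothesis of Lemma~\ref{lem:aux_conf}. For batch sizes fixed in advance, $N_i$ is also deterministic, so no issue arises there.

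Fix $i\in[B]$ and $x\in\mathcal{X}$, and apply Lemma~\ref{lem:aux_conf} conditionally with confidence parameter $\delta' = \delta/(|\mathcal{X}|B)$. This gives
\[
\mathbb{P}\bigl(|f(x)-\mu^i(x)|>\sqrt{\beta}\,\sigma^i(x)\,\big|\,\mathcal{H}_{i-1}\bigr)\le \frac{\delta}{|\mathcal{X}|B},
\]
because $\beta(\delta')=\bigl(\Psi+\sqrt{2\log\frac{|\mathcal{X}|B}{\delta}}\bigr)^2=\beta$. Taking expectation over $\mathcal{H}_{i-1}$ removes the conditioning. A union bound over the $|\mathcal{X}|B$ pairs $(x,i)$ then gives total failure probability at most $\delta$. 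On the complementary event we have $\mathrm{LCB}_i(x)\le f(x)\le \mathrm{UCB}_i(x)$ for every $x\in\mathcal{X}$ and every $i\in[B]$, as claimed. The statement covers all $x\in\mathcal{X}$, not just $x\in\mathcal{X}_i$, and this is harmless because $\mu^i$ and $\sigma^i$ are defined on all of $\mathcal{X}$.

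The main obstacle is justifying the conditional use of Lemma~\ref{lem:aux_conf}. That lemma assumes the design is independent of the observations, but the design in batch $i$ is random through $\mathcal{X}_i$. The argument rests on the posterior in batch $i$ using only batch-$i$ data, which is a fresh, independent Gaussian noise vector given the past. If the algorithm randomized its tie-breaking, I would also condition on that internal randomness, which is likewise independent of the batch-$i$ noise. Finally, if $B$ is determined implicitly by the batch-size rule (footnote~\ref{fn:handle_extra_batches}), it is still deterministic given $T$, so the union bound over $B$ batches is legitimate.
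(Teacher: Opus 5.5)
Your proposal is correct and matches the paper's proof, which applies Lemma~\ref{lem:aux_conf} with confidence parameter $\delta/(|\mathcal{X}|B)$ and takes a union bound over all $|\mathcal{X}|$ points and all $B$ batches. Your conditioning on the history $\mathcal{H}_{i-1}$ spells out the independence requirement that the paper leaves implicit: within a batch, BPE's queries depend only on $\mathcal{X}_i$ and not on that batch's observations.
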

\begin{proof}
    The result follows by using Lemma \ref{lem:aux_conf} with a union bound over all $\lvert \mathcal{X}\rvert$ points and all $B$ batches. 
\end{proof}

\begin{proposition}
    \label{prop:property_bpe}
    Assuming valid confidence bounds in all batches,
    running the BPE algorithm yields that
    \begin{enumerate}
        \item $R^1 \leq 2 N_1\sqrt{\beta}$;
        \item For $2 \leq i \leq B$, $R^i \leq 2\sqrt{C_1\beta} \cdot N_i\sqrt{\gamma_{N_{i-1}}/N_{i-1}} $;
    \end{enumerate}
    where $C_1 = \frac{8}{\log(1+\sigma^{-2})}$.
\end{proposition}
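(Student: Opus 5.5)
For the first item, I would note that the first batch starts from the prior, so no elimination has happened yet. Under valid confidence bounds, every $x\in\mathcal{X}$ satisfies $|f(x)|\le \lVert f\rVert_k\sqrt{k(x,x)}\le \Psi\le\sqrt{\beta}$ by the reproducing property and Cauchy--Schwarz. Hence each instantaneous regret obeys $r_t=f(x^*)-f(x_t)\le 2\Psi\le 2\sqrt{\beta}$. Summing over the $N_1$ rounds gives $R^1\le 2N_1\sqrt{\beta}$.

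For the second item ($2\le i\le B$), I would proceed in three steps.

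\emph{Step 1 (the maximizer survives).} For every $x\in\mathcal{X}_j$ we have $\mathrm{UCB}_j(x^*)\ge f(x^*)\ge f(x)\ge \mathrm{LCB}_j(x)$. So $x^*$ is never eliminated, and by induction $x^*\in\mathcal{X}_i$ for all $i$.

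\emph{Step 2 (uniform variance bound).} I would show that at the end of batch $i-1$, $\max_{x\in\mathcal{X}_{i-1}}\sigma^{i-1}(x)\le \sqrt{C_1\gamma_{N_{i-1}}/N_{i-1}}$. Since points are chosen by maximum posterior variance within $\mathcal{X}_{i-1}$, and posterior variance only decreases as points are added, for every $x\in\mathcal{X}_{i-1}$ and every $j\le N_{i-1}$ we have $\sigma^{i-1}(x)\le \sigma^{i-1}_{j-1}(x)\le \sigma^{i-1}_{j-1}(x_{t_j})$, where $x_{t_j}$ is the $j$-th point of the batch. Averaging over $j$ gives
\begin{align*}
\sigma^{i-1}(x)^2\le \frac{1}{N_{i-1}}\sum_{j=1}^{N_{i-1}}\sigma^{i-1}_{j-1}(x_{t_j})^2 .
\end{align*}
I would then apply the standard information-gain bound $\sum_j\sigma_{j-1}(x_j)^2\le \frac{2}{\log(1+\sigma^{-2})}\gamma_{N_{i-1}}$ from \citep{Srinivas10}, which uses $\lambda=\sigma^2$ and $k\le1$. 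This gives $\sigma^{i-1}(x)^2\le C_1\gamma_{N_{i-1}}/N_{i-1}$, with room to spare in the constant $C_1=8/\log(1+\sigma^{-2})$.

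\emph{Step 3 (gap of survivors).} Take any $x\in\mathcal{X}_i$. The elimination rule gives $\mathrm{UCB}_{i-1}(x)\ge \mathrm{LCB}_{i-1}(x^*)$. Combining this with the confidence bounds,
\begin{align*}
f(x^*)-f(x)\le \mathrm{UCB}_{i-1}(x^*)-\mathrm{LCB}_{i-1}(x) \le \bigl(\mathrm{UCB}_{i-1}(x^*)-\mathrm{LCB}_{i-1}(x^*)\bigr)+\bigl(\mathrm{UCB}_{i-1}(x)-\mathrm{LCB}_{i-1}(x)\bigr),
\end{align*}
where the second inequality adds $\mathrm{UCB}_{i-1}(x)-\mathrm{LCB}_{i-1}(x^*)\ge0$. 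The right-hand side equals $2\sqrt{\beta}\bigl(\sigma^{i-1}(x^*)+\sigma^{i-1}(x)\bigr)$. Both $x$ and $x^*$ lie in $\mathcal{X}_{i-1}$, so Step 2 gives $r_t\le 4\sqrt{\beta C_1\gamma_{N_{i-1}}/N_{i-1}}$. This exceeds the claimed bound by a factor of $2$. To recover the stated constant, I would either tighten Step 2 by a factor of $4$ in the variance, which the slack in $C_1$ permits, or absorb the factor into $C_1$. Finally, every $x_t$ in batch $i$ lies in $\mathcal{X}_i$, so summing over its $N_i$ rounds gives the stated bound on $R^i$.

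\emph{Main obstacle.} Step 2 is the only step needing care. One must check that the monotonicity argument is valid when the maximization is restricted to $\mathcal{X}_{i-1}$, and that $x^*$ and $x$ are both in that set, which Steps 1 and 3 ensure. One must also track constants so that they land on $C_1$ exactly.
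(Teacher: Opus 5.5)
Your proof is correct and is essentially the argument the paper imports from \citep{Li22} (its eq.~(7) for the first batch and Lemma~4 for later batches): maximum-variance sampling plus the information-gain bound controls $\sigma^{i-1}$ uniformly on $\mathcal{X}_{i-1}$, and any surviving point has gap at most $2\sqrt{\beta}\bigl(\sigma^{i-1}(x)+\sigma^{i-1}(x^*)\bigr)$. Commit to the first of your two fixes: since $\sum_j \sigma^{i-1}_{j-1}(x_{t_j})^2 \le \frac{C_1}{4}\gamma_{N_{i-1}}$ exactly, you get $\sigma^{i-1}(x)\le \frac{1}{2}\sqrt{C_1\gamma_{N_{i-1}}/N_{i-1}}$, which gives exactly the stated constant, whereas ``absorbing the factor into $C_1$'' would prove a different statement.
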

\begin{proof}
    The first point is equation (7) in \citep{Li22}.
    The second point follows from Lemma 4 in \citep{Li22}, which states that for any $x\in\mathcal{X}_i$ where $2\leq i\leq B$, we have $f(x^*) - f(x) \leq 2 \sqrt{\frac{C_1\gamma_{N_{i-1}}\beta}{N_{i-1}}}$, which implies
    $R^i \leq  2N_i \sqrt{\frac{C_1\gamma_{N_{i-1}}\beta}{N_{i-1}}}$.
\end{proof}

\section{Proof of Main Results}
\label{app:proof}
\subsection{Proof of Theorem \ref{thm:upper_grow_b} (Standard Setting Upper Bounds)}
\label{app:proof_thm_upper_grow_b}

We first present the properties of the batch sizes defined in \eqref{eq:grow_b_batch_sizes}, then present the regret analysis.

\begin{proposition}
    \label{prop:grow_b_batch_sizes_property}
    Consider the batch sizes $N_i = \min \bigl\{\lceil T^{1-a^i} \rceil, T-\sum_{j=1}^{i-1}N_j\bigr\}$ where $a\in(0,1)$ and $i\in [B]$. 
    Then we have $B = \bigl(\log_{1/a}\log T\bigr) (1+o(1)) $.
    Specifically,
    \begin{align}
        B & \leq \lceil \log_{1/a}\log_2 T \rceil + 1, \label{eq:B_upper_bound} \\
        B & \geq \Bigl\lfloor \log_{1/a}\Bigl(\frac{\log T}{\log\log T}\Bigr) \Bigr\rfloor. \label{eq:B_lower_bound}
    \end{align}
\end{proposition}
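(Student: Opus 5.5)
The plan is to track the cumulative sum $t_i=\sum_{j\le i}N_j$ and determine when it first reaches $T$; by the convention in the footnote, $B$ is exactly that index. Since $N_i$ equals $\lceil T^{1-a^i}\rceil$ until the last batch, which is truncated to $T-t_{i-1}$, we have $B=\min\{i: \sum_{j=1}^{i}\lceil T^{1-a^j}\rceil\ge T\}$. Each inequality in \eqref{eq:B_upper_bound}--\eqref{eq:B_lower_bound} then reduces to comparing $T^{1-a^i}$ with $T$ for a suitable $i$. The claimed asymptotics $B=(\log_{1/a}\log T)(1+o(1))$ follow from the two bounds, because $\log_{1/a}(\log T/\log\log T)=\log_{1/a}\log T-\log_{1/a}\log\log T=(\log_{1/a}\log T)(1-o(1))$, and $\log_2 T$ differs from $\log T$ only by a constant factor, which becomes an additive $O(1)$ inside $\log_{1/a}$.

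\textbf{Upper bound.} Let $i^*=\lceil\log_{1/a}\log_2 T\rceil$, so that $a^{i^*}\le 1/\log_2 T$. Then $T^{a^{i^*}}\le T^{1/\log_2 T}=2$, and hence $T^{1-a^{i^*}}=T/T^{a^{i^*}}\ge T/2$. Consequently $N_{i^*}\ge T/2$ unless the process has already ended, and the same bound holds for $N_{i^*+1}$ since the terms increase. Two consecutive untruncated batches therefore sum to at least $T$, so the process stops by index $i^*+1$. This gives $B\le i^*+1$. A small edge case arises when $i^*=0$ (i.e.\ $T\le 2$); it is handled directly.

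\textbf{Lower bound.} Let $m=\lfloor\log_{1/a}(\log T/\log\log T)\rfloor$. It suffices to show $t_{m-1}<T$, since then at least $m$ batches are needed. For $j\le m-1$ we have $a^{j}\ge a^{m-1}\ge a^{m}\cdot(1/a)>\log\log T/\log T$, and in fact it is enough to use $a^j\ge a^m\ge \log\log T/\log T$. This gives $T^{1-a^j}\le T/T^{\log\log T/\log T}=T/\log T$. Therefore
\[
t_{m-1}\le\sum_{j=1}^{m-1}\bigl(T^{1-a^j}+1\bigr)\le (m-1)\Bigl(\frac{T}{\log T}+1\Bigr).
\]
Since $m=O(\log\log T)$, the right-hand side is $O(T\log\log T/\log T)+O(\log\log T)<T$ for large $T$. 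I expect this step to be the main obstacle. The bound is only asymptotic, and a crude estimate gives $(m-1)/\log T<1$, which does hold because $m\le\log_{1/a}\log T\ll\log T$. For small $T$, the right-hand side of \eqref{eq:B_lower_bound} is at most $1\le B$, or is nonpositive, so the inequality is trivial there. I would check carefully that the ceiling terms do not break the inequality in the intermediate range, using the exact condition $(m-1)(T/\log T+1)<T$, which holds whenever $m-1<\log T/2$, say.
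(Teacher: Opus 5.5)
Your proof follows the paper's argument on both halves. For the upper bound, $T^{1-a^{i^*}}\ge T/2$, so two consecutive untruncated batches exhaust $T$. For the lower bound, $a^m\ge \log\log T/\log T$ gives $T^{1-a^j}\le T/\log T$, so the claim reduces to $m$ being at most a constant fraction of $\log T$. The paper writes this last condition as $a^m\log T\ge\log(2m)$, after using $\lceil x\rceil\le 2x$ instead of your $+1$.

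One correction: your remark that the small-$T$ case is trivial is false. As $T\downarrow e$, the ratio $\log T/\log\log T\to\infty$. For example, with natural logarithms, $T=3$ and $a=0.9$ give a right-hand side of $23$ in \eqref{eq:B_lower_bound}, while $B=2$. So the lower bound, as in the paper's own proof, only holds for $T$ sufficiently large depending on $a$. That is exactly what your main argument establishes, so simply drop the small-$T$ claim.
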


\begin{proof}
    \textbf{Proof of \eqref{eq:B_upper_bound}.} 
    Define 
    $\tilde{B} = \lceil \log_{1/a}\log_2 T \rceil$. Then,
    \begin{align}
        \lceil T^{1-a^{\tilde{B}}} \rceil
        \geq T^{1-a^{\tilde{B}}}
        \geq T^{1-a^{\log_{1/a}\log_2 T}}
        = \frac{T}{2},
    \end{align}
    which means
    \begin{align}
        \sum_{i=1}^{\tilde{B}+1} \lceil T^{1-a^{i}} \rceil 
        \geq \lceil T^{1-a^{\tilde{B}}} \rceil + \lceil T^{1-a^{\tilde{B}+1}} \rceil 
        \geq T,
    \end{align}
    which implies $B\leq \tilde{B}+1 = \lceil \log_{1/a}\log_2 T \rceil + 1 $.
    
    \textbf{Proof of \eqref{eq:B_lower_bound}.} We drop the ceiling operation by $\lceil T^{1-a^i} \rceil \leq 2 T^{1-a^i}$, and upper bound $N_{B} $ by $\lceil T^{1-a^{B}} \rceil$.
    Notice $T^{1-a^i} \leq T^{1-a^{B}} $, so $2 \sum_{i=1}^{B} T^{1-a^i} \leq 2 B T^{1-a^{B}}  $. 
    Then, we will find $B$ such that $2 B T^{1-a^{B}} \leq T$, which by taking the logarithm on each side is equivalent to
    \begin{align}
        a^{B}\log T\geq \log 2B. \label{eq:lower_B}
    \end{align}
    Now we claim that $B = \bigl\lfloor \log_{1/a}\bigl(\frac{\log T}{\log\log T}\bigr) \bigr\rfloor$ satisfies \eqref{eq:lower_B}. To ease notation, we let $B' = \log_{1/a}\bigl(\frac{\log T}{\log\log T}\bigr) \geq B$. Then, substituting $B$, the LHS of \eqref{eq:lower_B} becomes
    \begin{align}
        a^{B}\log T
        \geq a^{B'}\log T
        = (1/a)^{-B'}\log T
        = \frac{\log \log T}{\log T}\cdot \log T
        = \log\log T,
    \end{align}
    and the RHS of \eqref{eq:lower_B} becomes
    \begin{align}
        \log 2B
        \leq \log 2B'
        & = \log 2 + \log\log_{1/a}\Bigl(\frac{\log T}{\log\log T}\Bigr) \\
        & = \log 2 + \log \frac{\log\log T - \log\log\log T}{\log (1/a)} \\
        & = \log \frac{2}{\log(1/a)} + \log (\log\log T - \log\log\log T) \\
        & \leq \log\log\log T + O(1),
    \end{align}
    which is smaller than $\log\log T$ for sufficiently large $T$. 
\end{proof}

From Remark \ref{rmk:choose_a_and_more} stated after Theorem \ref{thm:upper_grow_b} we know that it is desirable to choose small $a$. For the SE kernel, we want $a>\frac{1}{2}$ and be close to $\frac{1}{2}$, which means $\log \frac{2}{\log(1/a)}$ is close to $\log2 - \log\log 2 < 1.06 $, which is a very small constant (and \eqref{eq:lower_B} is satisfied for $T\geq 3$). For the Mat\'ern kernel, $\log \frac{2}{\log(1/a)}$ is close to $\log 2 - \log\log (2+\frac{d}{\nu})$, which is smaller. This means the conditions for \eqref{eq:lower_B} to hold are very mild.

\textbf{Proof of Theorem \ref{thm:upper_grow_b}.} 
We first give the proof for both kernels when $a\in(\frac{1}{2}, 1)$, then give the proof for the Mat\'ern kernel when $a\in (\frac{\nu}{2\nu+d}, \frac{1}{2}] $.

\emph{Case $1$: Both kernels, $a\in(\frac{1}{2}, 1)$.}
For the first batch, we have
\begin{align}
    R^1 
    \leq 2N_1\sqrt{\beta}
    \leq 4T^{1-a}\sqrt{\beta}
    = O\bigl(T^{1-a} \sqrt{\beta}\bigr),
\end{align}
where the first inequality follows from Proposition \ref{prop:property_bpe}, and the second follows since
$N_1 \leq \lceil T^{1-a^i} \rceil \leq 2T^{1-a^i} $.
For any remaining batches where $i = 2,\ldots, B$, 
by the choice of batch sizes in \eqref{eq:grow_b_batch_sizes}, we have $N_{i} \leq \lceil T^{1-a^{i}} \rceil \leq 2T^{1-a^{i}}$ and $N_{i-1} = \lceil T^{1-a^{i-1}} \rceil \geq T^{1-a^{i-1}} $.
Then, 
\begin{align}
    R^i
    & \leq 2\sqrt{C_1\beta} N_i(N_{i-1})^{-\frac{1}{2}}(\gamma_{N_{i-1}})^{\frac{1}{2}} \label{eq:use_bpe_property} \\
    & \leq 4\sqrt{C_1\beta} T^{1-a^i} (T^{1-a^{i-1}})^{-\frac{1}{2}}(\gamma_{N_{i-1}})^{\frac{1}{2}} \\
    & \leq 4\sqrt{C_1\beta} T^{1-a^i} (T^{1-a^{i-1}})^{-\frac{1}{2}}(\gamma_T)^{\frac{1}{2}} \label{eq:use_gamma_t_mon} \\
    & = O\Bigl( T^{a^{i-1}(\frac{1}{2}-a)}  \sqrt{\beta T\gamma_T} \Bigr),
\end{align}
where \eqref{eq:use_bpe_property} uses Proposition \ref{prop:property_bpe}
and \eqref{eq:use_gamma_t_mon} follows since $\gamma_t$ is increasing. 
Now, for every $i\in[B]$ we define $U_i= T^{a^{i-1}(\frac{1}{2}-a)}  \sqrt{\beta T\gamma_T} $ so that $R^i=O(U_i)$. Notice that for $2\leq i \leq B$, we have
\begin{align}
    \frac{U_{i}}{U_{i-1}}
    = \frac{T^{a^{i-1}(\frac{1}{2}-a)} }{T^{a^{i-2}(\frac{1}{2}-a)}}
    = T^{a^{i-2}(a-1)(\frac{1}{2}-a)}
    := T^{a^{j}b},
\end{align}
where $j=i-2$ and $b = (a-1)(\frac{1}{2}-a)\in (0,\frac{1}{16}]$ (since $a \in \big(\frac{1}{2},1\big)$). Recall that $B \leq \lceil\log_{1/a}\log_2T \rceil+1 $ by Proposition \ref{prop:grow_b_batch_sizes_property}, so $j \leq \lceil\log_{1/a}\log_2T \rceil -1 \leq \log_{1/a}\log_2T$. This means $a^j \geq (\log_2 T)^{-1} $, which implies $ U_i/U_{i-1} =  T^{a^{j}b} = 2^{a^{j}b \log_2 T} \geq 2^{b}$. Notice that $2^b > 1$ as $b$ is fixed and strictly positive, which means the ratio $U_i/U_{i-1} $ increases exponentially with batch index $i$, and thus $U_B$ dominates the other terms:
\begin{align}
    \sum_{i=1}^B U_i 
    \leq U_B \sum_{i=1}^B 2^{-b(B-i)}
    = U_B \sum_{k=0}^{B-1} 2^{-bk}
    \leq U_B \sum_{k=0}^{\infty} 2^{-bk}
    = \frac{2^b}{2^b-1}U_B . \label{eq:ub_dominate}
\end{align}

To upper bound $U_B$, recall from Proposition \ref{prop:grow_b_batch_sizes_property} that $B \leq \lceil \log_{1/a}\log_2 T \rceil + 1$, which means
\begin{align}
    T^{a^{B-1}(\frac{1}{2}-a)}
    \leq T^{a^{\lceil \log_{1/a}\log_2 T \rceil}(\frac{1}{2}-a)}
    \leq T^{a^{2 \log_{1/a}\log_2 T}(\frac{1}{2}-a)}
    = T^{(\log_2 T)^{-2}(\frac{1}{2}-a)}
    = 2^{(\frac{1}{2}-a) (\log_2 T)^{-1}}
    = \Theta(1), \label{eq:ub_bound}
\end{align}
where in the first two inequalities we \emph{upper bound} the exponent to $a$ because $a \in \big(\frac{1}{2},1\big)$ and $\frac{1}{2}-a < 0$.
Also notice that from the choice of $\beta$ in Proposition \ref{prop:bpe_confidence_bounds} we have
\begin{align}
    \sqrt{\beta} 
    = \Psi + \sqrt{2\log\frac{\lvert\mathcal{X}\rvert B}{\delta}} 
    = \Theta\biggl(\Psi + \sqrt{\log\frac{\lvert\mathcal{X}\rvert}{\delta}}+\sqrt{\log\log\log T}\biggr)
    = \Theta \bigl(\Lambda + \sqrt{\log \log\log T}\bigr), \label{eq:beta_bound}
\end{align}
where $\Lambda=\Psi+\sqrt{\log(|\mathcal{X}|/\delta)}$.
Finally, combining the bounds in \eqref{eq:ub_dominate}, \eqref{eq:ub_bound}, \eqref{eq:beta_bound}, we have
\begin{align}
    R_T 
    & = \sum_{i=1}^B R^i
    = \sum_{i=1}^B O(U_i)
    = O(U_B) \\
    & = O\bigl( T^{a^{B-1}(\frac{1}{2}-a)}  \sqrt{\beta T\gamma_T} \bigr) \\
    & = O\Bigl( \sqrt{\beta T\gamma_T} \Bigr) \\
    & = O\Bigl(\bigl(\Lambda + \sqrt{\log \log\log T}\bigr) \sqrt{T\gamma_T}\Bigr).
\end{align}

\emph{Case $2$: Mat\'ern kernel, $a\in (\frac{\nu}{2\nu+d}, \frac{1}{2}] $.}
To simplify notation, we define $\eta = \frac{\nu}{2\nu+d}$. We generalize the sufficient condition on $\overline{\gamma}_t$ from the theorem statement as follows.
\begin{definition}
    \label{def:gamma_t}
    Given our choice of batch sizes $N_1,N_2,\dotsc,N_B$ (dependent on the time horizon $T$), we let $\overline{\gamma}_t$ denote any sequence that serves as a ``well-behaved upper bound'' on $\gamma_t$ in the following sense:
    \begin{itemize}
        \item $\overline{\gamma}_t = c' \cdot t^{\frac{d}{2\nu+d}} \cdot\alpha_t$ for some constant $c'$ and function $\alpha_t$, where $\alpha_t$ is any function that satisfies $\alpha_{N_i}= O(\alpha_T)$ for all $i\in[B]$.
        \item $\overline{\gamma}_t$ upper bounds $\gamma_t$ for all sufficiently large $t$.
    \end{itemize}
\end{definition}

\begin{remark}
    \label{rmk:near-optimal}
    We note the following:
    \begin{itemize}
        \item The first of the above conditions is satisfied if $\alpha_t$ a non-decreasing sequence, as stated in Theorem \ref{thm:upper_grow_b}.  However, the condition stated above is strictly more general; for example, it remains true if $\alpha_t$ takes the form $\alpha_t = (\log t)^{c''} $ for some constant $c''$ (possibly with $c'' < 0$, in which case $\alpha_t$ is decreasing rather than increasing).  Based on these examples and existing bounds such as \eqref{gamma_t_mat}, we believe that $\gamma_t$ itself could satisfy the above condition and thus permit us to set $\overline{\gamma}_t = \gamma_t$, but this appears to be difficult to prove.

        \item     Recall that $\gamma_t = O \bigl( t^{\frac{d}{2\nu+d}} \left( \log t \right)^{\frac{2\nu}{2\nu +d}} \bigr)$ for the Mat\'ern kernel in \eqref{gamma_t_mat}, which means $\alpha_t$ in Definition~\ref{def:gamma_t} is at most $(\log t)^{\frac{2\nu}{2\nu+d}}$. In addition, on the domain $\mathcal{X} = [0,1]^d$, the upper bound is $O^*\bigl(\sqrt{T\gamma_T}\bigr)$ (\cite{Li22}, Sec. 3.1) and the lower bound is $\Omega\bigl(T^{\frac{\nu+d}{2\nu+d}}\bigr)$ (\cite{Scarlett17}, Theorem 2), which implies $\gamma_T = \Omega(T^{\frac{d}{2\nu+d}} (\log T)^{-c})$, where for some dimension-independent constant $c$, and consequently implies $\alpha_T = \Omega\bigl((\log T)^{-c} \bigr)$.
    This means that even using these crude bounds, the difference between $\overline{\gamma}_T$ and $\gamma_T$ at most amounts to multiplication or division by $(\log T)^{c + \frac{2\nu}{2\nu+d}} \leq (\log T)^{c+ 1}$, meaning the two give the same behavior in the $O^*(\cdot)$ sense. 
    \end{itemize}

\end{remark}

For the first batch, we still have $R^1 = O\bigl(T^{1-a} \sqrt{\beta}\bigr)$ by Proposition \ref{prop:property_bpe}. For any remaining batches where $2\leq i \leq B$, we upper bound $\gamma_t$ by $\overline{\gamma}_t$, and proceed as follows:
\begin{align}
    R^i
    & \leq 2\sqrt{C_1\beta} N_i(N_{i-1})^{-\frac{1}{2}}(\gamma_{N_{i-1}})^{\frac{1}{2}} \label{eq:use_prop_2} \\
    & \leq 2\sqrt{C_1\beta} N_i(N_{i-1})^{-\frac{1}{2}}(\overline{\gamma}_{N_{i-1}})^{\frac{1}{2}} \label{eq:use_overline} \\
    & = O\Bigl( \sqrt{\beta} T^{a^{i-1}(\eta-a)} T^{1-\eta} (\alpha_{N_{i-1}} )^{\frac{1}{2}} \Bigr) \\
    & = O\Bigl( \sqrt{\beta} T^{a^{i-1}(\eta-a)} T^{1-\eta} (\alpha_{T} )^{\frac{1}{2}} \Bigr) \label{eq:use_alpha_t} \\
    & = O\Bigl( T^{a^{i-1}(\eta-a)} \sqrt{\beta T\overline{\gamma}_T} \Bigr) \label{eq:use_overline_2},
\end{align}
where \eqref{eq:use_prop_2} follows from Proposition \ref{prop:property_bpe}, \eqref{eq:use_overline} follows since $\overline{\gamma}_t$ upper bounds $\gamma_t$,
\eqref{eq:use_alpha_t} follows from $\alpha_{N_{i-1}} = O(\alpha_T)$, and \eqref{eq:use_overline_2} follows from $\overline{\gamma}_T = c' \cdot T^{\frac{d}{2\nu+d}} \cdot\alpha_T$ and $\eta = \frac{\nu}{2\nu+d}$.
Then, we define $U_1 = T^{1-a} \sqrt{\beta}$ and $U_i = T^{a^{i-1}(\eta-a)} \sqrt{\beta T\overline{\gamma}_T}$ for $2\leq i\leq B$, so that $R^i=O(U_i)$ for $i\in[B]$. 
We observe that
\begin{align}
    \frac{U_{2}}{U_{1}}
    = T^{(a-1)(\eta-a)}\sqrt{c'\alpha_T}
    \geq c_1T^{(a-1)(\eta-a)}(\log T)^{-c/2}, \label{eq:u1_u2}
\end{align}
where $c'$ is the constant from Definition \ref{def:gamma_t}, $(a-1)(\eta-a) > 0$ because $a\in(\eta, 1/2]$, the inequality follows because $\alpha_T = \Omega\bigl((\log T)^{-c} \bigr)$ for some constant $c > 0$ by the second part of Remark \ref{rmk:near-optimal}, and $c_1$ is also a constant. 
Equation \eqref{eq:u1_u2} implies that $U_2$ dominates $U_1$, which combined with the previous analysis (\eqref{eq:ub_dominate} and the preceding paragraph) shows $U_B$ dominates the sum of the $B$ upper bounds:
\begin{align}
    \sum_{i=1}^B U_i 
    = U_1 + \sum_{i=2}^B U_i 
    \leq U_2\cdot o(1) + U_B \sum_{i=2}^B 2^{-b(B-i)}
    \leq U_B\cdot o(1) + U_B \sum_{i=1}^B 2^{-b(B-i)}
    \leq U_B \cdot \Theta(1). \label{eq:dominance_2}
\end{align}
Similarly to \eqref{eq:ub_bound}, we have $T^{a^{B-1}(\eta-a)} \leq \Theta(1)$. 
Combining this result with \eqref{eq:dominance_2} and \eqref{eq:beta_bound}, we have
\begin{align}
    R_T 
    & = \sum_{i=1}^B R^i
    = \sum_{i=1}^B O(U_i)
    = O(U_B) \\
    & = O\Bigl( T^{a^{B-1}(\eta-a)} \sqrt{\beta T\overline{\gamma}_T} \Bigr) \\
    & = O\Bigl( \sqrt{\beta T\overline{\gamma}_T} \Bigr) \\
    & = O\Bigl(\bigl(\Lambda + \sqrt{\log \log\log T}\bigr) \sqrt{T\overline{\gamma}_T}\Bigr) \label{eq:bound_mat}.
\end{align}

\subsection{Proof of Theorem \ref{thm:lower_adaptive_batches} and Corollary \ref{cor:lower_adaptive_batches} (Standard Setting Lower Bounds)}
\label{app:proof_lower_adaptive_batches}
To prove Theorem \ref{thm:lower_adaptive_batches} and Corollary \ref{cor:lower_adaptive_batches}, we formalize the discussion in Section \ref{sec:standard_lower_bound}. We first present the definitions and auxiliary lemmas that we need, then present the regret analysis.

\subsubsection{Definitions and Auxiliary Lemmas}
We start with some definitions.
For the set of hard-to-distinguish functions $\mathcal{F}=\{f_1, \ldots, f_M\}$ considered in Section \ref{sec:standard_lower_bound}, their precise expressions are not needed in the following discussion, and all we need are the following properties of $\mathcal{F}$ from \citep{Scarlett17}:

\begin{proposition}
    \label{prop:property_f}
    The set of hard-to-distinguish functions $\mathcal{F}=\{f_1, \ldots, f_M\}$ has the following properties:
    \begin{itemize}
        \item $\epsilon/\Psi$ is assumed to be sufficiently small so that $M \gg 1$, which will turn out to be trivially satisfied under the assumption that both $\sigma$ and $\Psi$ are positive constants;
        \item For each $f_i\in\mathcal{F}$, we have $f_i \in [-2\epsilon, 2\epsilon]$ and $\lVert f_i\rVert_k\leq \Psi$;
        \item Each $x\in\mathcal{X}$ can be $\epsilon$-optimal for at most one $f_i \in \mathcal{F}$;
        \item The number of functions $M$ satisfies
        \begin{align}
            M = \Theta\bigl( (\log(\Psi/\epsilon))^{d/2} \bigr) \label{eq:m_se}
        \end{align}
        for the SE kernel, and
        \begin{align}
            M = \Theta\bigl( (\Psi/\epsilon )^{d/\nu} \bigr) \label{eq:m_matern}
        \end{align}
        for the Mat\'ern kernel.
        \item $\{\mathcal{R}_m\}_{m=1}^{M}$ is a uniform partition of $[0,1]^d$, such that $f_m$ takes the maximum in the center of $\mathcal{R}_m$;
    \end{itemize}
\end{proposition}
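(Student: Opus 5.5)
The plan is to re-derive these properties from the explicit ``needle in haystack'' construction of \citep{Scarlett17}, checking each item against it. Fix a width $w\in(0,1]$ with $1/w$ an integer, and let $\{\mathcal{R}_m\}_{m=1}^M$ be the uniform partition of $[0,1]^d$ into $M=w^{-d}$ axis-aligned cubes of side $w$. This choice directly gives the last property.

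\textbf{Step 1: the bump and the class.} Take a fixed base bump $h:\mathbb{R}^d\to\mathbb{R}$ with $h(0)=\max h=1$, with $\lvert h\rvert\le 1$, and (essentially) supported in the ball of radius $1/2$. Define
\[
f_m(x)=2\epsilon\, h\bigl((x-c_m)/w\bigr),
\]
where $c_m$ is the center of $\mathcal{R}_m$.
\begin{itemize}
\item By construction, $f_m\in[-2\epsilon,2\epsilon]$ and its maximum $2\epsilon$ is attained at $c_m$.
\item Suppose $x$ is $\epsilon$-optimal for $f_m$, i.e.\ $f_m(x)>\epsilon$. Then $h((x-c_m)/w)>1/2$, which forces $x\in\mathcal{R}_m$, since $h$ vanishes (or is at most $1/2$) outside the half-width ball. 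The regions are disjoint, so each $x$ is $\epsilon$-optimal for at most one $f_m$.
\end{itemize}
For the Mat\'ern kernel I will use a compactly supported $C^\infty$ bump such as $h(x)\propto\exp(-1/(1-4\|x\|^2))$. For the SE kernel, compact support is incompatible with finite RKHS norm, so I will follow \citep{Scarlett17}: use a band-limited bump whose Fourier transform is compactly supported, and accept small tails outside $\mathcal{R}_m$. These tails are controlled so that $f_m\le\epsilon$ there, which keeps the disjointness property intact.

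\textbf{Step 2: the RKHS norm calculation.} This is the main step. I will use the Fourier characterization of a translation-invariant RKHS,
\[
\|g\|_k^2=\int \lvert\hat g(\omega)\rvert^2/\hat k(\omega)\,d\omega,
\]
together with $\widehat{h(\cdot/w)}(\omega)=w^d\hat h(w\omega)$.
\begin{itemize}
\item For the Mat\'ern kernel, $\hat k(\omega)\asymp(1+\|\omega\|^2)^{-(\nu+d/2)}$. Changing variables gives $\|h(\cdot/w)\|_k=O(w^{-\nu})$ for $w\le 1$, so $\|f_m\|_k=O(\epsilon w^{-\nu})$. Choosing $w=\Theta((\epsilon/\Psi)^{1/\nu})$ therefore gives $\|f_m\|_k\le\Psi$, and $M=w^{-d}=\Theta((\Psi/\epsilon)^{d/\nu})$, which is \eqref{eq:m_matern}.
\item For the SE kernel, $\hat k(\omega)\propto e^{-l^2\|\omega\|^2/2}$. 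If $\hat h$ is supported in a unit ball, then the support of $\widehat{h(\cdot/w)}$ lies in the ball of radius $1/w$. The norm is then at most $O(\epsilon\, w^{-d/2}e^{l^2/(4w^2)})$. Requiring this to be at most $\Psi$ allows $1/w^2=\Theta(\log(\Psi/\epsilon))$, hence $M=\Theta((\log(\Psi/\epsilon))^{d/2})$, which is \eqref{eq:m_se}.
\end{itemize}

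\textbf{Step 3: the remaining items.} Restricting to $[0,1]^d$ does not increase the RKHS norm. Since $\epsilon/\Psi$ is small whenever $\epsilon\to0$ with $\Psi,\sigma$ fixed, we get $w\to0$ and hence $M\gg 1$, which is the first property.

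\textbf{Expected obstacle.} I expect the SE case to be the hardest part. There one must balance the Gaussian spectral decay against the tails of the band-limited bump, so that the tails stay below $\epsilon$ outside $\mathcal{R}_m$ while the norm stays at most $\Psi$. This is exactly the delicate construction in \citep{Scarlett17}, and I would invoke it rather than redo it.
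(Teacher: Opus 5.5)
Your proposal is correct and follows the paper's route: the paper gives no independent proof and imports these properties directly from the needle-in-a-haystack construction of \citep{Scarlett17}. You reconstruct that construction (rescaled bumps on a uniform grid, $w^{-\nu}$ Sobolev scaling for Mat\'ern, band-limited bumps with $1/w^2=\Theta(\log(\Psi/\epsilon))$ for SE) and defer the SE tail control to that reference, as the paper does. One minor slip: the exact SE prefactor after the change of variables is $w^{d/2}$ rather than $w^{-d/2}$, but for $w\le 1$ your bound is merely looser and the conclusion is unaffected.
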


The ``idealized'' behavior of the functions is depicted in Figure \ref{fig:bump}, though we note that in order to handle the SE kernel rather than only the Mat\'ern kernel, the actual functions used have steady decay to zero instead of finite support.

\begin{figure}[!t]
    \begin{center}
        \centerline{\includegraphics[width=0.4\linewidth]{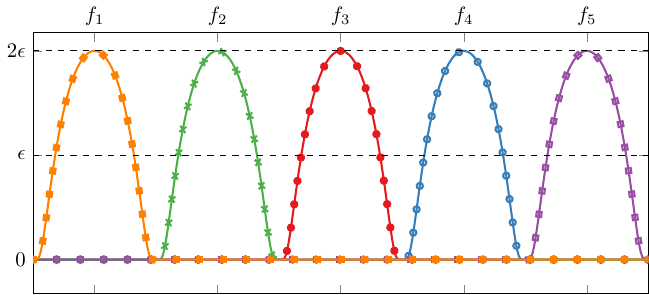}}
        \caption{Illustration of a class of hard-to-distinguish functions $\mathcal{F}$, where any $x\in\mathcal{X}$ can be $\epsilon$-optimal for at most one bump function.  This is an ``idealized'' illustration, with the actual functions used having infinite support but steady decay to zero.}
        \label{fig:bump}
    \end{center}
\end{figure}

In addition, we define the zero function $f_0\equiv0$, and introduce the following definitions:
\begin{itemize}
    \item We define $\mathbb{P}_m^t$ to be the probability measure of the observations available up to the start of time $t$ (or equivalently, the end of time $t-1$) when the underlying function is $f_m$.
    Similarly, $\mathbb{P}_0^t$ and $\mathbb{E}_0^t$ are defined with respect to the zero function $f_0$.
    For some generic function $f$ with $\lVert f\rVert_k\leq \Psi$, if $f$ is clear from the context, we omit the subscript $m$ and write $\mathbb{E}^t$ and $\mathbb{P}^t$. If $t=T$, i.e., the time horizon, we omit the superscript $T$ and write $\mathbb{E}_m$, $\mathbb{E}_0$, $\mathbb{E}$, $\mathbb{P}_m$, $\mathbb{P}_0$, and $\mathbb{P}$. 
    \item We define $P_m(\cdot|x)$ to be the conditional density function of the observation $y$ given input $x$ on function $f_m$, which is $N(f_m(x), \sigma^2)$.
\end{itemize}

In the following analysis, we apply the above definitions to $B$ classes of bump functions $\mathcal{F}_i=\{f^{(i)}_{m}\}_{m=1}^{M_i}$ where $i\in[B]$, each $\mathcal{F}_i$ has $\epsilon_i$-optimality, and has $M_i$ functions parameterized by $\epsilon_i$.\footnote{When $i$ is understood from the context, we omit the superscript to ease notation, i.e.,  we write $f_m$ instead of $ f^{(i)}_m $.
The same practice applies to other notations with subscript $m$, like $\mathbb{P}_m^t$ and $P_m(\cdot|x)$.
} 

Before proceeding, we additionally note that it suffices to prove Theorem \ref{thm:lower_adaptive_batches} for the case of algorithms whose decisions (i.e., choices of $x_t$ and choices of $t_i$) are \emph{deterministic} given the observations received so far.  This is because we are studying the average regret $\mathbb{E}[R_T]$, and for a randomized algorithm with internal randomness $U$ we can write $\mathbb{E}[R_T] = \mathbb{E}[ \mathbb{E}[R_T | U]] \ge \min_U \mathbb{E}[R_T | U]$, with the right-hand side representing a deterministic strategy.  By considering only deterministic strategies, we can note the following useful fact: Although $\mathbb{P}_m^t$ is only a measure over the observations received up to the start of time $t$, the input sequence $x_1,\dotsc,x_t$ is uniquely specified given those observations, and thus so is the regret up to (and including) time $t$.

Recall that we discussed $T_i$ in Section \ref{sec:standard_lower_bound}, which is used to determine whether a batch is bad. We now call the batch sizes specified by $T_1,\ldots, T_B$ the \emph{reference batch sizes}. Their values and properties are shown below.

\begin{proposition}
    \label{prop:reference_batches_property}
    The reference batch sizes are defined as follows:
    \begin{itemize}
        \item for the SE kernel, $T_i = \Bigl\lceil T^{\frac{1-\eta^i}{1-\eta^B}}(\log T)^{\frac{d\eta(\eta^i- \eta^B)}{1-\eta^B}}\Bigr\rceil$, where $i\in [B]$ and $\eta=\frac{1}{2}$;
        \item for the Mat\'ern kernel, $T_i = \Bigl\lceil T^{\frac{1-\eta^i}{1-\eta^B}}\Bigr\rceil$, where $i\in [B]$ and $\eta=\frac{\nu}{2\nu+d}$.
    \end{itemize}
    For convenience, we define $T_0=0$. 
    The reference batch sizes have the following properties:
    \begin{enumerate}
        \item $T_i-T_{i-1}=\Theta(T_i)$ for $i \in [B]$;
        \item $\log T_i=\Theta(\log T)$ for $i \in [B]$;
        \item For the SE kernel, $T_i(T_{i-1})^{-\eta}(\log T)^{d\eta^2}=\Theta\Bigl(T^{\frac{1-\eta}{1-\eta^B}}(\log T)^{\frac{d\eta(\eta-\eta^B)}{1-\eta^B}}\Bigr)$ for $2 \leq i \leq B$;
        \item For the Mat\'ern kernel, $T_i(T_{i-1})^{-\eta}=\Theta\Bigl(T^{\frac{1-\eta}{1-\eta^B}}\Bigr)$ for $2 \leq i \leq B$.
    \end{enumerate}
\end{proposition}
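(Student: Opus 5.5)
The plan is to verify the four properties directly from the explicit formulas, working with the exponents. For brevity write $\rho_i = \frac{1-\eta^i}{1-\eta^B}$. For the SE kernel write $\kappa_i = \frac{d\eta(\eta^i-\eta^B)}{1-\eta^B}$, so that $T_i = \lceil T^{\rho_i}(\log T)^{\kappa_i}\rceil$. For the Mat\'ern kernel, $\kappa_i = 0$.

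The key identity is
\[
\rho_i - \eta\rho_{i-1} = \frac{1-\eta^i-\eta+\eta^i}{1-\eta^B} = \rho_1 .
\]
For the SE kernel we also get
\[
\kappa_i - \eta\kappa_{i-1} = \frac{d\eta(\eta^B\eta - \eta^B)}{1-\eta^B} = -\frac{d\eta\,\eta^B(1-\eta)}{1-\eta^B} .
\]
Adding $d\eta^2$ to the latter gives
\[
\frac{d\eta^2(1-\eta^B) - d\eta^{B+1}(1-\eta)}{1-\eta^B} = \frac{d\eta(\eta-\eta^B)}{1-\eta^B} = \kappa_1 .
\]
These identities give properties 3 and 4 immediately, once the ceilings are handled. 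Since $T_i \ge 1$, we have $x \le \lceil x\rceil \le 2x$ whenever $x \ge 1$. Here $T^{\rho_i}\ge 1$ and $(\log T)^{\kappa_i}\ge 1$ for $T\ge 3$, because $\kappa_i \ge 0$ ($\eta^i \ge \eta^B$). So replacing each $T_i$ by its unrounded value costs only constant factors. The case $i-1 = 0$ is excluded in properties 3 and 4, so $T_{i-1}\ge 1$ is fine there.

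For property 2, I use $\rho_i \ge \rho_1 = \frac{1-\eta}{1-\eta^B} \ge 1-\eta$ and $\rho_i\le 1$. This gives $(1-\eta)\log T \le \log T_i \le \log T + \kappa_i\log\log T + \log 2 = O(\log T)$, since $\kappa_i \le d\eta$. Hence $\log T_i = \Theta(\log T)$, with constants independent of $B$.

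For property 1, when $i=1$ it is trivial since $T_0=0$. For $i\ge 2$, it suffices to show $T_{i-1}\le T_i/2$ for $T$ large. The ratio of unrounded values is
\[
T^{\rho_{i-1}-\rho_i}(\log T)^{\kappa_{i-1}-\kappa_i},
\]
with $\rho_i - \rho_{i-1} = \frac{\eta^{i-1}(1-\eta)}{1-\eta^B}$ and $\kappa_{i-1}-\kappa_i = \frac{d\eta\,\eta^{i-1}(1-\eta)}{1-\eta^B}$. So the ratio equals $\bigl(T^{-1}(\log T)^{d\eta}\bigr)^{\eta^{i-1}(1-\eta)/(1-\eta^B)}$.

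This is the main obstacle. When $i$ is near $B$ and $B$ grows, the exponent $\eta^{i-1}$ can be as small as about $\eta^{B}$, and then the ratio need not be bounded away from $1$ uniformly. I would resolve this using the regime of interest, $B \le \log_{1/\eta}\log_2 T + O(1)$ (larger $B$ is handled by the sequential bound, cf.\ Remark~\ref{rem:compare_lem}). In that regime, $\eta^{i-1}\log T \ge \eta^{B}\log T = \Omega(1)$, so the ratio is at most a constant $c<1$. Then
\[
T_i - T_{i-1} \ge T_i - 2cT_i' \cdot(\text{const})
\]
needs care with the ceilings. More simply, I would compare the unrounded values: $T_{i-1}\le 2T_{i-1}' \le 2cT_i'$. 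This works if $c<1/2$, which can be arranged by the constant in the bound on $B$; otherwise I would absorb it by slightly adjusting the regime. This gives $T_i - T_{i-1} = \Theta(T_i)$. If this regime restriction turns out to be unwanted, the fallback is to carry the factor $(1-c)$ explicitly into the later regret bounds.
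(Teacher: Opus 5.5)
Your handling of properties 2--4 is the paper's argument: remove the ceilings via $x\le\lceil x\rceil\le 2x$ and compare exponents. Your identities $\rho_i-\eta\rho_{i-1}=\rho_1$ and $\kappa_i-\eta\kappa_{i-1}+d\eta^2=\kappa_1$ are exactly the paper's one-line computation, and your bounds $\rho_1\le\rho_i\le 1$, $0\le\kappa_i\le d\eta$ make the constants uniform in $B$.

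For property 1 you reproduce the paper's ratio bound, and your concern is well founded. The paper concludes that the ratio tends to $0$ ``since $\eta$ is a fixed constant''. That holds uniformly over $i\le B$ only when $\eta^{B}\log T\to\infty$ (e.g.\ constant $B$). Without some restriction the property is actually false: if $\eta^B\log T\to 0$, then $T_{B-1}/T_B\to 1$. So your assumption $B\le\log_{1/\eta}\log_2 T+O(1)$ is necessary rather than a weakness. It also covers $B=\lceil\log_{1/\eta}\log_2 T\rceil$ from Remark~\ref{rmk:necessary_B}, where the ratio at $i=B$ stays bounded away from $0$, so the paper's justification does not apply there.

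The one step to repair is the ceiling handling in property 1. Your bound $T_{i-1}\le 2T'_{i-1}\le 2cT'_i$ needs $c<1/2$, and that is not available in the regime you care about. Take the SE kernel with $B=\lceil\log_2\log_2 T\rceil$. Dropping the log factor (which is at least $1$), the unrounded ratio at $i=B$ is at least $2^{-\eta^B\log_2 T/(1-\eta^B)}$. Here $\eta^B\log_2 T\in(1/2,1]$, so along suitable $T$ the ratio is close to $2^{-1/2}$. Forcing $c<1/2$ would therefore mean shrinking $B$ below the value you need.

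Use $\lceil x\rceil\le x+1$ instead:
\[
T_{i-1}\le cT'_i+1\le cT_i+1 .
\]
Hence $T_i-T_{i-1}\ge (1-c)T_i-1\ge \frac{1-c}{2}T_i$ for large $T$, because $T_i\ge T^{\rho_1}\ge T^{1-\eta}\to\infty$. This works for every constant $c<1$. So you need neither a further tightening of the regime nor the fallback of carrying $(1-c)$ into later bounds.
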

\begin{proof}
    We will prove the first two properties for the SE kernel, as the proof for the Mat\'ern kernel is entirely analogous.
    
    \textbf{Proof of the first property.}
    This holds trivially for $i=1$ since $T_0=0 $. For $i \geq 2$, 
    Notice $x\leq \lceil x\rceil \leq 2x $ when $x\geq 1$, so
    \begin{align}
        \frac{T_{i-1}}{T_{i}}
        \leq 2 T^{\frac{\eta^i-\eta^{i-1}}{1-\eta^B}} (\log T)^{\frac{d(\eta^{i-1}-\eta^i)}{1-\eta^B}}
        = 2 \Bigl(\frac{(\log T)^{d}}{T}\Bigr)^{\frac{\eta^{i-1}-\eta^i}{1-\eta^B}}.
    \end{align}
    Since $\eta$ is a fixed constant in $(0,1)$, this approaches $0$ as $T \to \infty$,
    which means $T_i$ dominates $T_{i-1}$, i.e., $T_{i-1}=o(T_i)$, so $T_i-T_{i-1}=\Theta(T_i)$.

    \textbf{Proof of the second property.} Notice that $T_i =(1+o(1)) T^{\frac{1-\eta^i}{1-\eta^B}}(\log T)^{\frac{d\eta(\eta^i- \eta^B)}{1-\eta^B}}$, and hence
    \begin{align}
        \log T_i
        & = (1+ o(1))\log\Bigl( T^{\frac{1-\eta^i}{1-\eta^B}}(\log T)^{\frac{d(\eta^i- \eta^B)}{1-\eta^B}}\Bigr) \\
        & = (1+ o(1)) \Bigl( \frac{1-\eta^i}{1-\eta^B}\log T+\frac{d(\eta^i- \eta^B)}{1-\eta^B}\log\log T\Bigr) \\
        & = (1+ o(1)) \frac{1-\eta^i}{1-\eta^B}\log T \\
        & = \Theta(\log T).
    \end{align}
    
    \textbf{Proof of the third property.} We have $x\leq \lceil x\rceil \leq 2x $ when $x\geq 1$. Then, for $i\geq 2$:
    \begin{align}
        T_i(T_{i-1})^{-\eta}(\log T)^{d\eta^2}
        & \leq 2 T^{\frac{(1-\eta^i)-\eta(1-\eta^{i-1})}{1-\eta^B}}(\log T)^{d\eta\frac{(\eta^i- \eta^B) - \eta(\eta^{i-1}- \eta^B)}{1-\eta^B}} (\log T)^{d\eta^2} \\
        & = 2 T^{\frac{1-\eta}{1-\eta^B}}(\log T)^{\frac{d\eta(\eta-\eta^B)}{1-\eta^B}},
    \end{align}    
    and the lower bound is proved similarly.
    
    \textbf{Proof of the fourth property.} This is analogous to the proof of the third property, and is thus omitted.
\end{proof}

In Section \ref{sec:standard_lower_bound}, we defined the $i$-th batch $\{t: t_{i-1} < t \leq t_i \}$ to be bad if $t_{i-1}<T_{i-1} $ and $t_i\geq T_i$. 
To make this more concrete, we define ``bad events'' $\{A_i\}_{i=1}^B $, where
\begin{align}
    A_1 & =\{t_1\geq T_1 \}, \\
    A_i & =\{t_{i-1}<T_{i-1},t_i\geq T_{i} \},
\end{align}
for $i\geq 2$. An important property of the bad events is that they partition the probability space, 
as $\cup_{i\in[B]} A_i = \{T_B \geq T\}$, which always holds since $T_B = T$. This means it is guaranteed that at least one bad event will happen, i.e., there will be a bad batch (in retrospect). We deduce that $\sum_{i=1}^B \mathbb{P}_0(A_i) \geq \mathbb{P}_0(\cup_{i\in[B]}A_i)=1$.

From the above definitions we derive the following lemmas.

\begin{lemma}
    \label{lem:aux_bad_event}
    Under the preceding definitions, for all $i\in[B]$, if $t \geq T_{i-1}$, then $\mathbb{P}_0^t(A_i) =  \mathbb{P}_0(A_i)$.
\end{lemma}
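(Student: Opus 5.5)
The plan is to show that the event $A_i$ is measurable with respect to the observations available at the start of time $T_{i-1}$ (the end of time $T_{i-1}-1$). Once that is established, $\mathbb{P}_0^t(A_i)=\mathbb{P}_0(A_i)$ for every $t\geq T_{i-1}$, because the measures $\mathbb{P}_0^t$ and $\mathbb{P}_0$ are marginals of one another on nested observation histories. We use the reduction to deterministic algorithms made above: all decisions, including the batch endpoints $t_i$, are deterministic functions of the observations received so far. The case $i=1$ is immediate, since $T_0=0$ and $t_1$ is predetermined, so $A_1=\{t_1\geq T_1\}$ is a deterministic event, either always true or always false. This means its probability is the same under every $\mathbb{P}_0^t$.

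For $i\geq 2$, write $A_i=\{t_{i-1}<T_{i-1}\}\cap\{t_i\geq T_i\}$. We decide $A_i$ in two stages by sequentially revealing the batch endpoints.

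\emph{First stage.} Each endpoint $t_j$ with $j\leq i-1$ is chosen at the end of time $t_{j-1}$, and therefore depends only on observations from batches $1,\dots,j-1$. On the event $\{t_{i-1}<T_{i-1}\}$, batches $1,\dots,i-1$ all finish strictly before $T_{i-1}$, so their observations are revealed by the end of time $t_{i-1}\leq T_{i-1}-1$. Whether $t_{i-1}<T_{i-1}$ holds can be checked by walking through the endpoints: at each step $t_j$ is a function of the history up to time $t_{j-1}$, and we stop as soon as some $t_j\geq T_{i-1}$ appears (with $j\leq i-1$). Throughout this walk we only use observations revealed strictly before $T_{i-1}$.

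\emph{Second stage.} On $\{t_{i-1}<T_{i-1}\}$, the endpoint $t_i$ is chosen at the end of time $t_{i-1}$. It is therefore a function of observations available by time $t_{i-1}\leq T_{i-1}-1$, and the indicator of $\{t_i\geq T_i\}$ on this event is determined by the same history.

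Formally, $\mathbf{1}_{A_i}$ is a function of the observation vector available at the start of time $T_{i-1}$. Since $\mathbb{P}_0^t$ for $t\geq T_{i-1}$ is the law of a longer history that includes this vector, its marginal on that vector coincides with the marginal under $\mathbb{P}_0$. The equality follows.

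The main obstacle is the careful measurability bookkeeping in the first stage. The endpoints are random stopping-time-like quantities, and the observations of batch $j$ become visible only at $t_j$. The cleanest way to handle this is to show by induction on $j$ that the event $\{t_j<T_{i-1}\}$, together with the value of $t_{j+1}$ on that event, is determined by the history up to time $T_{i-1}-1$. The subtle point is that observations are revealed at the end of time $t_{i-1}$, which is before the start of time $T_{i-1}$ exactly because $t_{i-1}<T_{i-1}$; this strict inequality is what makes the argument close. Complements are handled the same way, since a $\{0,1\}$-valued function of the history determines both $A_i$ and $A_i^c$.
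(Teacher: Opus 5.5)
Your proposal is correct and follows essentially the same route as the paper: both arguments show that $A_i$ is determined by the observations available at the start of time $T_{i-1}$, splitting into the cases $t_{i-1}<T_{i-1}$ (where $t_i$ was fixed at the end of time $t_{i-1}$) and $t_{i-1}\geq T_{i-1}$ (where $A_i$ is known to fail). The paper then concludes from the nestedness of observation histories, as you do. Your induction over the endpoints $t_1,\dots,t_{i-1}$ simply spells out the measurability bookkeeping that the paper's second case asserts in one line.
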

\begin{proof}
    Recall that $A_1=\{t_1\geq T_1 \} $ and $A_i=\{t_{i-1}<T_{i-1},t_i\geq T_{i} \}$ for $i\geq 2$.
    The claimed result follows directly from the fact that $A_i$ is determined at the start of time $T_{i-1} $: (i) If $t_{i-1}<T_{i-1}$, 
    then the algorithm has determined $t_i$ at the end of time $t_{i-1}$, so whether $t_i \geq T_i$ is known at the start of time $T_{i-1}$.
    (ii) If $t_{i-1}\geq T_{i-1}$, then it is known at the start of time $T_{i-1}$ that $A_i$ will not happen.  
\end{proof}

\begin{lemma}
    \label{lem:aux_simple_regret}
    For any $f_{m}\in\mathcal{F}_{i}$, we have $\mathbb{E}_m^t[r_t] \geq\epsilon_i \cdot \mathbb{P}_m^t(x_t \notin \mathcal{R}_m)$.
\end{lemma}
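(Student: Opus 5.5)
The plan is a pointwise lower bound on the instantaneous regret $r_t = f_m(x^*) - f_m(x_t)$, followed by taking expectations. By Proposition \ref{prop:property_f} (applied to the class $\mathcal{F}_i$ with parameter $\epsilon_i$), each $x\in\mathcal{X}$ is $\epsilon_i$-optimal for at most one function of $\mathcal{F}_i$, and $f_m$ attains its maximum at the center of $\mathcal{R}_m$. I will use the construction from \citep{Scarlett17}, in which the $\epsilon_i$-optimal region of $f_m$ lies inside the cell $\mathcal{R}_m$; equivalently, for any $x\notin\mathcal{R}_m$ we have $f_m(x^*) - f_m(x) \geq \epsilon_i$. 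Since $r_t\geq 0$ always, this yields the deterministic inequality
\[
r_t \;\geq\; \epsilon_i \cdot \mathbf{1}\{x_t\notin\mathcal{R}_m\}.
\]

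Next, recall the remark preceding Proposition \ref{prop:reference_batches_property}: we may restrict attention to deterministic algorithms. For such algorithms, $x_t$ is a deterministic function of the observations available at the start of time $t$. Hence both $r_t$ and the event $\{x_t\notin\mathcal{R}_m\}$ are measurable with respect to the observations governed by $\mathbb{P}_m^t$. Taking $\mathbb{E}_m^t$ of both sides of the pointwise inequality and using linearity gives
\[
\mathbb{E}_m^t[r_t]\;\geq\;\epsilon_i\,\mathbb{P}_m^t(x_t\notin\mathcal{R}_m),
\]
which is the claim.

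The only real obstacle is justifying the geometric fact that $\epsilon_i$-optimality of $f_m$ forces $x\in\mathcal{R}_m$. Proposition \ref{prop:property_f} as stated only says that $\epsilon$-optimal points are not shared between functions, not that they lie in $\mathcal{R}_m$. I would address this directly from the bump construction: $f_m$ is a scaled and shifted bump of height $2\epsilon_i$ centered in $\mathcal{R}_m$, and it drops below $\epsilon_i$ before leaving the cell (for SE, this holds up to the decaying tail, which the construction controls). If needed, I would instead \emph{define} $\mathcal{R}_m$ to contain the $\epsilon_i$-optimal set of $f_m$. This is compatible with the disjointness property, so the inequality holds with the same constants.
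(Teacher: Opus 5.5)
Your proposal is correct and matches the paper's proof, which is the same one-line argument: $r_t \ge 0$ always, $r_t \ge \epsilon_i$ whenever $x_t \notin \mathcal{R}_m$ because every point outside $\mathcal{R}_m$ is $\epsilon_i$-suboptimal for $f_m$, and taking $\mathbb{E}_m^t$ gives the claim. You are right that this containment property is not literally one of the bullets of Proposition \ref{prop:property_f} and comes from the bump construction in \citep{Scarlett17}, which the paper invokes implicitly; rely on that construction rather than your fallback of redefining $\mathcal{R}_m$, since the cells must remain the uniform partition used in Lemma \ref{lem:aux_avg_tv_bound} and in the proof of Theorem \ref{thm:lower_adaptive_batches}.
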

\begin{proof}
    Notice that the regret is non-negative, and is at least $\epsilon_i$ if $x\notin \mathcal{R}_m$ since all points outside $\mathcal{R}_m$ are at least $\epsilon_i$-suboptimal by Proposition \ref{prop:property_f}.
\end{proof}

\begin{lemma}
    \label{lem:aux_avg_tv_bound}
    Under the preceding definitions,
    for $\mathcal{F}_{i} = \{f_1,\ldots,f_{M_i}\}$ and any $t\in[T]$, we have
    \begin{align}
        \frac{1}{M_i}\sum_{m=1}^{M_i}\mathrm{TV}(\mathbb{P}_0^{t}, \mathbb{P}_m^{t})
        \leq \frac{C\epsilon_i}{2\sigma} \sqrt{\frac{t}{M_i}}
    \end{align}
    where $\mathrm{TV}$ is the total variation distance,
    and $C$ is a constant.
\end{lemma}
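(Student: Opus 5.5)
The plan is the standard Pinsker-plus-KL-chain-rule argument, followed by Cauchy--Schwarz over $m$, using the fact that the bumps in $\mathcal{F}_i$ are essentially supported on disjoint regions $\mathcal{R}_m$.

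\emph{Step 1 (Pinsker).} For each $m$, Pinsker's inequality gives
\[
\mathrm{TV}(\mathbb{P}_0^t,\mathbb{P}_m^t)\le \sqrt{\tfrac12 \mathrm{KL}(\mathbb{P}_0^t\|\mathbb{P}_m^t)}.
\]
We only need deterministic algorithms, as argued just above. In that case the observations up to time $t$ determine the query sequence.

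\emph{Step 2 (chain rule for KL).} Applying the chain rule to the observations $y_1,\dots,y_{t-1}$ gives
\[
\mathrm{KL}(\mathbb{P}_0^t\|\mathbb{P}_m^t)=\sum_{s<t}\mathbb{E}_0\bigl[\mathrm{KL}(P_0(\cdot|x_s)\|P_m(\cdot|x_s))\bigr]=\sum_{s<t}\mathbb{E}_0\Bigl[\tfrac{f_m(x_s)^2}{2\sigma^2}\Bigr].
\]
Batching does not break this step. Each $x_s$ is a deterministic function of the observations already revealed, which are a subset of $y_1,\dots,y_{s-1}$. Also, whether $y_s$ is revealed by time $t$ is itself determined under both measures in the same way. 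So if anything we are summing over fewer terms, and the upper bound still holds. The expectation is taken under $\mathbb{P}_0$, which is the key point: the query distribution does not depend on $m$.

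\emph{Step 3 (Cauchy--Schwarz / Jensen).} By concavity of the square root,
\[
\frac1{M_i}\sum_m \mathrm{TV}\le \sqrt{\frac{1}{M_i}\sum_m \frac{1}{4\sigma^2}\sum_{s<t}\mathbb{E}_0[f_m(x_s)^2]}.
\]
It then suffices to show that for every $x$,
\[
\sum_m f_m(x)^2\le C^2\epsilon_i^2.
\]
This is the main obstacle. For idealized disjoint-support bumps it is immediate: at most one $f_m$ is nonzero at $x$, and $|f_m|\le2\epsilon_i$, so the sum is at most $4\epsilon_i^2$.

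For the actual construction in \citep{Scarlett17}, where the functions have infinite support, I would use the property established there. Each $f_m$ is a scaled, shifted copy of a fixed function $g$ with width proportional to the cell side $w$, and it decays rapidly outside $\mathcal{R}_m$. One can take $g$ to be a bump whose Fourier transform is compactly supported, so that $g$ decays faster than any polynomial. Then $\sum_m f_m(x)^2 \le \epsilon_i^2\sum_{j\in\mathbb{Z}^d}\sup_{\|u-j\|_\infty\le 1}\bar g(u)^2$, which is a constant depending only on $d$. I would cite Lemma~4 or the corresponding bound of \citep{Scarlett17}, or \citep{Cai21}, which establishes exactly $\sum_m f_m(x)^2 = O(\epsilon^2)$ (and a similar $\ell_1$ bound) uniformly in $x$.

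\emph{Step 4 (assemble).} Plugging this bound in gives
\[
\frac{1}{M_i}\sum_m\mathrm{TV}\le\sqrt{\frac{1}{M_i}\cdot\frac{t\,C^2\epsilon_i^2}{4\sigma^2}}=\frac{C\epsilon_i}{2\sigma}\sqrt{\frac{t}{M_i}},
\]
absorbing constants into $C$, which is the claimed bound.
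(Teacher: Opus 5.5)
Your proposal is correct and follows the paper's proof essentially step for step: Pinsker, a KL decomposition under $\mathbb{P}_0$ that becomes only an upper bound because of batching, Jensen over $m$, and the sum-of-squares bound from \citep{Scarlett17}. The paper justifies the batching step with the data processing inequality applied to the full data $y_1,\dots,y_{t-1}$, which is a cleaner form of your ``fewer terms'' remark. The only other difference is cosmetic: the paper decomposes by region $\mathcal{R}_j$ and uses the regional maxima $\bar{v}^j_m$ with $\sum_m (\bar{v}^j_m)^2 \le C\epsilon_i^2$ (Lemma 5 there), whereas you use the pointwise bound $\sum_m f_m(x)^2 = O(\epsilon_i^2)$, which that lemma implies and which lets you skip the counts $N^t_j$.
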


\begin{proof}
    We start with a single TV term and use Pinsker's inequality to get
    \begin{align}
        \mathrm{TV}(\mathbb{P}_0^{t}, \mathbb{P}_m^{t}) \leq \sqrt{\frac{1}{2}D_{\mathrm{KL}}(\mathbb{P}_0^{t}\Vert \mathbb{P}_m^{t})}, \label{eq:tv_to_kl}
    \end{align}
    where $D_{\mathrm{KL}}$ is the KL divergence. 
    We claim that this KL divergence can be upper bounded as follows:
    \begin{align}
        D_{\mathrm{KL}}(\mathbb{P}_0^{t}\Vert \mathbb{P}_m^{t})
        = \mathbb{E}_0^t\Bigl[ \log\Bigl(\frac{d\mathbb{P}^t_0}{d\mathbb{P}^t_m}\Bigr) \Bigr]
        \leq \sum_{s=1}^{t-1} \mathbb{E}_0^s\bigl[ D_{\mathrm{KL}}\left(P_0(\cdot|x_s), P_m(\cdot|x_s)\right) \bigr] \label{eq:decomposed_kl},
    \end{align}
    where the expectation is taken over the randomness of $x_s$.  Indeed, if \emph{all $t-1$ past observations} were available at the start of time $t$, then this inequality would hold \emph{with equality} by the chain rule for KL divergence (\cite{Cover06}, Theorem 2.5.3).  In our setting, under both $\mathbb{P}_0^t$ and $\mathbb{P}_m^t$, only observations from \emph{earlier} batches are available, and thus we get \eqref{eq:decomposed_kl} as an upper bound via the data processing inequality for KL divergence (\cite{Polyanskiy25}, Theorem 2.17) and the fact that the observed data at the start of time $t$ is a function of the entire set of data (including unobserved) at the start of time $t$. To simplify notation, we weaken the right-hand side of \eqref{eq:decomposed_kl} to $\sum_{s=1}^{t} \mathbb{E}_0^s\bigl[ D_{\mathrm{KL}}\left(P_0(\cdot|x_s), P_m(\cdot|x_s)\right) \bigr]$, i.e., we add the $t$-th term despite the fact that it is not strictly necessary.
    
    Next, define $\bar{D}^j_m := \max_{x \in \mathcal{R}_j} D_{\mathrm{KL}}( P_0(\cdot|x) \| P_m(\cdot|x) )$ to be the maximum KL divergence within $\mathcal{R}_j$ (defined in Proposition \ref{prop:property_f}), $\bar{v}^j_m := \max_{x \in \mathcal{R}_j} f_m(x) $ to be the maximum value of $f_{m}$ in $\mathcal{R}_j$, and $N_j^t = \sum_{s=1}^t \mathbb{I}(x_s\in\mathcal{R}_j) $ to be the number of points selected from $\mathcal{R}_j$ in the first $t$ rounds.
    We proceed by decomposing according to the region $x_t$ lies in:
    \begin{align}
        \sum_{s=1}^{t} \mathbb{E}_0^s\left[ D_{\mathrm{KL}}\left(P_0(\cdot|x_s), P_m(\cdot|x_s)\right) \right]
        & = \sum_{s=1}^{t} \mathbb{E}_0^s\biggl[ \sum_{j=1}^{M_i}\mathbb{I}(x_s\in\mathcal{R}_j) D_{\mathrm{KL}}\left(P_0(\cdot|x_s), P_m(\cdot|x_s)\right) \biggr] \\
        & = \sum_{j=1}^{M_i} \mathbb{E}_0^{t}\biggl[ \sum_{s=1}^{t} \mathbb{I}(x_s\in\mathcal{R}_j) D_{\mathrm{KL}}\left(P_0(\cdot|x_s), P_m(\cdot|x_s)\right) \biggr] \\
        & \leq \sum_{j=1}^{M_i} \mathbb{E}_0^{t}\biggl[ \sum_{s=1}^{t} \mathbb{I}(x_s\in\mathcal{R}_j) \bar{D}^j_m \biggr] \\
        & = \sum_{j=1}^{M_i} \mathbb{E}_0^{t}\bigl[ N^{t}_j \bigr] \bar{D}^j_m \\
        & = \frac{1}{2\sigma^2}\sum_{j=1}^{M_i} \mathbb{E}_0^{t}\bigl[ N^{t}_j \bigr] (\bar{v}^j_m)^2, \label{eq:use_v}
    \end{align}
    where \eqref{eq:use_v} follows since $\bar{D}^j_m = \max_{x \in \mathcal{R}_j} D_{\mathrm{KL}}( P_0(\cdot|x) \| P_m(\cdot|x) )= \max_{x \in \mathcal{R}_j} \frac{(f_m(x))^2}{2\sigma^2} = \frac{(\bar{v}^j_m)^2}{2\sigma^2} $, where the second equality follows from $f_0 = 0$ and (36) in \citep{Scarlett17}.
    Combining \eqref{eq:decomposed_kl} and \eqref{eq:use_v}, we have $D_{\mathrm{KL}}(\mathbb{P}_0^{t}\Vert \mathbb{P}_m^{t}) \leq \frac{1}{2\sigma^2}\sum_{j=1}^{M_i} \mathbb{E}_0^{t}\bigl[ N^{t}_j \bigr] (\bar{v}^j_m)^2$, and substituting back into \eqref{eq:tv_to_kl} gives
    \begin{align}
        \mathrm{TV}(\mathbb{P}_0^{t}, \mathbb{P}_m^{t}) 
        \leq \sqrt{\frac{1}{2}D_{\mathrm{KL}}(\mathbb{P}_0^{t}\Vert \mathbb{P}_m^{t})}
        \leq \frac{1}{2\sigma}\sqrt{\sum_{j=1}^{M_i} \mathbb{E}_0^{t}\bigl[ N^{t}_j \bigr] (\bar{v}^j_m)^2}.
    \end{align}
    Then, the average TV distance is bounded as follows:
    \begin{align}
        \frac{1}{M_i}\sum_{m=1}^{M_i}\mathrm{TV}(\mathbb{P}_0^{t}, \mathbb{P}_m^{t})
        & \leq \frac{1}{2\sigma}\cdot\frac{1}{M_i}\sum_{m=1}^{M_i}\sqrt{\sum_{j=1}^{M_i} \mathbb{E}_0^{t} \bigl[ N^{t}_j \bigr] (\bar{v}^j_m)^2} \\
        & \leq \frac{1}{2\sigma} \sqrt{\frac{1}{M_i}\sum_{m=1}^{M_i} \sum_{j=1}^{M_i} \mathbb{E}_0^{t} \bigl[ N^{t}_j \bigr] (\bar{v}^j_m)^2} \label{eq:jensen} \\
        & \leq \frac{1}{2\sigma} \sqrt{\frac{1}{M_i} \sum_{j=1}^{M_i} \mathbb{E}_0^{t} \bigl[ N^{t}_j \bigr] \sum_{m=1}^{M_i} (\bar{v}^j_m)^2} \\
        & \leq \frac{C\epsilon_i}{2\sigma} \sqrt{\frac{1}{M_i} \sum_{j=1}^{M_i} \mathbb{E}_0^{t} \bigl[ N^{t}_j \bigr]} \label{eq:use_lemma_5} \\
        & \leq \frac{C\epsilon_i}{2\sigma} \sqrt{\frac{t}{M_i}} \label{eq:add_nj},
    \end{align}
    where \eqref{eq:jensen} follows from Jensen's inequality, \eqref{eq:use_lemma_5} uses Lemma 5 in \citep{Scarlett17} stating that $\sum_{m=1}^{M_i} (\bar{v}^j_m)^2 \leq C \epsilon^2$ for all $j$ and some universal constant $C$, and \eqref{eq:add_nj} follows from $\sum_{j=1}^{M_i} N^{t}_j = t$.
\end{proof}

\begin{lemma}
    \label{lem:change_measure}
    Under the preceding definitions, if $i\in [B]$ and $t \in (T_{i-1}, T_i]$, then
    \begin{align}
        \mathbb{P}_m^t\bigl((x_t \notin \mathcal{R}_m )\cap A_i\bigr)
        \geq \mathbb{P}_0(A_i) - \frac{3}{2} \mathrm{TV}(\mathbb{P}_m^{T_{i-1}}, \mathbb{P}_0^{T_{i-1}}) - \mathbb{P}_0^t \bigl((x_t \in \mathcal{R}_m )\cap A_i\bigr)
    \end{align}
\end{lemma}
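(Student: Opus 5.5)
The plan is a change of measure carried out entirely at the single time $T_{i-1}$. The key observation is that on the bad event $A_i$, nothing new is observed between the start of time $T_{i-1}$ and the start of any $t\in(T_{i-1},T_i]$. Let $E=\{x_t\notin\mathcal{R}_m\}$. I will show that $E\cap A_i$ is determined by the observations available at the start of time $T_{i-1}$. Then a single TV bound at time $T_{i-1}$ transfers its probability from $\mathbb{P}_m$ to $\mathbb{P}_0$. This gives the claim with coefficient $1$ in front of the TV term, which is stronger than the stated $\frac{3}{2}$. As in the surrounding text, I work with deterministic algorithms, so $x_t$ and the $t_j$ are functions of the observations received so far.

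\emph{Step 1 (measurability).} Fix $t\in(T_{i-1},T_i]$. On $A_i$ we have $t_{i-1}<T_{i-1}$ (vacuous for $i=1$, since $t_0=0=T_0$) and $t_i\ge T_i\ge t$. So time $t$ lies in the $i$-th batch, and the observations available at the start of time $t$ are exactly those from batches $1,\dots,i-1$. These are the same observations available at the start of time $T_{i-1}$, because $t_{i-1}\le T_{i-1}-1$ and the next reveal happens at $t_i\ge T_i>T_{i-1}$. Consequently, on $A_i$ the point $x_t$ is a deterministic function of the data seen by time $T_{i-1}$. By Lemma~\ref{lem:aux_bad_event}, $A_i$ itself is determined at the start of time $T_{i-1}$. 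Hence $E\cap A_i$ is an event in the $\sigma$-algebra of observations up to the start of $T_{i-1}$. This gives, under both measures,
\begin{align*}
\mathbb{P}_m^t(E\cap A_i)=\mathbb{P}_m^{T_{i-1}}(E\cap A_i), \qquad \mathbb{P}_0^t(E\cap A_i)=\mathbb{P}_0^{T_{i-1}}(E\cap A_i).
\end{align*}
The same holds with $E^c$ in place of $E$. For $i=1$, no observations exist at $T_0$, so the relevant $\sigma$-algebra is trivial and $\mathrm{TV}=0$. The argument still goes through, using that $t_1$ is predetermined.

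\emph{Step 2 (change of measure).} By the definition of total variation,
\[
\mathbb{P}_m^{T_{i-1}}(E\cap A_i)\ge \mathbb{P}_0^{T_{i-1}}(E\cap A_i)-\mathrm{TV}(\mathbb{P}_m^{T_{i-1}},\mathbb{P}_0^{T_{i-1}}).
\]
Then split
\[
\mathbb{P}_0^{T_{i-1}}(E\cap A_i)=\mathbb{P}_0^{T_{i-1}}(A_i)-\mathbb{P}_0^{T_{i-1}}(E^c\cap A_i).
\]
Lemma~\ref{lem:aux_bad_event} gives $\mathbb{P}_0^{T_{i-1}}(A_i)=\mathbb{P}_0(A_i)$, and Step 1 converts the last term back to $\mathbb{P}_0^t((x_t\in\mathcal{R}_m)\cap A_i)$. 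Chaining these yields the lemma. The final inequality uses $\mathrm{TV}\le\frac{3}{2}\mathrm{TV}$.

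\emph{Main obstacle.} The delicate part is Step 1. The measures $\mathbb{P}^t$ and $\mathbb{P}^{T_{i-1}}$ live on different observation spaces, so the identification must be phrased as restricting or marginalizing a single underlying measure on all data. I also need to check carefully that $x_t\mathbb{I}(A_i)$ depends only on pre-$T_{i-1}$ data, including the boundary cases $t_{i-1}=T_{i-1}-1$ and $t=T_i$. If that identification proves awkward, the fallback is the inequality mentioned in the outline,
\[
\mathbb{P}_f(E\cap A)+\mathbb{P}_0(E^c\cap A)\ge\int_A\min\{d\mathbb{P}_f,d\mathbb{P}_0\},
\]
bounding the integral below by $\mathbb{P}_0(A)$ minus TV terms. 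That route plausibly produces the extra $\frac{1}{2}\mathrm{TV}$ appearing in the statement.
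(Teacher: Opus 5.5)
Your proof is correct, and it establishes the inequality with coefficient $1$ on the total variation term, which implies the stated $\frac{3}{2}$. The overall structure matches the paper's: move to time $T_{i-1}$ using the fact that nothing is revealed between $T_{i-1}$ and $t$ on $A_i$, then invoke Lemma~\ref{lem:aux_bad_event}.

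The difference is in Step~2. The paper takes what you call your fallback, following \citep{Gao19}:
\begin{itemize}
\item It starts from $\mathbb{P}_m^t(E\cap A_i)+\mathbb{P}_0^t(E^c\cap A_i)\ge\int_{A_i}\min\{d\mathbb{P}_m^t,d\mathbb{P}_0^t\}$.
\item It identifies this integral with the corresponding one at time $T_{i-1}$.
\item It expands $\min\{a,b\}=\frac{1}{2}(a+b-|a-b|)$.
\item It bounds $\frac{1}{2}\int_{A_i}|d\mathbb{P}_m^{T_{i-1}}-d\mathbb{P}_0^{T_{i-1}}|$ by $\mathrm{TV}$, and separately bounds $\frac{1}{2}\bigl(\mathbb{P}_0^{T_{i-1}}(A_i)-\mathbb{P}_m^{T_{i-1}}(A_i)\bigr)$ by $\frac{1}{2}\mathrm{TV}$.
\end{itemize}
Treating these two pieces separately is exactly where the extra $\frac{1}{2}$ comes from; combined, they equal $\int_{A_i}(d\mathbb{P}_0-d\mathbb{P}_m)_+\le\mathrm{TV}$.

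You instead push the whole event $E\cap A_i$ back to the $\sigma$-algebra at the start of $T_{i-1}$ and apply $|\mathbb{P}(S)-\mathbb{Q}(S)|\le\mathrm{TV}(\mathbb{P},\mathbb{Q})$ once. This requires you to argue explicitly that $\mathbb{I}(E\cap A_i)$ is a function of the pre-$T_{i-1}$ observations, i.e., that $x_t$ is determined there on $A_i$. The paper only needs the two measures restricted to $A_i$ to coincide across the two times, phrased at the level of densities in \eqref{eq:no_observation}. Both rest on the same structural fact and on the reduction to deterministic algorithms, so neither needs more than the other. Your boundary checks ($t_{i-1}=T_{i-1}-1$, $t=T_i$, and $i=1$ with a trivial $\sigma$-algebra) are all correct.

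Your version is shorter, avoids manipulating restricted densities, and gives the sharper constant. The paper's version follows an established template. The constant is immaterial downstream, since it is absorbed into $C_3$ in \eqref{eq:bound_to_tune}.
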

\begin{proof}
    Notice that
    \begin{align}
        & \mathbb{P}_m^t\bigl((x_t \notin \mathcal{R}_m )\cap A_i\bigr) + \mathbb{P}_0^t\bigl((x_t \in \mathcal{R}_m )\cap A_i\bigr) \nonumber \\
        & \qquad = \int_{(x_t \notin \mathcal{R}_m )\cap A_i} d\mathbb{P}_m^t + \int_{(x_t \in \mathcal{R}_m )\cap A_i} d\mathbb{P}_0^t \\
        & \qquad\geq  \int_{(x_t \notin \mathcal{R}_m )\cap A_i} \min\{d\mathbb{P}_m^t, d\mathbb{P}_0^t\} + \int_{(x_t \in \mathcal{R}_m )\cap A_i} \min\{d\mathbb{P}_m^t, d\mathbb{P}_0^t\} \\
        & \qquad = \int_{A_i} \min\{d\mathbb{P}_m^t, d\mathbb{P}_0^t\},
    \end{align}
    which implies
    \begin{align}
        \mathbb{P}_m^t\bigl((x_t \notin \mathcal{R}_m )\cap A_i\bigr)
        \geq \int_{A_i} \min\{d\mathbb{P}_m^t, d\mathbb{P}_0^t\} - \mathbb{P}_0^t \bigl((x_t \in \mathcal{R}_m )\cap A_i\bigr). \label{eq:min_dp}
    \end{align}
    To lower bound the integral in \eqref{eq:min_dp},
    notice that if $t \in (T_{i-1}, T_i]$ and $A_i = \{t_{i-1} < T_{i-1}, t_i \geq T_i \}$ happens, 
    then the algorithm has access to the same set of observations $y_1,\ldots, y_{t_{i-1}} $ at the start of time $t$ and at the start of time $T_{i-1}$ (or observes nothing if $i=1$),
    which implies the following (similar to step (c) of Eq.~(17) in \citep{Gao19}):
    \begin{align}
        \int_{A_i} \min\{d\mathbb{P}_m^t, d\mathbb{P}_0^t\}
        & = \int_{A_i} \min\{d\mathbb{P}_m^{T_{i-1}}, d\mathbb{P}_0^{T_{i-1}}\}. \label{eq:no_observation}
    \end{align}
    To proceed, we follow (18) in \citep{Gao19}:
    \begin{align}
        \int_{A_i} \min\{d\mathbb{P}_m^{T_{i-1}}, d\mathbb{P}_0^{T_{i-1}}\}
        & = \int_{A_i} \frac{1}{2} (d\mathbb{P}_m^{T_{i-1}} + d\mathbb{P}_0^{T_{i-1}} - \lvert d\mathbb{P}_m^{T_{i-1}} - d\mathbb{P}_0^{T_{i-1}} \rvert) \\
        & = \frac{1}{2} (\mathbb{P}_m^{T_{i-1}}(A_i) + \mathbb{P}_0^{T_{i-1}}(A_i)) - \frac{1}{2}\int_{A_i} \lvert d\mathbb{P}_m^{T_{i-1}} - d\mathbb{P}_0^{T_{i-1}} \rvert \\
        & \geq \mathbb{P}_0^{T_{i-1}}(A_i) - \frac{1}{2} (\mathbb{P}_0^{T_{i-1}}(A_i) - \mathbb{P}_m^{T_{i-1}}(A_i)) - \mathrm{TV}(\mathbb{P}_m^{T_{i-1}}, \mathbb{P}_0^{T_{i-1}}) \label{eq:tv_1} \\
        & \geq \mathbb{P}_0^{T_{i-1}}(A_i) - \frac{3}{2} \mathrm{TV}(\mathbb{P}_m^{T_{i-1}}, \mathbb{P}_0^{T_{i-1}}) \label{eq:tv_2} \\
        & = \mathbb{P}_0(A_i) - \frac{3}{2} \mathrm{TV}(\mathbb{P}_m^{T_{i-1}}, \mathbb{P}_0^{T_{i-1}}) \label{eq:min_dp_bound},
    \end{align}
    where \eqref{eq:tv_1} follows since $\mathrm{TV}(\mathbb{P}, \mathbb{Q}) = \frac{1}{2}\int\lvert d\mathbb{P} - d\mathbb{Q} \rvert \geq \frac{1}{2}\int_A \lvert d\mathbb{P} - d\mathbb{Q} \rvert $, \eqref{eq:tv_2} follows since $ \lvert \mathbb{P}(A)-\mathbb{Q}(A) \rvert \leq \mathrm{TV}\left(\mathbb{P}, \mathbb{Q}\right) $, and \eqref{eq:min_dp_bound} follows from Lemma \ref{lem:aux_bad_event}. Combining \eqref{eq:min_dp}, \eqref{eq:no_observation}, and \eqref{eq:min_dp_bound} leads to the desired result.
\end{proof}

\subsubsection{Completion of the Proof of Theorem \ref{thm:lower_adaptive_batches}}
\label{app:proof_thm_2}

We will give full proof for the SE kernel, then discuss the differences for the Mat\'ern kernel.
The idea is to show that for any algorithm that adaptively chooses batch sizes, its expected regret averaged over all functions in $\{\mathcal{F}_{i}\}_{i=1}^B$ is lower bounded. Our proof relies on the idea of ``change of measure'', which converts probabilities expressed in $\mathbb{P}_m$ (with $f=f_m\in \mathcal{F}_i$ for some batch index $i$) to $\mathbb{P}_0$ (with $f=0$).
For fixed $i\in[B]$, we lower bound the expected regret averaged over $f_{m} \in \mathcal{F}_{i}$ as follows:
\begin{align}
    \frac{1}{M_i}\sum_{m=1}^{M_i}\mathbb{E}_m[R_T]
    & = \frac{1}{M_i}\sum_{m=1}^{M_i}\sum_{t=1}^{T}\mathbb{E}_m^t[r_t] \\
    & \geq \frac{1}{M_i}\sum_{m=1}^{M_i}\sum_{t=T_{i-1}+1}^{T_i}\mathbb{E}_m^t[r_t] \\
    & \geq \epsilon_i\sum_{t=T_{i-1}+1}^{T_i} \frac{1}{M_i}\sum_{m=1}^{M_i} \mathbb{P}_m^t(x_t \notin \mathcal{R}_m) \label{eq:regret_to_prob} \\
    & \geq \epsilon_i\sum_{t=T_{i-1}+1}^{T_i} \frac{1}{M_i}\sum_{m=1}^{M_i} \mathbb{P}_m^t\bigl((x_t \notin \mathcal{R}_m) \cap A_i\bigr), \label{eq:to_bound}
\end{align}
where \eqref{eq:regret_to_prob} follows from Lemma \ref{lem:aux_simple_regret}.  We continue the above steps as follows:
\begin{align}
    \frac{1}{M_i}\sum_{m=1}^{M_i}\mathbb{E}_m[R_T]
    & \geq  \epsilon_i\sum_{t=T_{i-1}+1}^{T_i} \frac{1}{M_i}\sum_{m=1}^{M_i} \mathbb{P}_m^t\bigl((x_t \notin \mathcal{R}_m) \cap A_i\bigr) \\
    & \geq \epsilon_i\sum_{t=T_{i-1}+1}^{T_i} \frac{1}{M_i}\sum_{m=1}^{M_i} \Bigl(\mathbb{P}_0(A_i) - \frac{3}{2} \mathrm{TV}(\mathbb{P}_m^{T_{i-1}}, \mathbb{P}_0^{T_{i-1}}) - \mathbb{P}_0^t \bigl((x_t \in \mathcal{R}_m )\cap A_i\bigr)\Bigr) \label{eq:apply_TV_lem} \\
    & \geq \epsilon_i(T_i-T_{i-1}) \Bigl(\frac{M_i-1}{M_i}\mathbb{P}_0(A_i) - \frac{3}{2} \frac{1}{M_i}\sum_{m=1}^{M_i}\mathrm{TV}(\mathbb{P}_m^{T_{i-1}}, \mathbb{P}_0^{T_{i-1}})\Bigr) \label{eq:remove_t} \\
    & \geq \epsilon_i (T_i-T_{i-1}) \Bigl( \frac{M_i-1}{M_i} \mathbb{P}_0(A_i) - \frac{3C_1\epsilon_i}{4\sigma}\sqrt{\frac{T_{i-1}}{M_i}}\Bigr) \label{eq:bound_TV},
\end{align}
where \eqref{eq:apply_TV_lem} follows from Lemma \ref{lem:change_measure}, \eqref{eq:remove_t} follows from Lemma \ref{lem:aux_bad_event} along with $\sum_{m=1}^{M_i} \mathbb{P}_0^t \bigl((x_t \in \mathcal{R}_m )\cap A_i\bigr) = \mathbb{P}_0^t(A_i) = \mathbb{P}_0(A_i)$, and \eqref{eq:bound_TV} follows from Lemma \ref{lem:aux_avg_tv_bound} with $t = T_{i-1}$ and $C_1$ being the constant therein.\footnote{Here $C_{1}$, and the ensuing $C_{2}, C_{3}, \ldots$, are all constants, which we don't repeat to avoid clutter.} To proceed, we consider two cases for the batch index $i$.

\textbf{Case 1:} $i=1$. In this case $T_{i-1} = 0$, so
\begin{align}
    \frac{1}{M_1}\sum_{m=1}^{M_1}\mathbb{E}_m[R_T]
    \geq \epsilon_1 T_1 \frac{M_1-1}{M_1} \mathbb{P}_0(A_1)
    \geq \epsilon_1 \frac{M_1-1}{M_1} \cdot T^{\frac{1-\eta}{1-\eta^B}}(\log T)^{\frac{d\eta(\eta- \eta^B)}{1-\eta^B}} \mathbb{P}_0(A_1),
\end{align}
where the second inequality follows from the definition of $T_1$ in Proposition \ref{prop:reference_batches_property}. It remains to choose $\epsilon_1$, so we choose $\epsilon_1=\Theta(\Psi)=\Theta(1)$ (small enough to meet the requirement that $\frac{\epsilon_1}{\Psi}$ is sufficiently small). Also notice that $\frac{M_1-1}{M_1} \geq 3/4$ since $M_1 \gg 1$, which implies
\begin{align}
    \frac{1}{M_1}\sum_{m=1}^{M_1}\mathbb{E}_m[R_T]
    & \geq C_2\cdot T^{\frac{1-\eta}{1-\eta^B}}(\log T)^{\frac{d\eta(\eta- \eta^B)}{1-\eta^B}} \mathbb{P}_0(A_1). \label{eq:first_batch_regret}
\end{align}

\textbf{Case 2:} $i\geq 2$. 
Continuing from \eqref{eq:bound_TV}, we have
\begin{align}
    \frac{1}{M_i}\sum_{m=1}^{M_i}\mathbb{E}_m[R_T]
    & \geq \epsilon_i(T_i-T_{i-1})\Bigl(\frac{M_i-1}{M_i}\mathbb{P}_0(A_i)-\frac{3C_1\epsilon_i}{4\sigma}  \sqrt{\frac{T_{i-1}}{M_i}}\Bigr) \\
    & \geq \frac{3\epsilon_i}{4}(T_i-T_{i-1}) \Bigl(\mathbb{P}_0(A_i)- C_{3}\epsilon_i \sqrt{\frac{T_{i-1}}{M_i}}\Bigr). \label{eq:bound_to_tune}
\end{align}
where the second inequality follows since $\frac{M_i-1}{M_i} \geq \frac{3}{4}$ (since $M_i \gg 1$), and $\sigma$ is absorbed into $C_{3}$ since we assume it is constant.
It remains to choose $\epsilon_i$, subject to $\epsilon_i/\Psi$ being sufficiently small.
We choose $\epsilon_i$ to make the second term in the bracket in \eqref{eq:bound_to_tune} to equal $\frac{1}{2B}$, since this simplifies \eqref{eq:bound_to_tune} and aligns with the fact that $\mathbb{P}_0(A_i)$ is $1/B$ on average.
This leads to $\epsilon_i = \frac{\sqrt{M_i}}{2C_3 B\sqrt{T_{i-1}}}$. Then, substituting $M_i$ from \eqref{eq:m_se} leads to
\begin{align}
    \epsilon_i = C_{4} \cdot \sqrt{\frac{(\log(\Psi/\epsilon_i))^{d\eta}}{B^2T_{i-1}}},  \label{eq:epsilon_no_M}
\end{align}
which which is a "non-explicit" choice since it depends on $\log\frac{\Psi}{\epsilon_i}$. To remove this dependence, we study the log term as follows:
\begin{align}
    \log \frac{\Psi}{\epsilon_i}
    & = \log C_{5} + \frac{1}{2} \log \left( B^2\Psi^2T_{i-1} \right) - \frac{d}{4} \Bigl(\log \log\frac{\Psi}{\epsilon_i}\Bigr). \label{eq:remove_epsilon}
\end{align}
Since $\epsilon_i / \Psi$ is sufficiently small, it follows that $\log \frac{\Psi}{\epsilon_i}$ is sufficiently large. Also notice that $d$ is a constant, so we can consider $\log \log \frac{\Psi}{\epsilon_i} \leq \frac{2}{d}\log \frac{\Psi}{\epsilon_i}$. Substituting back into \eqref{eq:remove_epsilon} gives $\log\frac{\Psi}{\epsilon_i}= \Theta\bigl(\log (B^2\Psi^2T_{i-1}) \bigr)$, and thus \eqref{eq:epsilon_no_M} becomes
\begin{align}
    \epsilon_i 
    = C_{6} \cdot \sqrt{\frac{(\log B^2 \Psi^2 T_{i-1})^{d/2}}{B^2T_{i-1}}}.
\end{align}

Now, to ensure $\frac{\epsilon_i}{\Psi}$ is sufficiently small, we need $\Psi^{-1} =O(B\sqrt{T_{i-1}}) $ with a sufficiently small implied constant (since $t (\log (1/t))^{d/2} \to 0$ as $t \to 0$); this is trivially satisfied since we assume $\Psi$ is a constant. By absorbing $\Psi$ into the constant, we have
\begin{align}
    \epsilon_i 
    = C_{7} \cdot \sqrt{\frac{(\log B^2 T_{i-1})^{d/2}}{B^2T_{i-1}}}. \label{eq:epsilon_value}
\end{align}

Then, continuing from \eqref{eq:bound_to_tune} and recalling that we equated $C_{3}\epsilon_i \sqrt{\frac{T_{i-1}}{M_i}}$ with $\frac{1}{2B}$, we obtain
\begin{align}
    \frac{1}{M_i}\sum_{m=1}^{M_i}\mathbb{E}_m[R_T]
    & \geq \frac{3\epsilon_i}{4}(T_i-T_{i-1}) \Bigl(\mathbb{P}_0(A_i)- \frac{1}{2B} \Bigr) \label{eq:use_M} \\
    & = C_{8} B^{-1} (\log (B^2 T_{i-1}))^{d/4} (T_{i-1})^{-1/2}(T_i-T_{i-1}) \Bigl(\mathbb{P}_0(A_i)- \frac{1}{2B} \Bigr) \label{eq:use_epsilon} \\
    & \geq C_{9} B^{-1} (\log T)^{d/4} (T_{i-1})^{-1/2}(T_i-T_{i-1})\Bigl(\mathbb{P}_0(A_i)- \frac{1}{2B} \Bigr) \label{eq:remove_B} \\
    & = C_{10} B^{-1} T^{\frac{1-\eta}{1-\eta^B}}(\log T)^{\frac{d\eta(\eta-\eta^B)}{1-\eta^B}} \Bigl(\mathbb{P}_0(A_i)- \frac{1}{2B} \Bigr) \label{eq:remain_batch_regret},
\end{align}
where 
\eqref{eq:use_epsilon} substitutes \eqref{eq:epsilon_value}, 
\eqref{eq:remove_B} follows from $\log (B^2T_{i-1}) \geq \log T_{i-1} =\Theta(\log T)$,
and \eqref{eq:remain_batch_regret} follows from the first and third points of Proposition \ref{prop:reference_batches_property} (with $\eta = 1/2$).
Now, unifying $i=1$ in \eqref{eq:first_batch_regret} and $i\geq 2$ in \eqref{eq:remain_batch_regret} leads to
\begin{align}
    \frac{1}{M_i}\sum_{m=1}^{M_i}\mathbb{E}_m[R_T]
    & \geq C \cdot B^{-1} T^{\frac{1-\eta}{1-\eta^B}}(\log T)^{\frac{d\eta(\eta-\eta^B)}{1-\eta^B}} \Bigl(\mathbb{P}_0(A_i)- \frac{1}{2B} \Bigr)
\end{align}
for some constant $C$ and all $i\in[B]$. Averaging over $i\in [B]$ and applying $\sum_{i=1}^B \mathbb{P}_0(A_i) \geq 1 $ leads to
\begin{align}
    \frac{1}{B}\sum_{i=1}^B \frac{1}{M_i}\sum_{m=1}^{M_i}\mathbb{E}_m[R_T]
    & = \Omega\Bigl(B^{-2}T^{\frac{1-\eta}{1-\eta^B}}(\log T)^{\frac{d\eta(\eta-\eta^B)}{1-\eta^B}} \Bigr),
\end{align}
which implies the desired lower bound as the average regret lower bounds worst-case regret.

For the Mat\'ern kernel, the proof is entirely analogous but with fewer log terms, so we only highlight the differences:\footnote{We use $C$ (with subscripts) to refer to constants. If one such symbol is already used in the analysis for the SE kernel, its value here for the Mat\'ern kernel may be different.}

\textbf{Case 1:} $i=1$. By the same argument, we have
\begin{align}
    \frac{1}{M_1}\sum_{m=1}^{M_1}\mathbb{E}_m[R_T]
    \geq \epsilon_1 T_1 \mathbb{P}_0(A_1) \Bigl(1-\frac{1}{M_1}\Bigr)
    \geq C_1 \cdot T^{\frac{1-\eta}{1-\eta^B}}\mathbb{P}_0(A_1) \label{eq:first_batch_regret_mat}.
\end{align}

\textbf{Case 2:} $i\geq 2$. By the same argument, we arrive at \eqref{eq:bound_to_tune}:
\begin{align}
    \frac{1}{M_i}\sum_{m=1}^{M_i}\mathbb{E}_m[R_T]
    & \geq \frac{3\epsilon_i}{4}(T_i-T_{i-1}) \Bigl(\mathbb{P}_0(A_i)- C_2\epsilon_i  \sqrt{\frac{T_{i-1}}{M_i}}\Bigr). \label{eq:bound_to_tune_mat}
\end{align}
Then we choose $\epsilon_i$, which is simpler than the SE kernel case because there is no logarithmic dependence on $\epsilon_i$:
\begin{align}
    \epsilon_i 
    = \frac{\sqrt{M_i}}{2C_2B\sqrt{T_{i-1}}}
    = C_{3} \cdot \sqrt{\frac{\Psi^{\frac{1}{\eta}-2}(\epsilon_i)^{2-\frac{1}{\eta}}}{B^2T_{i-1}}}
    \Rightarrow
    \epsilon_i
    = C_{4} \cdot \Psi^{1-2\eta} (B^2T_{i-1})^{-\eta}.
\end{align}
Notice that $\Psi^{-1}=O(B\sqrt{T_{i-1}}) $ with a sufficiently small implied constant implies that $\epsilon_i/\Psi $ is sufficiently small, which is satisfied because $\Psi$ is assumed to be constant.
Then, absorbing $\Psi$ into the constant and substituting $\epsilon_i$ into \eqref{eq:bound_to_tune_mat} gives
\begin{align}
    \frac{1}{M_i}\sum_{m=1}^{M_i}\mathbb{E}_m[R_T]
    \geq C_{5} \cdot B^{-2\eta} T^{\frac{1-\eta}{1-\eta^B}} \Bigl(\mathbb{P}_0(A_i)-\frac{1}{2B}\Bigr) \label{eq:remain_batch_regret_mat}.
\end{align}
Unifying $i=1$ in \eqref{eq:first_batch_regret_mat} and $i\geq 2$ in \eqref{eq:remain_batch_regret_mat} leads to
\begin{align}
    \frac{1}{M_i}\sum_{m=1}^{M_i}\mathbb{E}_m[R_T]
    & \geq C \cdot B^{-2\eta} T^{\frac{1-\eta}{1-\eta^B}} \Bigl(\mathbb{P}_0(A_i)-\frac{1}{2B}\Bigr)
\end{align}
for some constant $C$ and all $i\in[B]$. Averaging over $i\in [B]$ and applying $\sum_{i=1}^B \mathbb{P}_0(A_i) \geq 1 $ leads to
\begin{align}
    \frac{1}{B}\sum_{i=1}^B \frac{1}{M_i}\sum_{m=1}^{M_i}\mathbb{E}_m[R_T]
    & = \Omega\Bigl(B^{-(2\eta+1)} T^{\frac{1-\eta}{1-\eta^B}} \Bigr),
\end{align}
which implies the desired result as average regret lower bounds worst-case regret.

\subsubsection{Completion of the Proof of Corollary \ref{cor:lower_adaptive_batches}}
\label{app:proof_cor}
Our proof is based on the arguments in \citep{Scarlett17} that derives high probability bounds from expected regret bounds in the fully sequential setting. We will first discuss this argument, then extend it to the setting with adaptive batches.

\textbf{High probability bounds in the fully sequential setting. }
In the fully sequential setting,  equation (79) in \citep{Scarlett17} shows that for any algorithm, there exists a $f\in\mathcal{F}$ such that $\mathbb{E}[R_T] \geq T\epsilon$ when $T = \Theta( M/\epsilon^2 )$. Recalling that $f\in[-2\epsilon, 2\epsilon]$, it follows that $R_T \leq 4T\epsilon$ with probability $1$. 
Now consider some $c\in (0,1)$. By applying the reverse Markov inequality (Markov's inequality applied to $4T\epsilon - R_T$), we have
\begin{align}
    \mathbb{P}(R_T \geq c T\epsilon)
    & = \mathbb{P}(4T\epsilon - R_T \leq 4T\epsilon -  c T\epsilon) \label{eq:rev_1} \\
    & = 1 - \mathbb{P}(4T\epsilon - R_T \geq 4T\epsilon -  c T\epsilon) \\
    & \geq 1 - \frac{4T\epsilon - \mathbb{E}[R_T]}{4T\epsilon -  c T\epsilon}
    = \frac{\mathbb{E}[R_T] - cT\epsilon}{4T\epsilon -  c T\epsilon} \\
    & \geq \frac{T\epsilon - cT\epsilon}{4T\epsilon -  c T\epsilon}
    = \frac{1 - c}{4 -  c}. \label{eq:rev_2}
\end{align}
Then, by choosing $c$ close to $0$, we have $R_T = \Omega (T\epsilon)$ with probability at least $\delta$, where $\delta$ can be any constant probability below $\frac{1}{4}$.

\textbf{Extension to adaptive batches.} We focus on the SE kernel, as the analysis for the Mat\'ern kernel is entirely analogous.
For the bad events $A_1, \ldots, A_B$, since $\sum_{i=1}^B \mathbb{P}_0(A_i) \geq 1 $, there exists $i\in [B]$ such that $\mathbb{P}_0(A_i) \geq 1/B$. 
To proceed, we consider two cases for $i$.

If $i\geq 2$, then from the proof in Appendix \ref{app:proof_thm_2}, we lower bound $R_T$ by regret in the $i$-th range of the reference batch sizes, i.e., $\{t:T_{i-1}<t\leq T_i\} $, which we denote as $\bar{R}^i$:
\begin{align}
    \frac{1}{M_i}\sum_{m=1}^{M_i}\mathbb{E}_m[R_T]
    \geq \frac{1}{M_i}\sum_{m=1}^{M_i}\sum_{t=T_{i-1}+1}^{T_i}\mathbb{E}_m^t[r_t]
    = \frac{1}{M_i}\sum_{m=1}^{M_i} \mathbb{E}_m[\bar{R}^i],
\end{align}
where in the rightmost expectation we note that  $\mathbb{E}^{T_i}_m[\bar{R}^i] = \mathbb{E}^T_m[\bar{R}^i] = \mathbb{E}_m[\bar{R}^i]$ since $\bar{R}^i$ only depends on points chosen up to time $T_i$. 
Then we lower bound $\bar{R}^i$ as follows:
\begin{align}
    \frac{1}{M_i}\sum_{m=1}^{M_i} \mathbb{E}_m[\bar{R}^i]
    & \geq \frac{3\epsilon_i}{4}(T_i-T_{i-1}) \Bigl(\mathbb{P}_0(A_i)- \frac{1}{2B} \Bigr) \label{eq:follow_proof} \\
    & \geq \frac{3\epsilon_i}{8B}(T_i-T_{i-1}) \label{eq:add_indicator} \\
    & = \Omega\Bigl(B^{-2}T^{\frac{1-\eta}{1-\eta^B}}(\log T)^{\frac{d\eta(\eta-\eta^B)}{1-\eta^B}} \Bigr), \label{eq:follow_proof_2}
\end{align}
where \eqref{eq:follow_proof} follows from \eqref{eq:use_M}, \eqref{eq:add_indicator} uses $\mathbb{P}_0^T(A_i) \geq 1/B$, and \eqref{eq:follow_proof_2} follows from \eqref{eq:remain_batch_regret}.
Notice that by the construction of $\mathcal{F}_i$, every $f\in \mathcal{F}_i$ satisfies $f\in[-2\epsilon_i, 2\epsilon_i]$, which means $\bar{R}^i \leq 4\epsilon_i(T_i-T_{i-1})$ with probability $1$.
Then, defining $L_i = \frac{3\epsilon_i}{8B}(T_i-T_{i-1})$, following the argument in \eqref{eq:rev_1}-\eqref{eq:rev_2} leads to
\begin{align}
    \frac{1}{M_i}\sum_{m=1}^{M_i}\mathbb{P}_m(\bar{R}^i\geq cL_i)
    & \geq \frac{(1-c)L_i}{4(T_i-T_{i-1})\epsilon_i-cL_i}
    = \frac{1-c}{(32/3)B-c}
    > \frac{1-c}{11B-c}.
\end{align}
This means by choosing $c$ close to $0$, there exists $f\in\mathcal{F}_i$ such that
\begin{align}
    R_T
    \geq \bar{R}^i
    \geq cL_i
    = \Omega\Bigl(B^{-2}T^{\frac{1-\eta}{1-\eta^B}}(\log T)^{\frac{d\eta(\eta-\eta^B)}{1-\eta^B}} \Bigr)
\end{align}
with probability at least $\delta$, where $\delta$ can be any probability below $\frac{1}{11B}$.

If $i=1$, then by following the analysis in Appendix \ref{app:proof_thm_2}, we have
\begin{align}
    \frac{1}{M_1}\sum_{m=1}^{M_1} \mathbb{E}_m[\bar{R}^1]
    \geq \epsilon_1 T_1 \frac{M_1-1}{M_1} \mathbb{P}_0(A_1)
    \geq \frac{3\epsilon_1}{8B} T_1
    = \Omega \Bigl(B^{-1} T^{\frac{1-\eta}{1-\eta^B}}(\log T)^{\frac{d\eta(\eta- \eta^B)}{1-\eta^B}}\Bigr).
\end{align}
Then, denote $L_1 = \frac{3\epsilon_1}{8B} T_1$, following the above argument for $i\geq 2$ leads to 
\begin{align}
    \frac{1}{M_1}\sum_{m=1}^{M_1}\mathbb{P}_m(\bar{R}^1\geq cL_1)
    & \geq \frac{(1-c)L_1}{4T_1\epsilon_1-cL_1}
    > \frac{1-c}{11B-c}.
\end{align}
This means by choosing $c$ close to $0$, there exists $f\in\mathcal{F}_i$ such that
\begin{align}
    R_T
    & \geq \bar{R}^1
    \geq cL_1
    = \Omega\Bigl(B^{-1}T^{\frac{1-\eta}{1-\eta^B}}(\log T)^{\frac{d\eta(\eta-\eta^B)}{1-\eta^B}} \Bigr)
    = \Omega\Bigl(B^{-2}T^{\frac{1-\eta}{1-\eta^B}}(\log T)^{\frac{d\eta(\eta-\eta^B)}{1-\eta^B}} \Bigr)
\end{align}
with probability at least $\delta$, where $\delta$ is any probability below $\frac{1}{11B}$.

\subsection{Proof of Theorem \ref{thm:upper_robust} (Robust Setting Upper Bounds)}
\label{app:proof_upper_robust}

The proof of Theorem \ref{thm:upper_robust} is based on several auxiliary results, which we present as follows.

\begin{proposition}
    \label{prop:rbpe_confidence_bounds}
    For any $\delta \in (0,1)$,
    running the robust-BPE algorithm with $\beta = \bigl( \Psi + \sqrt{2 \log \frac{|\mathcal{X}| B}{\delta}} \bigr)^2$, it holds with probability least $1-\delta$ that $f(x)\in[\mathrm{LCB}_i(x),\mathrm{UCB}_i(x)]$ for all $x\in\mathcal{X}$ and for all $i\in[B]$, i.e., we have valid confidence bounds in all batches.
\end{proposition}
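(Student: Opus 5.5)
The plan mirrors the proof of Proposition~\ref{prop:bpe_confidence_bounds}: apply Lemma~\ref{lem:aux_conf} separately to every pair $(x,i)\in\mathcal{X}\times[B]$ with failure probability $\delta/(|\mathcal{X}|B)$, and then take a union bound. With this choice, $\beta(\delta/(|\mathcal{X}|B)) = \bigl(\Psi+\sqrt{2\log\frac{|\mathcal{X}|B}{\delta}}\bigr)^2$, which is exactly the $\beta$ in the statement. The union bound over $|\mathcal{X}|B$ events then gives that, with probability at least $1-\delta$, $|f(x)-\mu^i(x)|\le\sqrt{\beta}\,\sigma^i(x)$ simultaneously for all $x$ and $i$. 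This is the same as $f(x)\in[\mathrm{LCB}_i(x),\mathrm{UCB}_i(x)]$.

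The one step that needs care is verifying the hypothesis of Lemma~\ref{lem:aux_conf}: within batch $i$, the sampled points must be chosen independently of the observations used to form $\mu^i$ and $\sigma^i$. I would argue this by inspecting Algorithm~\ref{algo:rbpe}. Within batch $i$, each $x_t$ maximizes $\sigma^i_{j-1}$ over $\mathcal{X}_{\xi,i}$. The posterior variance depends only on the input locations in $\mathcal{S}_i$, not on any $y$ values. So, conditional on $\mathcal{X}_i$, the points in $\mathcal{S}_i$ are a deterministic function of $\mathcal{X}_i$. The set $\mathcal{X}_{\xi,i}$ is in turn a deterministic function of $\mathcal{X}_i$ and the known map $d$.

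The set $\mathcal{X}_i$ itself depends only on observations from batches $1,\dots,i-1$. These are independent of the fresh noise in batch $i$, because the $z_t$ are i.i.d. Moreover, $\mu^i$ and $\sigma^i$ are computed using only the points and observations of batch $i$. Hence, conditioning on the history up to the end of batch $i-1$, Lemma~\ref{lem:aux_conf} applies for each fixed $x$. Averaging the conditional guarantee over the history gives the same unconditional failure probability, so the union bound goes through.

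A final point to check is that the statement only asks for $x\in\mathcal{X}$. Every explored point $x+\delta$ lies in $\mathcal{X}$ by the definition of $\Delta_\xi(x)$, so the union bound over $|\mathcal{X}|$ points already covers the perturbed points used in the elimination rule; the enlarged exploration set does not change the count. I expect no real obstacle beyond this independence check, which is where the robust modification could in principle interfere, and which it does not.
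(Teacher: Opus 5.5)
Your proposal is correct and matches the paper's proof, which defers to Proposition~\ref{prop:bpe_confidence_bounds}: Lemma~\ref{lem:aux_conf} is applied to each $(x,i)\in\mathcal{X}\times[B]$ with a union bound over $|\mathcal{X}|B$ events. This is justified because robust-BPE, like BPE, selects each batch's points by maximum-variance sampling, which does not depend on that batch's observations. Your explicit conditioning on the history through batch $i-1$, and your check that the perturbed points $x+\delta$ already lie in $\mathcal{X}$, spell out what the paper leaves implicit.
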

\begin{proof}
    This result is essentially the same as
    Proposition \ref{prop:bpe_confidence_bounds}, because just like BPE, robust-BPE does maximum variance sampling in each batch, which allows the use of Lemma \ref{lem:aux_conf}.
\end{proof}

We henceforth assume valid confidence bounds in all batches for the robust-BPE algorithm.
Then, the following lemma shows that an $\xi$-optimal point can never be eliminated by robust-BPE.

\begin{lemma}
    \label{lem:aux_robust_elimination}
    After all $N_i$ points are sampled in the $i$-th batch, if $x\in\mathcal{X}_i$ satisfies
    \begin{align}
        \min_{\delta\in\Delta_{\xi}(x)} \mathrm{UCB}_i(x+\delta) 
        < \max_{x\in{\mathcal{X}_i}} \min_{\delta\in\Delta_{\xi}(x)} \mathrm{LCB}_i(x+\delta), \label{eq:rbpe_subopt}
    \end{align}
    then $x$ is not $\xi$-optimal.
\end{lemma}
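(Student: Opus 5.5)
The plan is to argue by contraposition: assume $x$ is $\xi$-optimal, i.e.\ $x$ attains $\max_{x'\in\mathcal{X}}\min_{\delta\in\Delta_\xi(x')} f(x'+\delta)$, and show that \eqref{eq:rbpe_subopt} must fail. Throughout we work on the event of Proposition \ref{prop:rbpe_confidence_bounds}, so $\mathrm{LCB}_i(z)\le f(z)\le \mathrm{UCB}_i(z)$ for every $z\in\mathcal{X}$. Note that every perturbed point $x'+\delta$ with $\delta\in\Delta_\xi(x')$ lies in $\mathcal{X}$ by the definition of $\Delta_\xi$, so these bounds apply at all perturbed points.

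First, I would pass the pointwise confidence bounds through the inner minimum. Since $\mathrm{UCB}_i(x+\delta)\ge f(x+\delta)$ for each $\delta$, taking the minimum over $\delta\in\Delta_\xi(x)$ gives $\min_{\delta}\mathrm{UCB}_i(x+\delta)\ge \min_{\delta} f(x+\delta)$. This works because a pointwise inequality between functions is preserved under taking minima over the same set. Similarly, for any $x'\in\mathcal{X}_i$, $\min_{\delta\in\Delta_\xi(x')}\mathrm{LCB}_i(x'+\delta)\le \min_{\delta\in\Delta_\xi(x')} f(x'+\delta)$.

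Second, I would chain these bounds with the $\xi$-optimality of $x$. For every $x'\in\mathcal{X}_i\subseteq\mathcal{X}$,
\[
\min_{\delta\in\Delta_\xi(x')}\mathrm{LCB}_i(x'+\delta)\le \min_{\delta\in\Delta_\xi(x')} f(x'+\delta)\le \min_{\delta\in\Delta_\xi(x)} f(x+\delta)\le \min_{\delta\in\Delta_\xi(x)}\mathrm{UCB}_i(x+\delta).
\]
Taking the maximum over $x'\in\mathcal{X}_i$ on the left contradicts \eqref{eq:rbpe_subopt}, so a point satisfying \eqref{eq:rbpe_subopt} cannot be $\xi$-optimal.

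There is no real obstacle here. The only point needing care is the first step: checking that the confidence bounds hold at the perturbed points $x+\delta$, which may lie outside $\mathcal{X}_i$. This is covered because Proposition \ref{prop:rbpe_confidence_bounds} is stated for all $x\in\mathcal{X}$, not only for $x\in\mathcal{X}_i$. A minor notational point is that the bound variable $x$ in the max of \eqref{eq:rbpe_subopt} should be renamed $x'$ to avoid a clash with the fixed $x$.
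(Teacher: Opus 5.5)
Your proposal is correct and follows the paper's proof. The paper argues directly rather than by contraposition: it takes $x'$ to be the maximizer over $\mathcal{X}_i$ of $\min_{\delta\in\Delta_{\xi}(x')}\mathrm{LCB}_i(x'+\delta)$, then runs the same chain through the confidence bounds (using explicit minimizers $\delta_f,\delta_U,\delta'_L,\delta'_f$ in place of your ``pointwise inequalities survive the minimum'' step) to get $\min_{\delta\in\Delta_{\xi}(x)} f(x+\delta) < \min_{\delta\in\Delta_{\xi}(x')} f(x'+\delta)$.
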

\begin{proof}
    Denote $x' = \arg\max_{x\in{\mathcal{X}_i}} \min_{\delta\in\Delta_{\xi}(x)} \mathrm{LCB}_i(x+\delta) $. It suffices to show that when $x\in \mathcal{X}_i$ satisfies \eqref{eq:rbpe_subopt}, we have
    \begin{align}
        \min_{\delta\in\Delta_{\xi}(x)} f(x+\delta)
        < \min_{\delta\in\Delta_{\xi}(x')} f(x'+\delta).
    \end{align}
    We further define the following:
    \begin{align}
        \delta_f & = \argmin_{\delta\in\Delta_{\xi}(x)} f(x+\delta), \\
        \delta_U & = \argmin_{\delta\in\Delta_{\xi}(x)} \mathrm{UCB}_i(x+\delta), \\
        \delta'_L & = \argmin_{\delta\in\Delta_{\xi}(x')} \mathrm{LCB}_i(x'+\delta), \\
        \delta'_f & = \argmin_{\delta\in\Delta_{\xi}(x')} f(x'+\delta).
    \end{align}
    Then, by using the above definitions and the validity of the confidence bounds, we have
    \begin{align}
        \min_{\delta\in\Delta_{\xi}(x)} f(x+\delta) 
        & = f(x+\delta_f) \\
        & \leq f(x+\delta_U)
        \leq \mathrm{UCB}_i(x+\delta_U)
        = \min_{\delta\in\Delta_{\xi}(x)} \mathrm{UCB}_i(x+\delta) \\
        & < \min_{\delta\in\Delta_{\xi}(x')} \mathrm{LCB}_i(x'+\delta)
        = \mathrm{LCB}_i(x'+\delta'_L) \\
        & \leq \mathrm{LCB}_i(x'+\delta'_f)
        \leq f(x'+\delta'_L) = \min_{\delta\in\Delta_{\xi}(x')} f(x'+\delta).
    \end{align}
\end{proof}

The next lemma shows after maximum uncertainty sampling in a batch, all the remaining points after elimination have a uniform upper bound on the posterior variance.
\begin{lemma}
    \label{lem:aux_robust_posterior_variance}
    After all $N_i$ points are sampled in the $i$-th batch,
    for every element of $\mathcal{X}_{\xi,i}$, i.e., $x+\delta$ where $x\in \mathcal{X}_i$ and $\delta\in\Delta_\xi(x)$,
    its posterior variance is upper bounded by $\sigma^i(x+\delta) \leq \sqrt{\frac{C\gamma_{N_i}}{N_i}} $, where $C = \frac{2}{\log(1+\sigma^{-2})} $.
\end{lemma}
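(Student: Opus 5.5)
The plan is the standard maximum-variance-sampling argument, applied to the enlarged set $\mathcal{X}_{\xi,i}$ in place of $\mathcal{X}_i$. Fix batch $i$ and write $x_1,\dots,x_{N_i}$ for the points chosen within it. By line \ref{line:set_to_explore} of Algorithm \ref{algo:rbpe}, each $x_j$ maximizes $\sigma^i_{j-1}(\cdot)$ over $\mathcal{X}_{\xi,i}$. The set $\mathcal{X}_{\xi,i}$ is fixed for the whole batch, since $\mathcal{X}_i$ is determined at the start of the batch. Two facts drive the proof. First, posterior variance is non-increasing as more points are added. Second, the sum of the sampled posterior variances is controlled by the information gain.

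\textbf{Step 1 (monotonicity and maximality).} Take any $z\in\mathcal{X}_{\xi,i}$, that is, $z=x+\delta$ with $x\in\mathcal{X}_i$ and $\delta\in\Delta_\xi(x)$. For every $j\le N_i$ we have $\sigma^i(z)=\sigma^i_{N_i}(z)\le\sigma^i_{j-1}(z)$, because conditioning on more points cannot increase the GP posterior variance. We also have $\sigma^i_{j-1}(z)\le\sigma^i_{j-1}(x_j)$ by the argmax choice of $x_j$. Squaring and averaging over $j$ gives
\begin{align}
\sigma^i(z)^2\le\frac{1}{N_i}\sum_{j=1}^{N_i}\sigma^i_{j-1}(x_j)^2 .
\end{align}

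\textbf{Step 2 (information gain bound).} We use the standard argument of \citep{Srinivas10}. Since $k(x,x)\le1$ and $\lambda=\sigma^2$, we have $\sigma^{-2}\sigma^i_{j-1}(x_j)^2\le\sigma^{-2}$. Applying $s\le\frac{\sigma^{-2}}{\log(1+\sigma^{-2})}\log(1+s)$ for $s\in[0,\sigma^{-2}]$ gives
\begin{align}
\sum_{j=1}^{N_i}\sigma^i_{j-1}(x_j)^2\le\frac{1}{\log(1+\sigma^{-2})}\sum_{j=1}^{N_i}\log\bigl(1+\sigma^{-2}\sigma^i_{j-1}(x_j)^2\bigr)=\frac{2}{\log(1+\sigma^{-2})}I(f_{A};y_{A}),
\end{align}
where $A=\{x_1,\dots,x_{N_i}\}$, and the last quantity is at most $\frac{2}{\log(1+\sigma^{-2})}\gamma_{N_i}$. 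The equality is the usual chain-rule identity $I(f_A;y_A)=\frac12\sum_j\log(1+\sigma^{-2}\sigma_{j-1}(x_j)^2)$. This identity holds because the posterior variances depend only on the input locations and not on the observations, so it applies even though nothing is observed until the end of the batch. Combining with Step 1 yields $\sigma^i(z)^2\le C\gamma_{N_i}/N_i$ with $C=\frac{2}{\log(1+\sigma^{-2})}$, and taking square roots gives the claim.

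\textbf{Main obstacle.} The only delicate point is that $\gamma_{N_i}$ is defined as a maximum over subsets of $\mathcal{X}$. The points $x_j$ lie in $\mathcal{X}_{\xi,i}$, which could contain elements outside $\mathcal{X}$ if perturbations leave the domain. By the definition of $\Delta_\xi(x)$ in Section \ref{sec:setup_robust}, however, $x+\delta=x'\in\mathcal{X}$, so $\mathcal{X}_{\xi,i}\subseteq\mathcal{X}$ and the bound by $\gamma_{N_i}$ is valid. The second point needing care is the handling of repeated points and of the constant factor. The conversion $s\le c\log(1+s)$ only requires the bound $s\le\sigma^{-2}$, and repeated points are harmless in the mutual-information identity. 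The constant then matches $C=\frac{2}{\log(1+\sigma^{-2})}$ as stated.
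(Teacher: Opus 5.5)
Your proposal is correct and follows essentially the same argument as the paper: bound $\sum_{j}\sigma^i_{j-1}(x_j)^2$ by $C\gamma_{N_i}$ via the information-gain identity, then use monotonicity of posterior variance together with maximum-variance sampling over $\mathcal{X}_{\xi,i}$. The only difference is cosmetic. You apply the argmax inequality $\sigma^i_{j-1}(z)\le\sigma^i_{j-1}(x_j)$ at every $j$ and average, whereas the paper bounds $\sigma^i(x+\delta)$ by the last sampled value $\sigma^i_{N_i-1}(x_{N_i})$ and implicitly uses that this term is at most the average; your route avoids that implicit step and also checks $\mathcal{X}_{\xi,i}\subseteq\mathcal{X}$, which the paper leaves unstated.
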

\begin{proof}
    This result directly follows from Appendix B.4 in \citep{Cai21b}, which we repeat for sake of completeness, and we reuse the definition of posterior variance from Appendix \ref{app:bpe}.
    We start with a known fact \citep{Srinivas10, Garnett23} that $\sum_{t=1}^{N_i} (\sigma^i_{t-1}(x_t))^2 \leq C \gamma_{N_i} $, where $C = \frac{2}{\log(1+\sigma^{-2})}$. 
    Then dividing both sides by $N_i$ leads to 
    \begin{align}
        \frac{1}{N_i}\sum_{t=1}^{N_i} (\sigma^i_{t-1}(x_t))^2 \leq \frac{C \gamma_{N_i}}{N_i}.
    \end{align}
    Since we are doing maximum variance sampling, we have $(\sigma^i_{N_i-1}(x_{N_i}))^2 \leq \frac{C \gamma_{N_i}}{N_i}$. Then, for every $x+\delta\in\mathcal{X}_{\xi, i}$,
    \begin{align}
        \sigma^i (x+\delta) \leq \sigma^i_{N_{i}-1}(x+\delta) \leq \sigma^i_{N_{i}-1}(x_{N_i}) \leq \sqrt{\frac{C \gamma_{N_i}}{N_i}},
    \end{align}
    where the first inequality follows because posterior variance cannot increase upon adding points, and the second follows because $x_{N_i}$ is chosen using maximum variance sampling.
\end{proof}

The next lemma combines the previous lemmas to upper bound the cumulative $\xi$-regret in a batch.
\begin{lemma}
    \label{lem:aux_xi_regret}
    After all $N_i$ points are sampled in the $i$-th batch,
    for any point not eliminated in $\mathcal{X}_i$, it holds that
    \begin{equation}
        r_{\xi}(x) \leq 6\sqrt{\frac{C\beta\gamma_{N_i}}{N_i}},
    \end{equation}
    where $C=\frac{2}{\log(1+\sigma^{-2})}$.
\end{lemma}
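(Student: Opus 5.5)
We need to bound $r_\xi(x)$ for any $x\in\mathcal{X}_{i+1}$, i.e., any $x\in\mathcal{X}_i$ that survives the elimination rule in line \ref{line:eliminate_rule}. Throughout we assume valid confidence bounds (Proposition \ref{prop:rbpe_confidence_bounds}). Write $w := \sqrt{C\gamma_{N_i}/N_i}$.

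\textbf{Step 1: confidence widths.} By Lemma \ref{lem:aux_robust_posterior_variance}, $\sigma^i(x+\delta)\le w$ for every $x\in\mathcal{X}_i$ and $\delta\in\Delta_\xi(x)$. Hence on all of $\mathcal{X}_{\xi,i}$ we have $\mathrm{UCB}_i-\mathrm{LCB}_i\le 2\sqrt\beta\, w$. We also have $f\le \mathrm{UCB}_i$ and $f\ge \mathrm{LCB}_i$ there, so each of $|\mathrm{UCB}_i - f|$ and $|f-\mathrm{LCB}_i|$ is at most $2\sqrt\beta\, w$.

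\textbf{Step 2: the robust optimum survives.} Lemma \ref{lem:aux_robust_elimination} applies at every earlier batch. By induction on the batch index, $x^*\in\mathcal{X}_i$, and therefore $x^*+\Delta_\xi(x^*)\subseteq\mathcal{X}_{\xi,i}$. This step needs a little care. Lemma \ref{lem:aux_robust_elimination} says that eliminated points are not $\xi$-optimal, so a $\xi$-optimal point is never removed. If several robust maximizers exist, we fix one that survives, which is possible since the regret definition allows an arbitrary maximizer.

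\textbf{Step 3: chain of inequalities.} Let $x$ be non-eliminated, so $\min_\delta \mathrm{UCB}_i(x+\delta)\ge \max_{x'\in\mathcal{X}_i}\min_\delta \mathrm{LCB}_i(x'+\delta)\ge \min_\delta\mathrm{LCB}_i(x^*+\delta)$. Set $\delta_f=\argmin_{\delta\in\Delta_\xi(x)} f(x+\delta)$. Then
\begin{align*}
\min_{\delta} f(x+\delta) = f(x+\delta_f) \ge \mathrm{UCB}_i(x+\delta_f) - 2\sqrt\beta\,w \ge \min_\delta \mathrm{UCB}_i(x+\delta) - 2\sqrt\beta\,w,
\end{align*}
where Step 1 applies because $x+\delta_f\in\mathcal{X}_{\xi,i}$. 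Next let $\delta^*_L$ minimize $\mathrm{LCB}_i(x^*+\cdot)$. Then
\begin{align*}
\min_\delta \mathrm{LCB}_i(x^*+\delta)=\mathrm{LCB}_i(x^*+\delta^*_L)\ge f(x^*+\delta^*_L)-2\sqrt\beta\,w \ge \min_\delta f(x^*+\delta)-2\sqrt\beta\,w.
\end{align*}
Chaining the two displays gives $r_\xi(x)\le 4\sqrt{\beta}\,w\le 6\sqrt{C\beta\gamma_{N_i}/N_i}$, which is the claimed bound with slack.

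The main subtlety is the interaction of the two $\min$ operators. The minimizers for $f$ and for the confidence bounds differ, and we handle this by evaluating at the $f$-minimizer on the $x$ side and at the LCB-minimizer on the $x^*$ side. This requires that both perturbed points lie in $\mathcal{X}_{\xi,i}$, which is exactly why robust-BPE explores $\mathcal{X}_{\xi,i}$ rather than $\mathcal{X}_i$. Step 2 provides this for the $x^*$ side.
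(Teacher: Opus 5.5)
Your proof is correct and follows the same approach as the paper. It combines survival of $x^*$ (Lemma \ref{lem:aux_robust_elimination}), the uniform posterior-variance bound on $\mathcal{X}_{\xi,i}$ (Lemma \ref{lem:aux_robust_posterior_variance}), and the non-elimination inequality against $\min_\delta \mathrm{LCB}_i(x^*+\delta)$, with two small differences:
\begin{itemize}
\item The paper routes the $x$-side through the UCB-minimizer $\delta_U$, which costs an extra $2\sqrt{C\beta\gamma_{N_i}/N_i}$ and gives the constant $6$. Your direct comparison $\min_\delta \mathrm{UCB}_i(x+\delta)\le \mathrm{UCB}_i(x+\delta_f)$ gives the sharper constant $4$.
\item Your caveat about multiple maximizers is unnecessary, since Lemma \ref{lem:aux_robust_elimination} already shows that every $\xi$-robust maximizer survives.
\end{itemize}
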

\begin{proof}
    The proof is again inspired by Appendix B.4 in \citep{Cai21b}, but adds considerations for robustness. By Lemma \ref{lem:aux_robust_elimination}, a $\xi$-robust optimal point $x^*$ cannot be eliminated. Consider any suboptimal point $x$ that is also not eliminated at the end of this batch.
    We then define the following, similar to those in Lemma \ref{lem:aux_robust_elimination}:
    \begin{align}
        \delta_U & = \argmin_{\delta\in\Delta_{\xi}(x)} \mathrm{UCB}_i(x+\delta), \\
        \delta_f & = \argmin_{\delta\in\Delta_{\xi}(x)} f(x+\delta), \\
        \delta^*_f & = \argmin_{\delta\in\Delta_{\xi}(x^*)} f(x^*+\delta), \\
        \delta^*_L & = \argmin_{\delta\in\Delta_{\xi}(x^*)} \mathrm{LCB}_i(x^*+\delta).
    \end{align}
    Then, we have
    \begin{align}
        \min_{\delta\in\Delta_{\xi}(x)} \mathrm{UCB}_i(x+\delta)
        & = \mathrm{UCB}_i(x+\delta_U)
        = \mathrm{LCB}_i(x+\delta_U) + 2\sqrt{\beta}\sigma^i(x+\delta_U) \\
        & \leq f(x+\delta_U) + 2\sqrt{\beta}\sigma^i(x+\delta_U)
        \leq f(x+\delta_U) + 2\sqrt{\frac{C\beta\gamma_{N_i}}{N_i}}, \label{eq:rbpe_1}
    \end{align}
    where \eqref{eq:rbpe_1} first uses the assumed confidence bound, and then uses Lemma \ref{lem:aux_robust_posterior_variance} and introduces $C = \frac{2}{\log(1+\sigma^{-2})} $.
    We  proceed to bound the term $f(x+\delta_U)$:
    \begin{align}
        f(x+\delta_U)
        & = f(x+\delta_f) + [f(x+\delta_U) - f(x+\delta_f)] \\
        & = f(x^*+\delta^*_f) - r_{\xi}(x) + [f(x+\delta_U) - f(x+\delta_f)] \label{eq:rbpe_use_def} \\
        & \leq f(x^*+\delta^*_f) - r_{\xi}(x) + [\mathrm{UCB}_i(x+\delta_U) - \mathrm{LCB}_i(x+\delta_f)] \label{eq:rbpe_2} \\
        & \leq f(x^*+\delta^*_f) - r_{\xi}(x) + [\mathrm{UCB}_i(x+\delta_f) - \mathrm{LCB}_i(x+\delta_f)] \label{eq:use_delta_u} \\
        & = f(x^*+\delta^*_f) - r_{\xi}(x) + 2\sqrt{\beta}\sigma^i(x+\delta_f) \\
        & \leq f(x^*+\delta^*_f) - r_{\xi}(x) + 2\sqrt{\frac{C\beta\gamma_{N_i}}{N_i}} \label{eq:rbpe_3},
    \end{align}
    where \eqref{eq:rbpe_use_def} uses the definition of $r_\xi(x)$,
    \eqref{eq:rbpe_2} uses the assumed confidence bound, 
    \eqref{eq:use_delta_u} uses the definition of $\delta_U$,
    and \eqref{eq:rbpe_3} uses Lemma \ref{lem:aux_robust_posterior_variance}.
    We further bound $f(x^*+\delta^*_f)$ as follows:
    \begin{align}
        f(x^*+\delta^*_f)
        & \leq f(x^*+\delta^*_L)
        \leq \mathrm{UCB}_i(x^*+\delta^*_L)
        = \mathrm{LCB}_i(x^*+\delta^*_L) + 2\sqrt{\beta}\sigma^i(x^*+\delta^*_L) \\
        & \leq \mathrm{LCB}_i(x^*+\delta^*_L) + 2\sqrt{\frac{C\beta\gamma_{N_i}}{N_i}} \label{eq:rbpe_4},
    \end{align}
    where \eqref{eq:rbpe_4} uses Lemma \ref{lem:aux_robust_posterior_variance}.
    Combining \eqref{eq:rbpe_1}, \eqref{eq:rbpe_3} and \eqref{eq:rbpe_4}, along with the definition of $\delta^*_L$, we have
    \begin{align}
        \min_{\delta\in\Delta_{\xi}(x)} \mathrm{UCB}_i(x+\delta)
        \leq \min_{\delta\in\Delta_{\xi}(x^*)} \mathrm{LCB}_i(x^*+\delta) - r_{\xi}(x) + 6\sqrt{\frac{C\beta\gamma_{N_i}}{N_i}}.
    \end{align}
    Since $x$ is not eliminated, we must have $- r_{\xi}(x) + 6\sqrt{\frac{C\beta\gamma_{N_i}}{N_i}} \geq 0$.
\end{proof}

We are now ready to prove Theorem \ref{thm:upper_robust}.
\begin{proof} [Proof of Theorem \ref{thm:upper_robust}]
    Recall that $\delta_f = \argmin_{\delta\in\Delta_{\xi}(x)} f(x+\delta)$ and $\delta^*_f = \argmin_{\delta\in\Delta_{\xi}(x^*)} f(x^*+\delta)$.
    In the first batch, we have
    \begin{align}
        r_{\xi}(x) 
        & = \min_{\delta\in\Delta_{\xi}(x^*)}f(x^*+\delta) - \min_{\delta\in\Delta_{\xi}(x)}f(x+\delta) \\
        & = f(x^*+\delta^*_f) - f(x+\delta_f) \\
        & \leq \mathrm{UCB}_1(x^*+\delta^*_f) - \mathrm{LCB}_1(x+\delta_f) \label{eq:use_conf} \\
        & \leq (\mu_0(x^*+\delta^*_f)+\sqrt{\beta}\sigma_0(x^*+\delta^*_f)) - (\mu_0(x+\delta_f)-\sqrt{\beta}\sigma_0(x+\delta_f)) \\
        & \leq 2\sqrt{\beta} \label{eq:prior} ,
    \end{align}
    where \eqref{eq:use_conf} uses the assumed confidence bound, and \eqref{eq:prior} follows because $\mu_0(\cdot) = 0$ (from Appendix \ref{app:bpe}) and $\sigma_0(\cdot) = \sqrt{k(\cdot,\cdot)} \leq 1$ (from Section \ref{sec:setup_standard}),
    which means the cumulative regret in the first batch is $R_{\xi}^1=O(N_1\sqrt{\beta})$.
    For a remaining batch where $i=2,\ldots, B$, by using Lemma \ref{lem:aux_xi_regret} (and introducing $C=\frac{2}{\log(1+\sigma^{-2})}$), we have
    \begin{align}
        R_{\xi}^i
        \leq N_i \cdot 6\sqrt{\frac{C\beta\gamma_{N_{i-1}}}{N_{i-1}}},
    \end{align}
    where the bound is of the same order as those in Proposition \ref{prop:property_bpe}. Then, we can follow the steps in the proof of Theorem \ref{thm:upper_grow_b} to get the desired regret bound on $R_{\xi, T}$.
\end{proof}

\section{A Refinement of Robust-BPE}
\label{app:simplify_robust_bpe}
In this section, we discuss a modification of the robust-BPE algorithm presented in Section \ref{sec:robust}, which potentially leads to improved statistical efficiency as we can shrink uncertainty faster.
Recall that in the $i$-th batch, robust-BPE does maximum variance sampling from the following set (line \ref{line:set_to_explore}, Algorithm \ref{algo:rbpe}):
\begin{align}
    \mathcal{X}_{\xi, i}\gets \cup_{x\in\mathcal{X}_i}\{x+\delta:\delta\in\Delta_{\xi}(x)\}.
\end{align}
Now we claim that we only need to explore the (potentially) smaller set:
\begin{align}
    \mathcal{X}_{\xi, i}\gets \cup_{x\in\mathcal{X}_i} \Bigl\{x+\delta:\delta\in\Delta_{\xi}(x), \mathrm{LCB}_{i-1}(x+\delta) \leq \min_{\delta'\in \Delta_{\xi}(x)}\mathrm{UCB}_{i-1}(x+\delta') \Bigr\},
\end{align}
where the difference is that we have an additional condition, which we design such that if some $x+\delta$ is eliminated from $\mathcal{X}_{\xi, i}$, then $\delta$ is not a minimizer of $f(x+\delta)$. 
To see this, note that if $\mathrm{LCB}_{i-1}(x+\delta) > \min_{\delta'\in \Delta_{\xi}(x)}\mathrm{UCB}_{i-1}(x+\delta')$, then 
\begin{align}
    f(x+\delta)
    \geq \mathrm{LCB}_{i-1}(x+\delta) 
    > \min_{\delta'\in \Delta_{\xi}(x)} \mathrm{UCB}_{i-1}(x+\delta')
    = \mathrm{UCB}_{i-1}(x+\delta_U)
    \geq f(x+\delta_U).
\end{align}
We proceed to show that this change does not affect Theorem \ref{thm:upper_robust} and its analysis.
First, Lemma \ref{lem:aux_robust_elimination} still holds because the each point with the minimizing $\delta$, i.e., $x+\delta_f\in \mathcal{X}_{\xi, i}$ for each $x\in \mathcal{X}_i$, is not eliminated:
\begin{align}
    \mathrm{LCB}_{i-1}(x+\delta_f)
    \leq f(x+\delta_f)
    \leq f(x+\delta_U)
    \leq \mathrm{UCB}_{i-1}(x+\delta_U)
    = \min_{\delta'\in \Delta_{\xi}(x)}\mathrm{UCB}_{i-1}(x+\delta').
\end{align}
Second, Lemma \ref{lem:aux_robust_posterior_variance} still holds, 
because robust-BPE still samples points with maximum uncertainty from $\mathcal{X}_{\xi,i}$, so the same proof holds, and the upper bound on posterior variance follows.
Third, validating Lemma \ref{lem:aux_xi_regret} amounts to showing whenever we use Lemma \ref{lem:aux_robust_posterior_variance}, the point under consideration is indeed a member of $\mathcal{X}_{\xi, i} $. In other words, we need to show $x+\delta_U$, $x+\delta_f$, and $x+\delta_L$ are members of $\mathcal{X}_{\xi, i}$ for all $x\in\mathcal{X}_i$.
This is shown as follows:
\begin{gather}
    \mathrm{LCB}_{i-1}(x+\delta_U) 
    \leq \mathrm{UCB}_{i-1}(x+\delta_U) 
    = \min_{\delta'\in \Delta_{\xi}(x)}\mathrm{UCB}_{i-1}(x+\delta'), \\
    \mathrm{LCB}_{i-1}(x+\delta_f)
    \leq f(x+\delta_f)
    \leq f(x+\delta_U)
    \leq \mathrm{UCB}_{i-1}(x+\delta_U) 
    = \min_{\delta'\in \Delta_{\xi}(x)}\mathrm{UCB}_{i-1}(x+\delta'), \\
    \mathrm{LCB}_{i-1}(x+\delta_L)
    \leq \mathrm{LCB}_{i-1}(x+\delta_U)
    \leq \mathrm{UCB}_{i-1}(x+\delta_U)
    = \min_{\delta'\in \Delta_{\xi}(x)}\mathrm{UCB}_{i-1}(x+\delta').
\end{gather}
As a result, the proof of Theorem \ref{thm:upper_robust} is unaffected, and we have the same theoretical guarantee.

\section{Upper Bounds for a Constant Number of Batches}
\label{app:fixed_B_upper_bounds}
This section complements Theorem \ref{thm:upper_grow_b} and Theorem \ref{thm:upper_robust} (which have $B=\Theta(\log\log T)$) by providing upper bounds in the case of constant $B$.

\subsection{Standard Setting}
\label{app:standard_fixed_B_upper_bounds}
Similar to Section \ref{sec:standard_upper_bound}, we start by presenting existing results, and then discuss our refinements.
\subsubsection{Existing Results}
\label{app:fixed_B_batch_sizes_li}
For fixed $B\geq 2$, \citep{Li22} considers the following batch sizes:
\begin{itemize}
    \item for the SE kernel, let $N_i =\Bigl\lceil T^{\frac{1-\eta^i}{1-\eta^B}}(\log T)^{\frac{d(\eta^i- \eta^B)}{1-\eta^B}} \Bigr\rceil$, where $\eta=\frac{1}{2}$;
    \item for the Mat\'ern kernel, let $N_i =\Bigl\lceil T^\frac{1-\eta^i}{1-\eta^B}\Bigr\rceil$, where $\eta = \frac{\nu}{2\nu+d}$;
\end{itemize}
for $i=1,\dots,B-1$, and $N_B=T-\sum_{j=1}^{B-1}N_j$. 
This construction allows the upper bounds of cumulative regret of the $i$-th batch to be of the same order, where $i=2,\ldots,B$. Then, \citep{Li22} presents the following upper bounds.

\begin{lemma}
    \emph{\citep[Theorem 2]{Li22}}
    \label{lem:upper_static_grid_li}
    Under the setup of Section \ref{sec:setup_standard} and using the batch sizes defined above,
    for any constant $B\geq 2$ and any $\delta \in (0,1)$,
    the BPE algorithm yields with probability at least $1-\delta$ that
    \begin{itemize}
        \item for the SE kernel,
        $R_T = O \Bigl( B(\Lambda+\sqrt{\log B}) T^{\frac{1-\eta}{1-\eta^B}} (\log T)^{\frac{d(\eta-\eta^B)}{1-\eta^B}+\eta}\Bigr)$, where $\eta=\frac{1}{2}$;
        \item for the Mat\'ern kernel, $R_T = O \Bigl(B (\Lambda+\sqrt{\log B}) T^{\frac{1-\eta}{1-\eta^B}} (\log T)^{\eta}\Bigr)$, where $\eta = \frac{\nu}{2\nu+d}$;
    \end{itemize}
    where $\Lambda=\Psi+\sqrt{\log(|\mathcal{X}|/\delta)}$.
\end{lemma}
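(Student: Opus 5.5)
The statement is Lemma~\ref{lem:upper_static_grid_li}, which reproduces \citep[Theorem 2]{Li22}. I would prove it directly from Propositions~\ref{prop:bpe_confidence_bounds} and~\ref{prop:property_bpe}, so no new machinery is needed. First I condition on the event of Proposition~\ref{prop:bpe_confidence_bounds}, which holds with probability at least $1-\delta$. On that event, Proposition~\ref{prop:property_bpe} gives
\[
R^1\le 2N_1\sqrt{\beta}, \qquad R^i\le 2\sqrt{C_1\beta}\,N_i\sqrt{\gamma_{N_{i-1}}/N_{i-1}} \quad (2\le i\le B).
\]
Since $B$ is constant, $\sqrt{\beta}=\Psi+\sqrt{2\log(|\mathcal{X}|B/\delta)}=O(\Lambda+\sqrt{\log B})$. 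The proof is then a per-batch computation showing that each $R^i$ is $O\bigl((\Lambda+\sqrt{\log B})\,T^{\frac{1-\eta}{1-\eta^B}}(\log T)^{\cdots}\bigr)$, followed by summing over the $B$ batches. The summation is what produces the leading factor $B$.

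For the Mat\'ern kernel, I would use \eqref{gamma_t_mat} in the form $\gamma_{N}=O\bigl(N^{1-2\eta}(\log T)^{2\eta}\bigr)$, noting $\frac{d}{2\nu+d}=1-2\eta$ and $N\le T$. Then $\sqrt{\gamma_{N_{i-1}}/N_{i-1}}=O\bigl(N_{i-1}^{-\eta}(\log T)^{\eta}\bigr)$. The $\lceil\cdot\rceil$ costs at most a factor $2$, and $N_B\le T$, so for $2\le i\le B$ we get
\[
N_iN_{i-1}^{-\eta}\le 2\,T^{\frac{(1-\eta^i)-\eta(1-\eta^{i-1})}{1-\eta^B}}=2\,T^{\frac{1-\eta}{1-\eta^B}}.
\]
The last batch needs separate care: here $N_B\le T=T^{\frac{1-\eta^B}{1-\eta^B}}$, and the same exponent identity holds with $i=B$. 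For the first batch, $N_1\le 2T^{\frac{1-\eta}{1-\eta^B}}$ directly, so $R^1$ fits the same bound without any log factor. Each batch therefore contributes $O\bigl(\sqrt{\beta}\,T^{\frac{1-\eta}{1-\eta^B}}(\log T)^{\eta}\bigr)$, and summing gives the stated bound.

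For the SE kernel, I would instead use $\gamma_N=O((\log T)^{d+1})$ from \eqref{gamma_t_se}. This gives $\sqrt{\gamma_{N_{i-1}}/N_{i-1}}=O\bigl(N_{i-1}^{-1/2}(\log T)^{(d+1)/2}\bigr)$. The main obstacle is the bookkeeping of log exponents. With $\eta=\tfrac12$, the $T$-exponent of $N_iN_{i-1}^{-1/2}$ is $\frac{1-\eta}{1-\eta^B}$ exactly as before. The log-exponent is
\[
\tfrac{d}{1-\eta^B}\bigl[(\eta^i-\eta^B)-\eta(\eta^{i-1}-\eta^B)\bigr]=-\tfrac{d\eta(1-\eta^{B-1})}{1-\eta^B},
\]
and this identity must also be checked for $i=B$ using $N_B\le T$. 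Adding the $(\log T)^{(d+1)/2}=(\log T)^{d\eta+\eta}$ factor from $\gamma$ gives the exponent
\[
d\eta\Bigl(1-\tfrac{1-\eta^{B-1}}{1-\eta^B}\Bigr)+\eta=\tfrac{d(\eta^{B-1}-\eta^{B})\eta}{1-\eta^B}+\eta.
\]
With $\eta=\tfrac12$ this should simplify to $\frac{d(\eta-\eta^B)}{1-\eta^B}+\eta$ or something no larger; I would verify this carefully and, if it does not match, check whether the intended choice of $N_i$ makes the per-batch log factors coincide. For the first batch, $N_1=O\bigl(T^{\frac{1-\eta}{1-\eta^B}}(\log T)^{\frac{d(\eta-\eta^B)}{1-\eta^B}}\bigr)$ already fits the bound, so summing the $B$ terms completes the SE case.
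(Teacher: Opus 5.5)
Your overall plan is the right one. The paper only cites \citep{Li22} for this lemma, but its proof of the refined Lemma~\ref{lem:upper_static_grid} follows exactly your template: condition on the event of Proposition~\ref{prop:bpe_confidence_bounds}, bound each $R^i$ via Proposition~\ref{prop:property_bpe}, show that batches $2,\dots,B$ are all of the same order, and sum to get the factor $B$. Your Mat\'ern computation is correct, including the handling of $N_B\le T$ and of the first batch.

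The SE case, however, is not actually established as written, because your simplification of the log-exponent is wrong:
\[
(\eta^i-\eta^B)-\eta(\eta^{i-1}-\eta^B)=\eta^{B+1}-\eta^B=-\eta^B(1-\eta),
\]
not $-\eta(1-\eta^{B-1})$. Redo the step with two inputs. First, $N_{i-1}\ge T^{\frac{1-\eta^{i-1}}{1-\eta^B}}(\log T)^{\frac{d(\eta^{i-1}-\eta^B)}{1-\eta^B}}$. Second, $N_i\le 2T^{\frac{1-\eta^i}{1-\eta^B}}(\log T)^{\frac{d(\eta^i-\eta^B)}{1-\eta^B}}$, which also holds for $i=B$ because $N_B\le T$. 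Multiplying by $\sqrt{\gamma_{N_{i-1}}}=O\bigl((\log T)^{d\eta+\eta}\bigr)$ then gives, for $2\le i\le B$, the exponent
\[
d\eta-\frac{d\eta^B(1-\eta)}{1-\eta^B}+\eta=\frac{d(\eta-\eta^{B+1}-\eta^B+\eta^{B+1})}{1-\eta^B}+\eta=\frac{d(\eta-\eta^B)}{1-\eta^B}+\eta .
\]
This is exactly the exponent in the statement. The batch sizes are tuned so that batches $2,\dots,B$ hit it with equality, while batch $1$ has the smaller exponent $\frac{d(\eta-\eta^B)}{1-\eta^B}$.

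Your value $\frac{d\eta^B(1-\eta)}{1-\eta^B}+\eta$ is strictly smaller than the target for every $B\ge 2$. It would therefore yield a bound stronger than the theorem from the very same ingredients. That should have been treated as a warning sign, not accepted under ``or something no larger.''

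One smaller point deserves a sentence. $N_B=T-\sum_{j<B}N_j$ is positive only for $T$ large enough; the paper notes it can be negative, e.g.\ for $T=1000$, $d=2$, $B=4$. You should therefore note that for fixed $B$,
\[
\sum_{j<B}N_j\le 2B\,T^{\frac{1-\eta^{B-1}}{1-\eta^B}}(\log T)^{d}=o(T),
\]
so the batch schedule is well defined in the regime where the $O(\cdot)$ statement is meant. For the same reason, the asymptotic bounds on $\gamma_N$ are legitimately applied, since $N_{i-1}\to\infty$.
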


\subsubsection{Refinements}
\label{app:fixed_B_refinements}
For the batch sizes defined above, we note that there is potentially an ``overflow'' issue that 
$N_B$ could be negative, 
in which case fewer than $B$ batches are actually used.\footnote{As an example, consider $T=1000$, $d=2$, $B=4$, and the SE kernel. Then, $N_B=-604$.}
We avoid this issue by first specifying the end of each batch, i.e., $t_i$ for $i\in[B]$, where $t_B=T$ and $t_0=0$, and then set $N_i = t_i-t_{i-1}$ (see Section \ref{sec:setup_standard}).
In addition, an inspection of the proof of Lemma \ref{lem:upper_static_grid_li} shows that the cumulative regret of the first batch is not of the same order as those of the remaining batches. By modifying the batch sizes to ensure every batch has the same order of regret, we obtain regret bounds with a slightly better dependence on $\log T$. 
Specifically, we consider the following batch sizes for an arbitrary constant $B\geq 2$:
\begin{itemize}
    \item for the SE kernel, let $t_i = \Bigl\lceil T^{\frac{1-\eta^i}{1-\eta^B}}(\log T)^{\frac{(d+1)(\eta^i- \eta^B)}{1-\eta^B}}\Bigr\rceil$, where $i\in[B]$ and $\eta=\frac{1}{2}$;
    \item for the Mat\'ern kernel, let $t_i = \Bigl\lceil T^{\frac{1-\eta^i}{1-\eta^B}}(\log T)^{\frac{\eta^i- \eta^B}{1-\eta^B}}\Bigr\rceil$, where $i\in[B]$ and $\eta=\frac{\nu}{2\nu+d}$.
\end{itemize}
Then we have the following result.
\begin{lemma}
    \label{lem:upper_static_grid}
    Under the setup of Section \ref{sec:setup_standard} and using batch sizes defined in Appendix \ref{app:fixed_B_refinements},
    for any constant $B\geq 2$ and any $\delta \in (0,1)$,
    the BPE algorithm yields with probability at least $1-\delta$ that
    \begin{itemize}
        \item for the SE kernel, $R_T = O \Bigl( B(\Lambda+\sqrt{\log B}) T^{\frac{1-\eta}{1-\eta^B}} (\log T)^{\frac{(d+1)(\eta-\eta^B)}{1-\eta^B}}\Bigr)$, where $\eta=\frac{1}{2}$;
        \item for the Mat\'ern kernel, $R_T = O \Bigl( B(\Lambda+\sqrt{\log B}) T^{\frac{1-\eta}{1-\eta^B}} (\log T)^{\frac{\eta-\eta^B}{1-\eta^B}}\Bigr)$, where $\eta = \frac{\nu}{2\nu+d}$;
    \end{itemize}
    where $\Lambda=\Psi+\sqrt{\log(|\mathcal{X}|/\delta)}$.
\end{lemma}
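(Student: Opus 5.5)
We bound the regret batch by batch using Proposition \ref{prop:property_bpe} on the event of Proposition \ref{prop:bpe_confidence_bounds}. Here $\sqrt{\beta}=\Theta(\Lambda+\sqrt{\log B})$, since $B$ is a constant. The batch sizes are chosen so that every batch contributes the same order of regret, and summing $B$ such terms produces the leading factor $B$.

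We write $t_i^\circ$ for the unrounded value, so $t_i=\lceil t_i^\circ\rceil$, and treat the SE kernel first.

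\emph{First batch.} Since $N_1\le t_1\le 2t_1^\circ$, Proposition \ref{prop:property_bpe} gives
\[
R^1\le 2\sqrt{\beta}\,N_1=O\Bigl(\sqrt{\beta}\,T^{\frac{1-\eta}{1-\eta^B}}(\log T)^{\frac{(d+1)(\eta-\eta^B)}{1-\eta^B}}\Bigr),
\]
which already has the target order.

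\emph{Later batches.} For $2\le i\le B$, Proposition \ref{prop:property_bpe} gives $R^i\le 2\sqrt{C_1\beta}\,N_i\sqrt{\gamma_{N_{i-1}}/N_{i-1}}$. We bound the pieces as follows.
\begin{itemize}
\item $N_i\le t_i\le 2t_i^\circ$.
\item $\gamma_{N_{i-1}}\le\gamma_T=O((\log T)^{d+1})$ by \eqref{gamma_t_se}.
\item $N_{i-1}=t_{i-1}-t_{i-2}=\Theta(t_{i-1}^\circ)$. As in the first property of Proposition \ref{prop:reference_batches_property}, $t_{i-2}=o(t_{i-1})$ because $\eta<1$ and $B$ is constant. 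This also shows the grid is strictly increasing for large $T$, so there is no overflow.
\end{itemize}
Combining these gives
\[
R^i=O\bigl(\sqrt{\beta}\,t_i^\circ\,(t_{i-1}^\circ)^{-1/2}(\log T)^{(d+1)/2}\bigr).
\]
The exponent calculation mirrors the third property of Proposition \ref{prop:reference_batches_property}.
\begin{itemize}
\item The $T$-exponent is $\frac{(1-\eta^i)-\eta(1-\eta^{i-1})}{1-\eta^B}=\frac{1-\eta}{1-\eta^B}$.
\item The $\log T$-exponent is $(d+1)\bigl[\frac{(\eta^i-\eta^B)-\eta(\eta^{i-1}-\eta^B)}{1-\eta^B}+\eta\bigr]=(d+1)\frac{\eta-\eta^B}{1-\eta^B}$, using $\eta=\tfrac12$.
\end{itemize}
Hence every $R^i$ is $O\bigl(\sqrt\beta\,T^{\frac{1-\eta}{1-\eta^B}}(\log T)^{\frac{(d+1)(\eta-\eta^B)}{1-\eta^B}}\bigr)$. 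Summing the $B$ terms and substituting $\sqrt\beta$ gives the claim.

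\emph{Mat\'ern kernel.} The argument is the same with $\gamma_T=O(T^{1-2\eta}(\log T)^{2\eta})$ from \eqref{gamma_t_mat}. The only change is that $\gamma_{N_{i-1}}$ should be bounded by $\gamma_{t_{i-1}}$ using monotonicity of $\gamma_t$. This gives the power $(t_{i-1}^\circ)^{(1-2\eta)/2}$, which combines with $(t_{i-1}^\circ)^{-1/2}$ to give $(t_{i-1}^\circ)^{-\eta}$. The logarithmic factor $(\log t_{i-1})^{\eta}$ is $O((\log T)^\eta)$.
\begin{itemize}
\item The $T$-exponent is again $\frac{1-\eta}{1-\eta^B}$.
\item The $\log$-exponent is $\frac{(\eta^i-\eta^B)-\eta(\eta^{i-1}-\eta^B)}{1-\eta^B}+\eta=\frac{\eta-\eta^B}{1-\eta^B}$.
\end{itemize}
This also matches the first batch, for which $N_1\le 2t_1^\circ$ has exactly this order.

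The main obstacle is the exponent bookkeeping. In the Mat\'ern case one must use $\gamma_{t_{i-1}}$ rather than $\gamma_T$, since otherwise the exponents do not cancel. One must also verify $N_{i-1}=\Theta(t_{i-1})$ with the ceilings in place. Both are routine for constant $B$ and large $T$.
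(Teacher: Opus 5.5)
Your proposal is correct and follows the paper's proof essentially step for step: you bound $R^1$ by $2\sqrt{\beta}N_1$, bound each later $R^i$ via Proposition~\ref{prop:property_bpe} using $N_{i-1}=\Theta(t_{i-1})$ and the exponent cancellation in the third property of Proposition~\ref{prop:property_static_grid}, and then sum $B$ terms of equal order. The only differences are cosmetic. You bound $\gamma_{N_{i-1}}$ by $\gamma_T$ for SE, and by $\gamma_{t_{i-1}}$ via monotonicity for Mat\'ern, where the paper instead substitutes the bound at $N_{i-1}$ and uses $\log N_{i-1}=\Theta(\log T)$; you also write out the Mat\'ern case, which the paper omits as analogous.
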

Notice that compared to Lemma \ref{lem:upper_static_grid_li}, Lemma \ref{lem:upper_static_grid} has regret bounds with a slightly better dependence on $\log T$. For example, we have $\frac{\eta-\eta^B}{1-\eta^B} \leq \frac{\eta-\eta^{B+1}}{1-\eta^B} = \eta $.

\subsubsection{Proof of Lemma \ref{lem:upper_static_grid}}
\label{app:proof_upper_fix_b}

We will give the proof for the SE kernel; the proof for the Mat\'ern kernel is entirely analogous and thus omitted. We will start with some properties of the batch sizes mentioned above.

\begin{proposition}
    \label{prop:property_static_grid}
    Let $t_i = \Bigl\lceil T^{\frac{1-\eta^i}{1-\eta^B}}(\log T)^{\frac{(d+1)(\eta^i- \eta^B)}{1-\eta^B}}\Bigr\rceil$, where $i\in[B]$ and $\eta=\frac{1}{2} $. Then,
    \begin{enumerate}
        \item $N_i=\Theta(t_i)$;
        \item $\log N_i=\Theta(\log T)$.
        \item For $2\leq i\leq B$, $t_i(t_{i-1})^{-\eta}(\log T)^{(d+1)\eta}=\Theta\Bigl(T^{\frac{1-\eta}{1-\eta^B}}(\log T)^{\frac{(d+1)(\eta-\eta^B)}{1-\eta^B}}\Bigr)$.
    \end{enumerate}
\end{proposition}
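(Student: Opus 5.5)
We can follow the template of Proposition~\ref{prop:reference_batches_property}, since the $t_i$ here have the same structure with the exponent $d\eta$ of $\log T$ replaced by $d+1$. Throughout we use $x \le \lceil x\rceil \le 2x$ for $x\ge 1$ to drop the ceilings at the cost of constant factors. Write $s_i := T^{\frac{1-\eta^i}{1-\eta^B}}(\log T)^{\frac{(d+1)(\eta^i-\eta^B)}{1-\eta^B}}$, so $t_i = \lceil s_i\rceil$. Note that $s_B = T$, so $t_B=T$ as required, and $s_i \ge 1$ for $T\ge 3$.

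\textbf{First property.} For $i=1$ we have $N_1 = t_1$ since $t_0=0$. For $i\ge 2$ we bound the ratio
\begin{align*}
\frac{t_{i-1}}{t_i} \le \frac{2s_{i-1}}{s_i} = 2\Bigl(\frac{(\log T)^{d+1}}{T}\Bigr)^{\frac{\eta^{i-1}-\eta^i}{1-\eta^B}} .
\end{align*}
The exponent $\frac{\eta^{i-1}-\eta^i}{1-\eta^B}$ is bounded below by a positive constant, because $B$ is a fixed constant and $\eta=\tfrac12$. Hence the ratio tends to $0$, so $t_{i-1}=o(t_i)$ and $N_i = t_i - t_{i-1} = \Theta(t_i)$.

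\textbf{Second property.} By the first property, $\log N_i = \log t_i + O(1)$, and $\log t_i = \log s_i + O(1)$. Expanding the logarithm gives
\begin{align*}
\log s_i = \frac{1-\eta^i}{1-\eta^B}\log T + \frac{(d+1)(\eta^i-\eta^B)}{1-\eta^B}\log\log T .
\end{align*}
The leading coefficient satisfies $\frac{1-\eta^i}{1-\eta^B} \in [1-\eta, 1]$, a constant range. The $\log\log T$ term is lower order. Therefore $\log N_i = \Theta(\log T)$.

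\textbf{Third property.} For $2 \le i \le B$ we sandwich the quantity as
\begin{align*}
2^{-\eta} s_i s_{i-1}^{-\eta} \le t_i t_{i-1}^{-\eta} \le 2 s_i s_{i-1}^{-\eta},
\end{align*}
so it remains to compute $s_i s_{i-1}^{-\eta}(\log T)^{(d+1)\eta}$ exactly.
\begin{itemize}
\item The exponent of $T$ is $\frac{(1-\eta^i)-\eta(1-\eta^{i-1})}{1-\eta^B} = \frac{1-\eta}{1-\eta^B}$.
\item The exponent of $\log T$ is $(d+1)$ times
\begin{align*}
\frac{(\eta^i-\eta^B)-\eta(\eta^{i-1}-\eta^B)}{1-\eta^B} + \eta = \frac{-\eta^B+\eta^{B+1}+\eta-\eta^{B+1}}{1-\eta^B} = \frac{\eta-\eta^B}{1-\eta^B}.
\end{align*}
\end{itemize}
This gives exactly the claimed $\Theta\bigl(T^{\frac{1-\eta}{1-\eta^B}}(\log T)^{\frac{(d+1)(\eta-\eta^B)}{1-\eta^B}}\bigr)$.

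\textbf{Main obstacle.} The only delicate point is the first property: the convergence $t_{i-1}/t_i\to 0$ must hold uniformly over $i$, which relies on $B$ being constant. Everything else is exponent bookkeeping, which we have checked above.
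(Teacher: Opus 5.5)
Your proposal is correct and takes the same route as the paper, which simply defers to the proof of Proposition~\ref{prop:reference_batches_property}. That route bounds $t_{i-1}/t_i \le 2\bigl((\log T)^{d+1}/T\bigr)^{\frac{\eta^{i-1}-\eta^i}{1-\eta^B}} \to 0$, expands $\log t_i$, and drops the ceilings via $x \le \lceil x\rceil \le 2x$ before doing the same exponent bookkeeping you carried out. Your version is slightly more complete: you derive property 2 for $N_i$ rather than $t_i$ via property 1, and you write out both sides of the sandwich in property 3, where the paper only shows the upper bound explicitly.
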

\begin{proof}
    The proof is entirely analogous to that of Proposition \ref{prop:reference_batches_property}.
\end{proof}

\textbf{Proof of Lemma \ref{lem:upper_static_grid}.}
Recall that we assume the kernel is SE.
For the first batch, we have
\begin{align}
    R^1 
    = O\bigl(N_1\sqrt{\beta}\bigr) 
    = O\Bigl((\Lambda+\sqrt{\log B})T^{\frac{1-\eta}{1-\eta^B}}(\log T)^{\frac{(d+1)(\eta- \eta^B)}{1-\eta^B}}\Bigr),
\end{align}
where $\eta=\frac{1}{2}$, the first equality follows from Proposition \ref{prop:property_bpe}, 
and the second follows from the definition of $\beta$ (in Appendix \ref{app:bpe}) and the definition of $N_1$ above (which is $t_1$).
For remaining batches where $2\leq i\leq  B$, we have
\begin{align}
    R^i 
    & = O \Bigl(\sqrt{\beta} \cdot N_i\sqrt{\gamma_{N_{i-1}}/N_{i-1}} \Bigr) \label{eq:fix_1} \\
    & = O \Bigl(N_i(N_{i-1})^{-\eta} (\log N_{i-1})^{(d+1)\eta}\sqrt{\beta}\Bigr) \\
    & = O \Bigl(t_i(t_{i-1})^{-\eta} (\log T)^{(d+1)\eta}\sqrt{\beta}\Bigr) \label{eq:fix_2} \\
    & = O\Bigl((\Lambda+\sqrt{\log B})T^{\frac{1-\eta}{1-\eta^B}} (\log T)^{\frac{(d+1)(\eta-\eta^B)}{1-\eta^B}} \Bigr) \label{eq:fix_3},
\end{align}
where \eqref{eq:fix_1} follows from Proposition \ref{prop:property_bpe}, \eqref{eq:fix_2} follows from the first two points of Proposition \ref{prop:property_static_grid}, and \eqref{eq:fix_3} follows from the third point of Proposition \ref{prop:property_static_grid} and the definition of $\beta$ (in Appendix \ref{app:bpe}).
Then we have the same order of the regret bound for every batch, which implies
\begin{align}
    R_T
    = \sum_{i=1}^B R^i
    = O\Bigl( B(\Lambda+\sqrt{\log B}) T^{\frac{1-\eta}{1-\eta^B}} (\log T)^{\frac{(d+1)(\eta-\eta^B)}{1-\eta^B}} \Bigr).
\end{align}

\subsection{Robust Setting}
\label{app:robust_fixed_B_upper_bounds}

The following result complements Theorem \ref{thm:upper_robust} by giving upper bounds in the robust setting for constant $B$, where the batch sizes are the same as those for Lemma \ref{lem:upper_static_grid}.

\begin{lemma}
    \label{lem:upper_robust_fixed_b}
    Under the setup of Section \ref{sec:setup_robust} and using batch sizes defined in Appendix \ref{app:fixed_B_refinements},
    for any constant $B\geq 2$ and any $\delta \in (0,1)$,
    the robust-BPE algorithm yields with probability at least $1-\delta$ that
    \begin{itemize}
        \item for the SE kernel, $R_{\xi,T} = O \Bigl( B(\Lambda+\sqrt{\log B}) T^{\frac{1-\eta}{1-\eta^B}} (\log T)^{\frac{(d+1)(\eta-\eta^B)}{1-\eta^B}}\Bigr)$, where $\eta=\frac{1}{2}$;
        \item for the Mat\'ern kernel, $R_{\xi,T} = O \Bigl( B(\Lambda+\sqrt{\log B}) T^{\frac{1-\eta}{1-\eta^B}} (\log T)^{\frac{\eta-\eta^B}{1-\eta^B}}\Bigr)$, where $\eta = \frac{\nu}{2\nu+d}$;
    \end{itemize}
    where $\Lambda=\Psi+\sqrt{\log(|\mathcal{X}|/\delta)}$.    
\end{lemma}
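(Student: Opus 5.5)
The plan is to transfer the proof of Lemma \ref{lem:upper_static_grid} batch by batch, with Proposition \ref{prop:property_bpe} replaced by the robust per-batch bounds from the proof of Theorem \ref{thm:upper_robust}. We focus on the SE kernel; the Mat\'ern case is identical with $(\log T)^{(d+1)}$ replaced by the logarithmic factor in \eqref{gamma_t_mat}. We first condition on the event of Proposition \ref{prop:rbpe_confidence_bounds}, which holds with probability at least $1-\delta$. On this event, every $\mathrm{LCB}_i,\mathrm{UCB}_i$ is valid on $\mathcal{X}$. The perturbed points $x+\delta$ lie in $\mathcal{X}$ by the definition of $\Delta_\xi(x)$, so validity covers them as well.

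For the first batch, the computation in the proof of Theorem \ref{thm:upper_robust} gives $r_\xi(x)\le 2\sqrt{\beta}$ for every $x$, since $\mu_0=0$ and $\sigma_0\le1$. Hence $R^1_\xi=O(N_1\sqrt\beta)=O(t_1\sqrt\beta)$. For $2\le i\le B$, every point played in batch $i$ belongs to $\mathcal{X}_i$, i.e., it survived elimination at the end of batch $i-1$. Lemma \ref{lem:aux_xi_regret} (which relies on Lemmas \ref{lem:aux_robust_elimination} and \ref{lem:aux_robust_posterior_variance}) therefore gives
\[
R^i_\xi\le 6N_i\sqrt{C\beta\gamma_{N_{i-1}}/N_{i-1}} .
\]
This has exactly the form of the bound in Proposition \ref{prop:property_bpe}, with only the constant changed.

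We then reuse the computation \eqref{eq:fix_1}--\eqref{eq:fix_3}. We substitute $\gamma_{N_{i-1}}=O((\log N_{i-1})^{d+1})$ from \eqref{gamma_t_se}, giving $\sqrt{\gamma_{N_{i-1}}/N_{i-1}}=O(N_{i-1}^{-\eta}(\log N_{i-1})^{(d+1)\eta})$ with $\eta=1/2$. We then apply Proposition \ref{prop:property_static_grid}: item 1 gives $N_i=\Theta(t_i)$, item 2 gives $\log N_{i-1}=\Theta(\log T)$, and item 3 evaluates $t_i t_{i-1}^{-\eta}(\log T)^{(d+1)\eta}$. Together these show that each $R^i_\xi$ is $O\bigl(\sqrt\beta\,T^{\frac{1-\eta}{1-\eta^B}}(\log T)^{\frac{(d+1)(\eta-\eta^B)}{1-\eta^B}}\bigr)$. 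The first batch has the same order, because $t_1$ was chosen to make it so. Summing over the $B$ batches and using $\sqrt\beta=\Psi+\sqrt{2\log(|\mathcal{X}|B/\delta)}=O(\Lambda+\sqrt{\log B})$ gives the claim.

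The only step needing care is confirming that Lemma \ref{lem:aux_xi_regret} applies with the posterior variance from batch $i-1$, evaluated at all perturbed points relevant to $x\in\mathcal{X}_i$. Because $\mathcal{X}_i\subseteq\mathcal{X}_{i-1}$, every $x+\delta$ with $x\in\mathcal{X}_i$ and $\delta\in\Delta_\xi(x)$ lies in $\mathcal{X}_{\xi,i-1}$. Lemma \ref{lem:aux_robust_posterior_variance} then covers these points, and the argument goes through. Everything else is routine bookkeeping, including the requirement that $T$ be large enough for the $t_i$ to be increasing, which is needed for Proposition \ref{prop:property_static_grid}.
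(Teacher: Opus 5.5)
Your proposal is correct and follows the paper's proof essentially step for step. Like the paper, you take the per-batch robust bounds from the proof of Theorem \ref{thm:upper_robust}, namely $R^1_\xi=O(N_1\sqrt\beta)$ and $R^i_\xi=O(\sqrt\beta\,N_i\sqrt{\gamma_{N_{i-1}}/N_{i-1}})$, feed them through the grid computation of Lemma \ref{lem:upper_static_grid}, and sum over the $B$ batches; your check that $\mathcal{X}_i\subseteq\mathcal{X}_{i-1}$ puts every relevant perturbed point in $\mathcal{X}_{\xi,i-1}$ is a detail the paper leaves implicit. One small imprecision in your Mat\'ern remark: the polynomial factor $t^{d/(2\nu+d)}$ in \eqref{gamma_t_mat}, not only its log factor, is what turns $N_{i-1}^{-1/2}$ into $N_{i-1}^{-\eta}$ with $\eta=\frac{\nu}{2\nu+d}$.
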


\begin{proof}
    The proof of Lemma \ref{lem:upper_robust_fixed_b} combines the proof of Theorem \ref{thm:upper_robust} in Appendix \ref{app:proof_upper_robust} and the proof of Lemma \ref{lem:upper_static_grid} in Appendix \ref{app:proof_upper_fix_b}. Focusing on the SE kernel, following the former leads to
    $R_{\xi}^1 = O (N_1\sqrt{\beta})$ and
    $R_{\xi}^i = O\bigl(\sqrt{\beta} N_i(N_{i-1})^{-\frac{1}{2}}(\log N_{i-1})^{\frac{d+1}{2}}\bigr)$ for $2\leq i \leq B$, and following the latter yields 
    \begin{align}
        R_{\xi}^i = O\Bigl((\Lambda+\sqrt{\log B})T^{\frac{1-\eta}{1-\eta^B}}(\log T)^{\frac{(d+1)(\eta- \eta^B)}{1-\eta^B}}\Bigr)
    \end{align}
    for $i\in [B]$, which implies
    \begin{align}
        R_{\xi,T} = O\Bigl(B(\Lambda+\sqrt{\log B})T^{\frac{1-\eta}{1-\eta^B}}(\log T)^{\frac{(d+1)(\eta- \eta^B)}{1-\eta^B}}\Bigr).
    \end{align}
\end{proof}

We observe that Lemma \ref{lem:upper_robust_fixed_b} has the same order of regret as Lemma \ref{lem:upper_static_grid}.

\section{Experiments}
\label{app:experiments}
While our work mainly focuses on improving the theoretical guarantees, we also perform simple proof-of-concept experiments to compare the parameterized batch sizes \eqref{eq:grow_b_batch_sizes} used in Theorem \ref{thm:upper_grow_b} with the original batch sizes \eqref{eq:grow_b_batch_sizes_li} in \citep{Li22}. While the experiments are not intended to be detailed or comprehensive, they illustrate the qualitative behavior predicted by our theory. 
Our code is available at \url{https://anonymous.4open.science/r/Batched-Kernelized-Bandits-0B0D}.

We consider three kernel settings (all using $l= 0.5$): the SE kernel, Mat\'ern kernel ($\nu=1.5$), and Mat\'ern kernel ($\nu=2.5$). For each kernel setting, we instantiate BPE with 4 types of batch sizes, including \eqref{eq:grow_b_batch_sizes_li} (referred to as ``Orig''), and \eqref{eq:grow_b_batch_sizes} with three different values of $a$ (specified in the following tables). These algorithms are compared on synthetic 2D functions sampled from GPs with the same kernel and $l=2.0$, as shown in Figure \ref{fig:syn_f}.\footnote{We note that the GP model used by BPE uses a shorter length scale $l=0.5$. This biases the GP model toward local variations, which helps to find the maximum more efficiently.} The domain $\mathcal{X}$ is a set of $2500$ points on a regular grid in $[-5,5]^2$. We let $T = 1000$ and $\sigma = 0.02$. Following \citep{Li22} (and other prior works), we set $\beta = 2$ instead of its theoretical value. In the tables below, the numbers are cumulative regret averaged over 10 trials at certain time steps. For convenience we include the number of batches $B$ in the first column.

\begin{figure}[!t]
    \begin{center}
        \begin{subfigure}[t]{0.28\textwidth}
            \begin{center}
                \centerline{\includegraphics[width=1.1\linewidth]{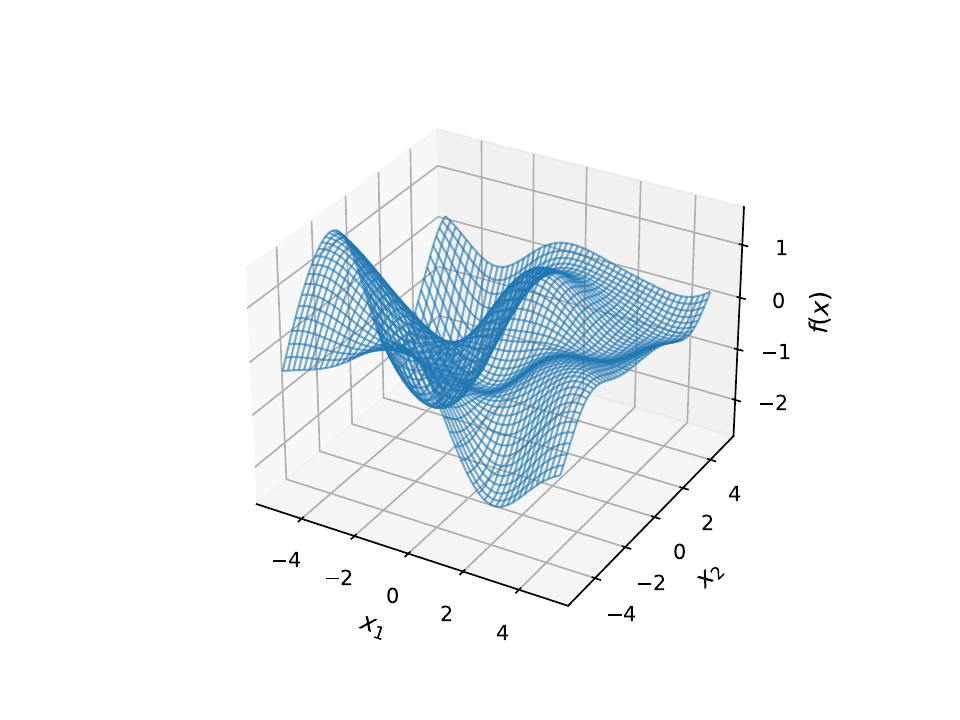}}
                \caption{SE kernel.}
                \label{fig:f_se}
            \end{center}
        \end{subfigure}
        \begin{subfigure}[t]{0.28\textwidth}
            \begin{center}
                \centerline{\includegraphics[width=1.1\linewidth]{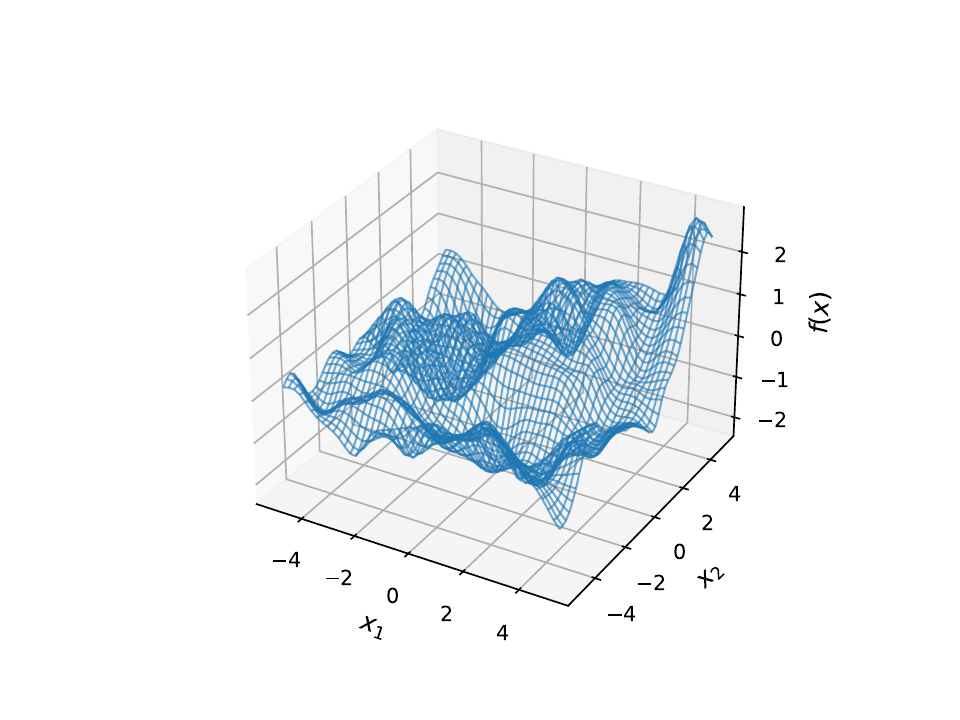}}
                \caption{Mat\'ern kernel ($\nu=1.5$).}
                \label{fig:f_mat_1.5}
            \end{center}
        \end{subfigure}
        \begin{subfigure}[t]{0.28\textwidth}
            \begin{center}
                \centerline{\includegraphics[width=1.1\linewidth]{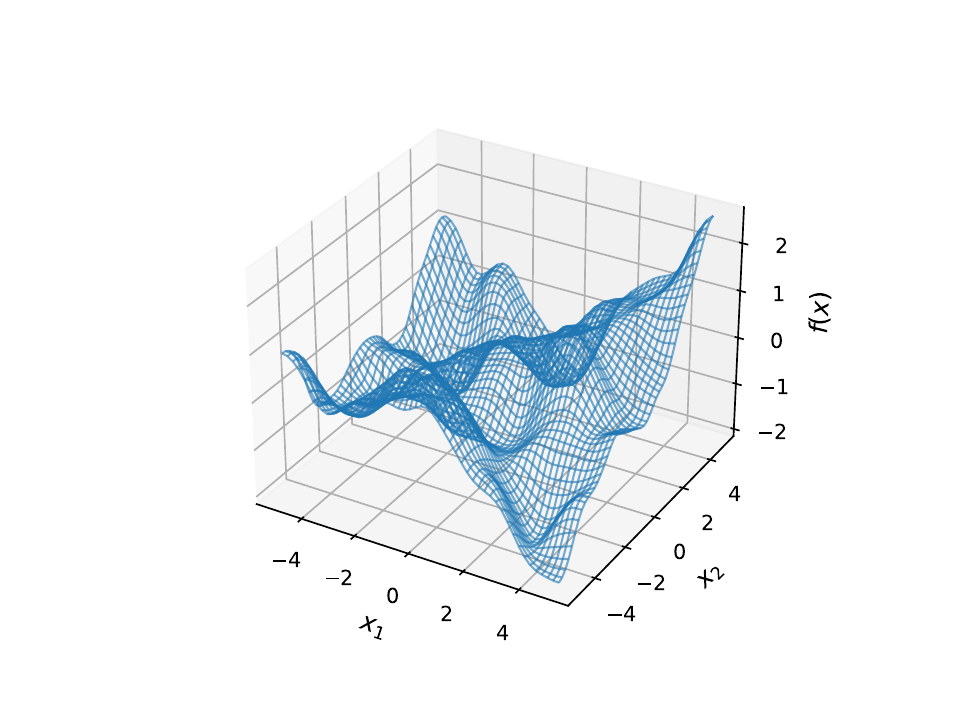}}
                \caption{Mat\'ern kernel ($\nu=2.5$).}
                \label{fig:f_mat_2.5}
            \end{center}
        \end{subfigure}
        \caption{Synthetic functions sampled using different kernels.}
        \label{fig:syn_f}
    \end{center}
\end{figure}

\begin{table}[ht]
\caption{Cumulative regret of BPE using different batch sizes and the SE kernel.} \label{tab:se}
\begin{center}
\begin{tabular}{llllll}
\toprule
Algorithm  & $T=200$ & $T=400$ & $T=600$ & $T=800$ & $T=1000$ \\
\midrule
BPE (Orig, $B=4$)       & 114.69 & 140.67 & 164.22 & 181.67 & 197.91 \\
BPE ($a=0.52$, $B=4$)   & 137.54 & 160.83 & 182.23 & 196.4  & 210.57 \\
BPE ($a=0.6$, $B=5$)    & 118.99 & 132.6  & 141.08 & 148.74 & 154.76 \\
BPE ($a=0.65$, $B=6$)   & 111.08 & 127.25 & 139.05 & 148.69 & 158.05 \\
\bottomrule
\end{tabular}
\end{center}
\end{table}

For the SE kernel, the $a$ with value $0.52$ is chosen close to its lower bound $0.5$ by Theorem \ref{thm:upper_grow_b}. As shown in Table \ref{tab:se}, this choice is comparable to “Orig”, while larger values improve regret at the expense of having more batches.

\begin{table}[h!]
\caption{Cumulative regret of BPE using different batch sizes and the Mat\'ern kernel ($\nu=1.5$).} \label{tab:mat_1.5}
\begin{center}
\begin{tabular}{llllll}
\toprule
Algorithm  & $T=200$ & $T=400$ & $T=600$ & $T=800$ & $T=1000$ \\
\midrule
BPE (Orig, $B=4$)     & 255.09 & 344.66 & 428.34 & 471.35 & 505.8  \\
BPE ($a=0.31$, $B=3$) & 367.95 & 471.04 & 574.1  & 630.37 & 677.65 \\
BPE ($a=0.4$, $B=3$)  & 254.45 & 344.24 & 384.13 & 424.07 & 464.1  \\
BPE ($a=0.5$, $B=4$)  & 254.31 & 334.58 & 409.87 & 443.63 & 469.56 \\
\bottomrule
\end{tabular}
\end{center}
\end{table}

For the Mat\'ern kernel ($\nu=1.5$), the two $a$ values of $0.31$ and $0.4$ are chosen closer to their lower bounds $0.3$ by Theorem \ref{thm:upper_grow_b}. As shown in Table \ref{tab:mat_1.5}, choosing $a=0.4$ leads to \emph{smaller regret compared to the original batch sizes, despite using fewer batches}, thus supporting our paper’s theoretical contribution of reducing the required $B$.

\begin{table}[h!]
\caption{Cumulative regret of BPE using different batch sizes and the Mat\'ern kernel ($\nu=2.5$).} \label{tab:mat_2.5}
\begin{center}
\begin{tabular}{llllll}
\toprule
Algorithm  & $T=200$ & $T=400$ & $T=600$ & $T=800$ & $T=1000$ \\
\midrule
BPE (Orig, $B=4$)     & 210.2  & 260.35 & 305.59 & 317.21 & 321.77 \\
BPE ($a=0.36$, $B=3$) & 226.77 & 246.05 & 256.91 & 260.48 & 264.06 \\
BPE ($a=0.4$, $B=3$)  & 192.91 & 218.55 & 220.44 & 222.34 & 224.23 \\
BPE ($a=0.5$, $B=3$)  & 222.34 & 261.57 & 294.47 & 304.41 & 310.01 \\
\bottomrule
\end{tabular}
\end{center}
\end{table}

For the Mat\'ern kernel ($\nu=2.5$), Table \ref{tab:mat_2.5} shows that choosing $a=0.4$ or $a=0.36$ leads to \emph{smaller regret compared to the original batch sizes, despite using fewer batches}, again supporting Theorem \ref{thm:upper_grow_b}.

\end{document}